\newcommand{\la}{\langle}
\newcommand{\ra}{\rangle}
\newcommand{\norm}[1]{\left\lVert#1\right\rVert}
\newcommand{\R}{\mathbb{R}}
\newcommand{\Lag}{\mathcal{L}}
\newcommand{\TLag}{\tilde{\mathcal{L}}}
\newcommand{\td}{\tilde{d}}
\newcommand{\tlambda}{\tilde{\lambda}}
\newcommand{\hdelta}{\hat{\delta}}
\newcommand{\hlambda}{\hat{\lambda}}
\newcommand{\hf}{\hat{f}}
\newcommand{\hx}{\hat{x}}
\newcommand{\T}{\mathcal{T}}
\newcommand{\Ord}{\mathcal{O}}
\newcommand{\Reg}{\mathcal{R}}
\newtheorem{assumption}{Assumption}
\newtheorem{lemma}{Lemma}
\newtheorem{theorem}{Theorem}
\newtheorem{corollary}{Corollary}
\newenvironment{proofsketch}{%
  \proof}{\endproof}
\begin{document}

%

%

\twocolumn[

\runningtitle{Achieving Zero Constraint Violation in Online Learning with Slowly Changing Constraints}

\aistatstitle{Safety in the Face of Adversity: Achieving Zero Constraint Violation in Online Learning with Slowly Changing Constraints}

\aistatsauthor{ Bassel Hamoud \And Ilnura Usmanova \And Kfir Y. Levy }

\aistatsaddress{ ECE Department\\Technion\\Bassel164@campus.technion.ac.il \And SDSC hub\\Paul Scherrer Institute\\ilnura.usmanova@psi.ch \And ECE Department\\Technion\\kfirylevy@technion.ac.il } ]

\begin{abstract}
    We present the first theoretical guarantees for zero constraint violation in Online Convex Optimization (OCO) across all rounds, addressing dynamic constraint changes. Unlike existing approaches in constrained OCO, which allow for occasional safety breaches, we provide the first approach for maintaining strict safety under the assumption of gradually evolving constraints, namely the constraints change at most by a small amount between consecutive rounds. This is achieved through a primal-dual approach and Online Gradient Ascent in the dual space.
    We show that employing a dichotomous learning rate enables ensuring 
    both safety, via zero constraint violation, and sublinear regret.
    Our framework marks a departure from previous work by providing the first provable guarantees for maintaining absolute safety in the face of changing constraints in OCO. 
\end{abstract}
\section{INTRODUCTION}\label{sec:intro}
Online Learning and specifically Online Convex Optimization (OCO) is a fundamental framework towards prediction and sequential decision  making \citep{hazan2023introduction, cesa_bianchi2006}, that has gained increasing popularity due to its ability to capture  non-stationary and even adversarially changing environments. The latter is invaluable in applications in which data evolve over time, such as financial markets, recommender systems, and network security.
Work on OCO typically falls into two categories: work on static regret which compare to some fixed benchmark \citep{zinkevich,Hazan2007_strcvx}, and work on dynamic regret which compare to a changing benchmark and typically obtain bounds that depend on the horizon and problem-dependent quantities, such as the path length of the benchmark or the total variation of the loss functions \citep{zinkevich,Besbes_2015,jadbabaie15_VT,Mokhtari16}. Additionally, there is a separate body of work on \emph{constrained} OCO \citep{Mannor2009,mahdavi_2012,Cao2019,Liu2022,Chen2017}, which strive to balance performance, in term of the regret, and violation of the constraints.

In many such complex scenarios, particularly in real-world applications, performance is not the only consideration, and other aspects like \emph{safety} become paramount. Safety in Machine Learning (ML) is often encapsulated through the concept of safety constraints; that is, rules that the learning process must adhere to in order to avoid undesirable or dangerous outcomes \citep{pmlr-v37-sui15, berkenkamp2020bayesian}. 
Namely, in applications like autonomous driving, where the environment typically changes in a mostly continuous manner, maintaining safety at all times is crucial - never running a red light or endangering pedestrians and ensuring a safe distance between nearby cars.
Yet, traditional approaches to enforcing these constraints often assume a static or stochastic environment, but static throughout time, which leaves a significant gap in our defenses against the unpredictable nature of the real-world. Previous work on constrained OCO has primarily achieved only sublinear bounds on constraint violation, often showcasing a trade-off between regret and constraint violation. Such trade-offs often imply linear regret for small enough (yet larger than zero) violation, which remains unacceptable in safety-critical contexts.

Recognizing this critical vulnerability, our work sets to explore \emph{safe} online learning with \emph{online} constraints. 
Motivated by real-world applications like autonomous cars, where the environment changes dynamically,
our goal is to answer the following theoretical question:
\begin{center}
    \emph{Is it possible to guarantee safety, with zero constraint violation, while maintaining sublinear regret in dynamic environments?}
\end{center}
This shift in focus addresses a significant limitation in existing research on online learning with online constraints, which predominantly concentrates on ensuring sublinear hard constraints violation \citep{Guo_2022} or sublinear long-term constraints violation \citep{Yu_and_Neely_2019}.
Both imply that in individual rounds safety may be violated. 

In our work, we address the challenge of ensuring safety in OCO with dynamically evolving constraints. Recognizing the limitation of existing models in handling non-stationary conditions without compromising safety, we introduce an assumption central to our approach: constraints change gradually over time. This assumption is suitable for dynamic environments and is vital for our model's feasibility, as abrupt changes would render it impossible to establish any meaningful bounds on performance and safety.

Our contribution lies in developing a framework that guarantees zero constraint violation across all learning rounds while ensuring sublinear regret, under the premise of slowly changing constraints. Our work is the first  to provide provable guarantees for safe OCO in such a dynamic setting. By analyzing the learning process against a dynamic comparator sequence, we aim to minimize regret while ensuring each decision made satisfies the evolving constraints, thus maintaining safety in every step. Our emphasis on slow constraint evolution and its capacity to ensure safety sets a new benchmark for research in the domain of safe online learning.
Specifically, 
if changes in  constraint values are restricted by $\delta$ at every time step uniformly over all decision rounds, we show that
one can achieve zero constraint violation and a dynamic regret bound of the form
$\Ord(\sqrt{(V_{g,T}+V_{f,T})T})$ in the strongly convex setting
where $V_{f,T}$ and $V_{g,T}$ are the total variation of the loss functions and the constraints, respectively, over the horizon $T$. 
This is reminiscent of the $\Ord(\sqrt{V_{f,T}T})$ regret bound derived in \cite{Besbes_2015} for the \emph{fixed} constraint setting (with noisy feedback).
Moreover, we generalize our results later in the paper and extend our approach to the convex case, and we show that safety and sublinear dynamic regret can be guaranteed in this setting as well.

On the technical level, 
we show this by devising a novel generalization of the well-known primal-dual approach to the safe OCO scenario.
Much of our analysis is done in the dual space, where we adopt an Online Gradient Ascent (OGA) approach towards choosing the dual variables. Through duality, we are able to analyze both safety and performance: \textbf{(i)} we show that safety can be related to the (online) dual functions, albeit requiring a nonstandard dichotomous learning rate for OGA;
\textbf{(ii)}  we show that the standard dynamic regret of the \emph{loss functions} can be bounded by the dynamic regret of the \emph{dual functions}. Thus, bounding the latter directly translates into guarantees on performance.
 
\paragraph{Related Work.}
Previous work on constrained OCO can be broadly characterized by three key properties, which influence the difficulty of the addressed setting.
(1) \emph{changing vs. fixed constraints}: whether the constraints vary between rounds or are fixed in advance.
(2) \emph{static vs. dynamic regret}: whether the performance is compared to a fixed or a changing comparator.
(3) \emph{hard vs. long-term constraints}: whether strictly feasible decisions compensate for violations.

\cite{mahdavi_2012} studied constrained OCO with \emph{fixed, long-term} constraints and \emph{static} regret and showed a $\Ord(T^{3/4})$ bound on the \emph{long-term} violation and a $\Ord(\sqrt{T})$ bound on the regret.
\cite{jenatton_2016} later examined the same setting and generalized these bounds to $\Ord(T^{1-\beta/2})$ for the long-term violation and $\Ord(T^{\max\{\beta,1-\beta\}})$ for the regret, where $\beta\in(0,1)$ controls the violation-regret trade-off.
\cite{Yu_and_Neely_2019} later improved these violation bounds to $\Ord(T^{1/4})$ while maintaining $\Ord(\sqrt{T})$ regret, and additionally achieved $\Ord(1)$ \emph{long-term} constraint violation under specific additional assumptions.
Particularly, all these works consider fixed,
long-term constraints and static regret. That is, they allow violations to be compensated by strictly feasible decisions and focus solely on bounding the long-term violation. Consequently, these methods do \emph{not} guarantee safety.

In another work, \cite{Yu_and_Neely_2017} established $\Ord(\sqrt{T})$ average violation and static regret for long-term, time-varying constraints, assuming a common feasible set for all constraints. \cite{Cao2019} considered long-term and time-varying constraints, but showed $O(\sqrt{P_T T})$ dynamic regret and $\Ord(T^\frac{3}{4}P_T^\frac{1}{4})$ long-term violation bounds, where $P_T$ is the path length of the dynamic comparator. Similarly to other works, the focus on long-term violation prevents these methods from guaranteeing safety.

\cite{Yuan_Lamperski_2018} studied fixed but \emph{hard} constraints and \emph{static} regret, and showed $\Ord(\sqrt{T\log(T)})$ violation and $\Ord(\log(T))$ regret in the strongly convex setting.
Later, \cite{Yi_Johansson_2021} considered a similar setting, although with \emph{dynamic} regret, and achieved $\Ord(\sqrt{T})$ violation and $\Ord(\sqrt{T(1+P_T)})$ regret, where $P_T$ is the path length of the dynamic comparator.
\cite{Guo_2022} addressed the hardest setting among these works, specifically \emph{changing, hard} constraints and \emph{dynamic} regret, and established $\Ord(\sqrt{T\log(T)})$ violation and $\Ord(P_T\sqrt{T})$ regret in the strongly convex setting.
\cite{Kolev} devised a velocity projection method that guarantees $\Ord(\sqrt{T})$ \emph{static} regret and a maximum violation of $\Ord(1/\sqrt{t})$ per round, assuming constraints change by at most $\Ord(1/t)$ between rounds, along with additional assumptions on the feasible set.
Notably, in these works, constraints may still be violated despite disallowing compensation through strictly feasible decisions. Therefore, these methods also cannot guarantee safety.

Concurrent to our work, \cite{Hutchinson_safety_2024} demonstrated that $\Ord(\sqrt{T(P_T+1}))$ regret and zero constraint violation can be guaranteed under \emph{strongly} convex and \emph{monotone} constraints, i.e., when the feasible sets satisfy $\mathcal{X}_1 \subseteq \mathcal{X}_2 \subseteq...\subseteq \mathcal{X}_T$. In contrast, our work assumes only convex constraints that change gradually, without requiring strong convexity.
In particular, monotone constraints significantly simplify safety enforcement, as a feasible decision in any round remains feasible in all subsequent rounds. Conversely, in our setting, feasible decisions in one round may become infeasible in the next round, inducing a more complex scenario in which the chosen action must continuously adapt to ensure safety.

In stark contrast to prior work on constrained OCO, 
we provide the first theoretical guarantees on both \emph{zero} constraint violation, ensuring safety, and sublinear dynamic regret.
Moreover, we address the most difficult setting, with changing hard constraints and dynamic regret, assuming slowly evolving constraints.

\section{PROBLEM STATEMENT}
We consider the task of safe online optimization with a slowly changing constraint and horizon $T$. That is, in each iteration $t\in[T]$ the learner chooses an action $x_t\in\mathcal{X}$, where $\mathcal{X} \subset \R^D$ is a convex and bounded action set. Then, \emph{after} $x_t$ is chosen, the loss function $f_t:\mathcal{X} \rightarrow \R$ and constraint $g_t:\mathcal{X} \rightarrow \R$ are revealed, and the learner suffers the corresponding loss $f_t(x_t)$ and violation $g_t(x_t)$. We measure the performance of the learner in terms of the dynamic regret:
\begin{equation*}\label{eq:orignal_online_problem}
    \Reg_f(T) = \sum_{t=1}^T{f_t(x_t) - f_t(x_t^*)},
    \tag{P}
\end{equation*}
where the dynamic comparator sequence is defined as follows, for any $t\in[T]$:
\begin{equation}\label{eq:primal_baseline}
    x_t^* = \arg\min_{x\in\mathcal{X}} f_t(x) \quad \text{s.t.} \quad g_t(x) \leq 0.
\end{equation}
Our goal is to achieve sublinear regret while satisfying the constraints in every step, i.e., keeping the constraint violation identically \emph{zero}, or equivalently $g_t(x_t) \leq 0, \forall t\in[T]$. Note that $x_t^*$ is the best possible action at step $t$ as it attains the smallest loss subject to the constraint. Although a dynamic comparator sequence is more challenging, it is necessary since the comparator must satisfy the \emph{changing} constraints in every step. This is impossible to demand from a static comparator without additional exorbitant assumptions.
Moreover, note that since $x_t^*$ is the optimal comparator sequence, it is, by definition, the "most challenging" comparator sequence, in the sense that guaranteeing sublinear regret w.r.t $\{x_t^*\}_{t=1}^T$ ensures the same guarantees for \emph{any} comparator sequence $\{u_t\}_{t=1}^T$. That is because for any $\{u_t\}_{t=1}^T$ such that $g_t(u_t)\leq0,\forall t\in[T]$, we have: $\sum_{t=1}^T f_t(x_t) - f_t(u_t) \leq \sum_{t=1}^T f_t(x_t) - f_t(x_t^*)$. Thus, obtaining sublinear regret w.r.t $x_t^*$ is a stronger result.

\paragraph{Notation and Definitions.}
We denote the feasible set defined by the constraint $g_t(x)$ by $\mathcal{X}_t$, namely $\mathcal{X}_t := \{x\in\mathcal{X} : g_t(x) \leq 0\}$, and its interior by $\text{Int}(\mathcal{X}_t)$.
Additionally, we denote the $\ell_2$-norm by $\norm{\cdot}$, define $[T] := \{1,2,...,T\}$, and use the "little o" notation as follows: $f(x) = o(g(x))$ if $\lim_{x\rightarrow\infty} \frac{f(x)}{g(x)} \rightarrow 0$.
A function $f:\mathcal{X} \rightarrow \R$ is $\mu$-strongly convex if $\forall x,y\in\mathcal{X}$: 
\begin{equation*}
    f(y) \geq f(x) + \langle \nabla f(x), y-x \rangle + \frac{\mu}{2}\norm{y-x}^2,
\end{equation*}
it is $M$-smooth if $\forall x,y\in\mathcal{X}$: 
\begin{equation*}
    f(y) \leq f(x) + \langle \nabla f(x), y-x \rangle + \frac{M}{2}\norm{y-x}^2,
\end{equation*}
and it is $L$-Lipschitz continuous if $\forall x,y\in\mathcal{X}$: \begin{equation*}
    |f(y) - f(x)| \leq L\norm{y-x}.
\end{equation*}

Constrained optimization problems of the form $\min_{x\in\mathcal{X}} f(x)\; \text{s.t.}\; g(x)\leq 0$, can be written as $\min_{x\in \mathcal X} \max_{\lambda\geq 0} \Lag (x,\lambda), $
where $\Lag (x,\lambda) := f(x) + \lambda g(x)$ is the Lagrangian and $\lambda \geq 0$ is the dual variable. 
The corresponding dual function is defined by $d(\lambda) := \min_{x\in\mathcal{X}} \Lag(x,\lambda)$ and its optimal dual variable by $\lambda^* := \arg\max_{\lambda \geq 0} d(\lambda)$. Thus, for a problem at time step $t$ given by
$\min_{x\in\mathcal{X}} f_t(x)\; \text{s.t.}\; g_t(x)\leq 0,$
we denote the corresponding Lagrangian, dual function, and dual optimum by $\Lag_t (x,\lambda)$, $d_t(\lambda)$, and $\lambda_t^*$, respectively.  Similarly, we define the danger-aware optimization problem with a shrunk constraint as follows 
$\min_{x\in\mathcal{X}} f_t(x) \;\text{s.t.}\; g_t(x)+\delta \leq 0.$
We denote the corresponding Lagrangian, dual function, and dual optimum by $\TLag_t(x,\lambda)$, $\td_t(\lambda)$, and $\tlambda_t^*$, respectively.
Finally, we denote the optimal value of $\Lag_t$ and $\TLag_t$ over $x$ for a specific $\lambda\geq0$ by:
\begin{equation}\label{eq:x_t_lambda_opt}
    x_{t,\lambda}^* = \arg\min_{x\in\mathcal{X}} \Lag_t(x,\lambda) = \arg\min_{x\in\mathcal{X}} \TLag_t(x,\lambda).
\end{equation}

\paragraph{Assumptions.}
We make the following assumptions throughout the paper:
\begin{assumption}\label{assum:bounded_set}
    The action set $\mathcal{X}$ is simple (e.g., a $d$-dimensional Euclidean ball), convex, and bounded: $\exists R>0: \forall x,y\in\mathcal{X}, \norm{x-y} \leq R$, and the feasible sets $\mathcal{X}_t$ are convex and contained in $\mathcal{X}$: $\mathcal{X}_t \subset \mathcal{X}, \forall t\in[T]$.
\end{assumption}
\begin{assumption}\label{assum:obj_smooth_strconv_lipsc}
    The loss functions $f_t(x), \forall t \in [T]$, are $\mu$-strongly convex, $M_f$-smooth, and $L_f$-Lipschitz continuous over $\mathcal{X}$ w.r.t the $\ell_2$-norm.
\end{assumption}
\begin{assumption}\label{assum:constr_conv_lipsc}
    The constraints $g_t(x)$, $\forall t\in[T]$, are convex, $M_g$-smooth, and $L_g$-Lipschitz continuous over $\mathcal{X}$ w.r.t. the $\ell_2$-norm.
\end{assumption}

\begin{assumption}\label{assum:slowly_changing_constr}
    The constraints $g_t(x)$ change $\delta$-slowly between consecutive time steps, with $\delta \geq 0$: $\forall t\in\{2,3,...,T\}: \max_{x\in\mathcal{X}}|g_t(x) - g_{t-1}(x)| \leq \delta$.
\end{assumption}
We allow $\delta$ to depend on the horizon $T$. Without this assumption, a large abrupt change in the constraints may make safety impossible to guarantee.
This assumption implies that the total variation of the constraints $\sum_{t=2}^T \max_{x\in\mathcal{X}}|g_t(x)-g_{t-1}(x)|$ is bounded by $V_{g,T}=\delta T$. In this paper, we show that sublinear regret necessitates $V_{g,T} = o(T)$, which is implied by $\delta = o(T^{-\alpha})$, with $\alpha>0$.
Similar settings have been considered in previous work, e.g., \cite{Kolev} assumes $\|g_t-g_{t-1}\|_{\infty} = \Ord(1/t)$. Ours is a similar but more general assumption.

\begin{assumption}\label{assum:bounded_TV_obj}
    The loss functions $f_t(x)$ have bounded total variation $V_{f,T}$ of the following form:\\ $\sum_{t=2}^{T}{\max_{x\in\mathcal{X}} |f_t(x) - f_{t-1}(x)}| \leq V_{f,T}$.
\end{assumption}
Here, we allow $V_{f,T}$ to depend on $T$, and specifically require $V_{f,T}=o(T)$ as in \citet{Besbes_2015, Besbes_bandits_2014, jadbabaie15_VT}.
Moreover, inspired by these works, this assumption on $V_{f,T}$ makes it possible to obtain sublinear regret w.r.t the \emph{best} possible comparator $x_t^*$, defined in Eq.~(\ref{eq:primal_baseline}), which directly implies sublinear regret w.r.t \emph{any} comparator sequence.

\begin{assumption}\label{assum:non_shallow_constr_and_strong_duality}
    There exists a positive constant $G$ such that $\forall t\in[T]$, $\exists x_t^0 \in \mathcal{X}_t : g_t(x_t^0) \leq -G$.
\end{assumption}
This assumption implies that the constraints are not "too shallow". It also implies Slater's condition, and thus, since the optimization problem is convex, strong duality holds for any $t\in[T]$.
\begin{assumption}\label{assum:safe_starting_point}
    There exists a known safe starting point $x_1\in\mathcal{X}$ such that $g_1(x_1) \leq 0$.
\end{assumption}
Without this assumption, since the constraints are unknown a priori, safety would be impossible to guarantee since even the first point might not be safe. This is a standard assumption in safe optimization literature \citep{
berkenkamp2020bayesian,
usmanova2023log}.

\paragraph{Preliminaries.}
We show two helpful lemmas which prove useful throughout the paper. Please refer to Appendix 
\ref{appendix:universal_bound_lambda} and \ref{appendix:value_of_mu_d} for the proofs.
\begin{lemma}\label{lemma:universal_bound_dual_baseline}
    Under Assumptions \ref{assum:bounded_set}-\ref{assum:obj_smooth_strconv_lipsc} and \ref{assum:non_shallow_constr_and_strong_duality}, the optimal dual values $\lambda_t^* = \arg\max_{\lambda \geq 0} d_t(\lambda)$ and $\tlambda_t^* = \arg\max_{\lambda \geq 0} \td_t(\lambda)$ are bounded by $\hat{\lambda} = \frac{L_fR}{G}$, $\forall t\in[T]$, namely, $\lambda_t^* \leq \hat{\lambda}$ and $\tlambda_t^* \leq \hat{\lambda}$, $\forall t\in[T]$.
\end{lemma}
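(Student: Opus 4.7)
The plan is to use the standard Slater-based bound on the optimal dual variable, invoking strong duality (which Assumption~\ref{assum:non_shallow_constr_and_strong_duality} secures) together with Lipschitz continuity of the loss and the bounded diameter of $\mathcal{X}$. I would handle $\lambda_t^*$ first and then adapt the same argument to $\tilde{\lambda}_t^*$.

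\textbf{Step 1 (bounding $\lambda_t^*$).} By Assumption~\ref{assum:non_shallow_constr_and_strong_duality}, Slater's condition holds, so strong duality gives $d_t(\lambda_t^*) = f_t(x_t^*)$, where $x_t^*$ is the primal optimum defined in~\eqref{eq:primal_baseline}. On the other hand, by definition of $d_t$ and of the Slater point $x_t^0$, we have
\begin{equation*}
    d_t(\lambda_t^*) = \min_{x\in\mathcal{X}}\{f_t(x) + \lambda_t^* g_t(x)\} \leq f_t(x_t^0) + \lambda_t^* g_t(x_t^0) \leq f_t(x_t^0) - \lambda_t^* G.
\end{equation*}
Combining the two displays yields $\lambda_t^* G \leq f_t(x_t^0) - f_t(x_t^*)$. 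Then I would invoke the $L_f$-Lipschitz continuity of $f_t$ (Assumption~\ref{assum:obj_smooth_strconv_lipsc}) together with the bounded diameter (Assumption~\ref{assum:bounded_set}) to get $f_t(x_t^0) - f_t(x_t^*) \leq L_f\|x_t^0 - x_t^*\| \leq L_f R$, and hence $\lambda_t^* \leq L_f R / G = \hat{\lambda}$.

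\textbf{Step 2 (bounding $\tilde{\lambda}_t^*$).} The same chain of reasoning applies to the shrunk problem $\min_{x\in\mathcal{X}} f_t(x)$ s.t.\ $g_t(x) + \delta \leq 0$, whose Lagrangian is $\TLag_t(x,\lambda) = f_t(x) + \lambda(g_t(x) + \delta)$. The Slater point $x_t^0$ remains strictly feasible for the shrunk problem provided $\delta < G$, since $g_t(x_t^0) + \delta \leq -(G - \delta) < 0$, so strong duality again yields $\td_t(\tlambda_t^*) = f_t(\tilde{x}_t^*)$, where $\tilde{x}_t^*$ is the primal optimum of the shrunk problem. Bounding $\td_t(\tlambda_t^*) \leq \TLag_t(x_t^0,\tlambda_t^*) \leq f_t(x_t^0) - \tlambda_t^*(G-\delta)$ and using $f_t(x_t^0) - f_t(\tilde{x}_t^*) \leq L_f R$ as before gives $\tlambda_t^* \leq L_f R / (G - \delta)$, which matches $\hat{\lambda} = L_f R / G$ in the regime of interest (i.e., $\delta \ll G$, which is implicit in the slow-change assumption); if a sharper statement is needed one can simply redefine the Slater slack to absorb $\delta$.

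\textbf{Main obstacle.} The only nontrivial subtlety is handling the $\delta$-shrinkage in Step~2 cleanly: one must verify that the Slater point of the original problem remains a Slater point of the shrunk problem, and track whether the bound on $\tlambda_t^*$ is stated as $L_f R/G$ or $L_f R/(G-\delta)$. Otherwise, the argument is a direct Slater-based calculation, and no further machinery beyond Assumptions~\ref{assum:bounded_set}, \ref{assum:obj_smooth_strconv_lipsc}, and~\ref{assum:non_shallow_constr_and_strong_duality} is required.
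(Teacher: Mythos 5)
Your proof is correct and takes essentially the same route as the paper: the standard Slater-point bound combining strong duality with $g_t(x_t^0)\leq -G$, the $L_f$-Lipschitzness of $f_t$, and the diameter bound $R$ (the paper phrases it via the saddle-point inequality $\Lag_t(x_t^0,\lambda_t^*)\geq\Lag_t(x_t^*,\lambda_t^*)$ plus complementary slackness, while you use $d_t(\lambda_t^*)=f_t(x_t^*)$ directly, which is the same calculation). The $G-\delta$ subtlety you flag in Step 2 is genuine but is glossed over by the paper as well, which simply substitutes $g_t \leftarrow g_t+\delta$ and asserts the same bound $\hat{\lambda}=L_fR/G$; your explicit bound $L_fR/(G-\delta)$, negligible in the regime $\delta=o(T^{-\alpha})$, is if anything the more careful statement.
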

Note that since the dual functions $d_t(\lambda)$ and $\td_t(\lambda)$ are one-dimensional and concave $\forall t\in[T]$, their gradients $\nabla d_t(\lambda)$ and $\nabla \td_t(\lambda)$, respectively, are monotonically non-increasing. 
\begin{lemma}\label{lemma:local_str_conc_of_dual}
    Under Assumptions \ref{assum:bounded_set}-\ref{assum:constr_conv_lipsc}, \ref{assum:non_shallow_constr_and_strong_duality},
    $d_t(\lambda)$ and $\td_t(\lambda)$ are locally $\mu_d$-strongly concave, $\forall t\in[T]$, with $\mu_d = \frac{G^2}{4R^2(M_f + \hat{\lambda}M_g)}$,
    $\forall \lambda: g_t(x_{t,\lambda}^*) \geq -G/2$.
\end{lemma}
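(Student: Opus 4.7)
The plan is to establish the second-order upper bound
\[
d_t(\lambda_2) \leq d_t(\lambda_1) + (\lambda_2-\lambda_1)\nabla d_t(\lambda_1) - \frac{\mu_d}{2}(\lambda_2-\lambda_1)^2
\]
for $\lambda_1$ satisfying the stated condition and $\lambda_2$ sufficiently close to $\lambda_1$. Two ingredients drive the argument: by Danskin's theorem (applicable thanks to strong convexity of $\mathcal{L}_t(\cdot,\lambda)$ in $x$), one has $\nabla d_t(\lambda) = g_t(x_{t,\lambda}^*)$ and the identity $\mathcal{L}_t(x_1, \lambda_2) = d_t(\lambda_1) + (\lambda_2-\lambda_1)\nabla d_t(\lambda_1)$ where $x_1 := x_{t,\lambda_1}^*$; and the strictly feasible point $x_0 \in \mathcal{X}$ with $g_t(x_0) \leq -G$ from Assumption \ref{assum:non_shallow_constr_and_strong_duality} provides a direction along which to probe curvature.

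The central step is to exhibit $y \in \mathcal{X}$ with $\mathcal{L}_t(y, \lambda_2) \leq \mathcal{L}_t(x_1, \lambda_2) - \frac{\mu_d}{2}(\lambda_2-\lambda_1)^2$; the conclusion then follows from $d_t(\lambda_2) \leq \mathcal{L}_t(y, \lambda_2)$ and the Danskin identity. I would take $y_\alpha := x_1 + \alpha(x_0 - x_1) \in \mathcal{X}$ for $\alpha \in [0,1]$ and expand $\mathcal{L}_t(y_\alpha, \lambda_2)$ via $M_f$- and $M_g$-smoothness. Decomposing $\nabla_x \mathcal{L}_t(x_1, \lambda_2) = \nabla_x \mathcal{L}_t(x_1, \lambda_1) + (\lambda_2-\lambda_1)\nabla g_t(x_1)$ and assuming $x_1 \in \text{int}(\mathcal X)$ makes the first summand vanish, while the key inequality
\[
\langle \nabla g_t(x_1), x_0 - x_1 \rangle \leq g_t(x_0) - g_t(x_1) \leq -G + G/2 = -G/2,
\]
obtained from convexity of $g_t$ together with $g_t(x_1) \geq -G/2$ and $g_t(x_0) \leq -G$, yields a linear-in-$\alpha$ negative contribution of magnitude $\alpha(\lambda_2-\lambda_1)G/2$. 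Bounding the smoothness term using $\|x_0 - x_1\| \leq R$ and $\lambda_2 \leq \hat\lambda$ (from Lemma \ref{lemma:universal_bound_dual_baseline}) produces a quadratic-in-$\alpha$ upper bound; optimizing $\alpha = \Theta(\lambda_2-\lambda_1) \in [0,1]$ then recovers precisely the decrease $-\frac{\mu_d}{2}(\lambda_2-\lambda_1)^2$ with $\mu_d = \frac{G^2}{4R^2(M_f+\hat\lambda M_g)}$. A symmetric run of the same construction from $\lambda_2$ covers $\lambda_2 < \lambda_1$, giving local strong concavity in a neighborhood where the region condition persists by continuity. For $\tilde d_t$ the argument carries over verbatim: $\nabla_x(g_t+\delta)=\nabla g_t$, the primal minimizer coincides with that of $\mathcal L_t$ by Eq.~(\ref{eq:x_t_lambda_opt}), and the region is defined on the common quantity $g_t(x_{t,\lambda}^*)$.

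The main obstacle I foresee is the interiority assumption on $x_1$: on the boundary of $\mathcal X$ the first-order condition weakens to the variational inequality $\langle \nabla_x \mathcal{L}_t(x_1, \lambda_1), x_0 - x_1 \rangle \geq 0$, which contributes a non-negative term that can swamp the desired linear-in-$(\lambda_2-\lambda_1)$ decrement. Handling this rigorously likely exploits the "simple" structure of $\mathcal{X}$ (e.g., a Euclidean ball) to reduce to a lower-dimensional interior problem on the active face of $\mathcal X$, or an implicit assumption that whenever $g_t(x_{t,\lambda}^*)\geq -G/2$ the effectively binding constraint is $g_t$ rather than $\partial \mathcal X$.
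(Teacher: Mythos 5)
Your proposal is correct in substance but takes a genuinely different route from the paper. The paper argues at second order: it invokes the closed-form dual Hessian $\nabla^2_\lambda d_t(\lambda) = -\nabla_x g_t(x^*_{t,\lambda})^\top\bigl(\nabla_x^2 f_t + \lambda\nabla_x^2 g_t\bigr)^{-1}\nabla_x g_t(x^*_{t,\lambda})$ (Bertsekas, Eq.~6.9), bounds the middle matrix by $M_f+\lambda M_g$, and lower-bounds $\|\nabla_x g_t(x^*_{t,\lambda})\|\ge G/(2R)$ by pairing the gradient with the direction toward the $-G$-deep point $x_t^0$ --- exactly the same geometric ingredient as your inequality $\langle\nabla g_t(x_1),x_0-x_1\rangle\le -G/2$. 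Your probe-point construction ($y_\alpha$ on the segment toward $x_0$, smoothness expansion, optimize $\alpha$) recovers the same constant $\mu_d=\frac{G^2}{4R^2(M_f+\hlambda M_g)}$ using only first-order information, which is arguably cleaner since the paper's smoothness assumption is stated as a quadratic upper bound rather than twice differentiability; the price is minor bookkeeping: the case $\lambda_2<\lambda_1$ (where the region condition at the new base point holds automatically because $g_t(x^*_{t,\lambda})$ is non-increasing in $\lambda$) and the requirement $\alpha^*\le 1$, harmless for a local statement. The interiority issue you flag is not a gap of your route relative to the paper's: the cited Hessian formula is itself derived from the stationarity condition $\nabla_x\Lag_t(x^*_{t,\lambda},\lambda)=0$ via the implicit function theorem, so the paper tacitly makes the same assumption that $x^*_{t,\lambda}$ is not pinned by the boundary of $\mathcal{X}$ (a boundary-pinned minimizer can make $d_t$ locally affine, so some such condition is genuinely needed). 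Likewise, your use of $\lambda_2\le\hlambda$ mirrors the paper's own silent replacement of $\lambda$ by $\hlambda$ in the final constant, even though Lemma~\ref{lemma:universal_bound_dual_baseline} only bounds the optimal dual values; this step is equally informal in both arguments.
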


\section{OUR APPROACH}
\subsection{Warm Up}
As a warm up, and to provide initial intuition, let us assume access to a strong optimization oracle that solves constrained optimization problems, as follows:
\begin{equation*}
    O_{S}(f,g) = \arg\min_{x\in\mathcal{X}} f(x) \quad\text{s.t.}\quad g(x) \leq 0.
\end{equation*}
Moreover, we assume that $O_S$ returns the primal-dual solution, $(x^*,\lambda^*)$. Note that while such an oracle is impractical to use, it helps to provide valuable intuition about the nature of our problem, which will be helpful later.
To this end, we introduce Alg.~\ref{alg:naive} as the na\"ive approach for the safe online problem (\ref{eq:orignal_online_problem}). Recall that $f_t$ and $g_t$ are \emph{unknown} prior to choosing $x_t$, and that, by Assumption \ref{assum:safe_starting_point}, there exists a known safe starting point $x_1$ such that $g_1(x_1) \leq 0$.
\begin{algorithm2e}\label{alg:naive}
  \DontPrintSemicolon
  \KwData{Horizon $T$}
  \textbf{Initialization:} safe starting point $x_1$\;
  \For {$t=2,3,...,T+1$}{
        Play: $x_{t-1}$\;
        Suffer: $f_{t-1}(x_{t-1}), g_{t-1}(x_{t-1})$\;
        Update: $x_t \gets O_{S}(f_{t-1}, g_{t-1}+\delta)$
    }
  \caption{Safe Na\"ive Algorithm}
\end{algorithm2e}
\begin{theorem}\label{theorem:guarantees_naive_alg}
    Consider a safe online optimization problem with horizon $T$ of the form (\ref{eq:orignal_online_problem}). Under Assumptions~\ref{assum:bounded_set}-\ref{assum:safe_starting_point},
    Alg.~\ref{alg:naive} guarantees zero constraint violation and the following sublinear dynamic regret w.r.t the comparator sequence defined in Eq.~(\ref{eq:primal_baseline}):
    \begin{equation*}
       \Reg_f(T) = \Ord\left(\sqrt{(V_{f,T} + V_{g,T} )T}\right). 
    \end{equation*}
    Please refer to Appendix.~\ref{appendix:full_proof_naive_alg_guarantees} for the proof.
\end{theorem}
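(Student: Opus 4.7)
Denote by $\tilde{x}_t^*$ the primal optimizer of the shrunk problem $\min_{x\in\mathcal{X}} f_t(x)$ s.t.\ $g_t(x)+\delta\leq 0$, so that the algorithm plays $x_t = \tilde{x}_{t-1}^*$ for $t\geq 2$. I would treat safety and regret separately, exploiting that the oracle $O_S$ returns both the primal and the dual optimizer.

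\emph{Safety} will follow by a one-step induction. The base case $t=1$ is Assumption~\ref{assum:safe_starting_point}. For $t\geq 2$, the oracle output satisfies $g_{t-1}(x_t) \leq -\delta$ by construction, so Assumption~\ref{assum:slowly_changing_constr} gives $g_t(x_t) \leq g_{t-1}(x_t) + \delta \leq 0$: the $\delta$-margin precisely absorbs the worst-case one-step drift.

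\emph{Regret.} I would decompose the per-round suboptimality into a ``tracking'' and a ``shrinking'' part,
\begin{equation*}
    f_t(x_t) - f_t(x_t^*) = \bigl[f_t(\tilde{x}_{t-1}^*) - f_t(\tilde{x}_t^*)\bigr] + \bigl[f_t(\tilde{x}_t^*) - f_t(x_t^*)\bigr].
\end{equation*}
The shrinking part is handled by strong duality: $f_t(\tilde{x}_t^*) = \td_t(\tlambda_t^*) = d_t(\tlambda_t^*) + \tlambda_t^*\delta \leq f_t(x_t^*) + \hlambda\delta$, using $d_t(\tlambda_t^*) \leq \max_\lambda d_t(\lambda) = f_t(x_t^*)$ and the uniform dual bound from Lemma~\ref{lemma:universal_bound_dual_baseline}. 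For the tracking part I would insert $\pm f_{t-1}(\tilde{x}_{t-1}^*)$, which produces a first bracket bounded by $\max_{x\in\mathcal{X}}|f_t(x)-f_{t-1}(x)|$ and a second bracket $f_{t-1}(\tilde{x}_{t-1}^*) - f_t(\tilde{x}_t^*)$ to be controlled via the dual. The pointwise inequalities $f_t \geq f_{t-1} - \max_x|f_t-f_{t-1}|$ and $g_t \geq g_{t-1}-\delta$ (Assumption~\ref{assum:slowly_changing_constr}) imply $d_t(\lambda) \geq d_{t-1}(\lambda) - \max_x|f_t-f_{t-1}| - \lambda\delta$ for every $\lambda\geq 0$; evaluating at $\lambda = \tlambda_{t-1}^*$, adding $\tlambda_{t-1}^*\delta$ to both sides, and combining with $f_t(\tilde{x}_t^*) \geq \td_t(\tlambda_{t-1}^*)$ and the strong-duality identity $\td_{t-1}(\tlambda_{t-1}^*) = f_{t-1}(\tilde{x}_{t-1}^*)$ bounds this second bracket by $\hlambda\delta + \max_x|f_t-f_{t-1}|$.

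Summing the resulting per-round bound $\Ord(\delta + \max_x|f_t-f_{t-1}|)$ over $t\in[T]$ yields $\Reg_f(T) = \Ord(V_{f,T} + \hlambda\delta T) = \Ord(V_{f,T}+V_{g,T})$, which in the regime $V_{f,T}, V_{g,T} = o(T)$ immediately implies the claimed $\Ord(\sqrt{(V_{f,T}+V_{g,T})T})$ bound. The main technical hurdle I anticipate is the cross-round dual comparison yielding $f_{t-1}(\tilde{x}_{t-1}^*) - f_t(\tilde{x}_t^*) \leq \hlambda\delta + \max_x|f_t-f_{t-1}|$; everything else is bookkeeping, enabled crucially by the uniform dual bound from Lemma~\ref{lemma:universal_bound_dual_baseline}.
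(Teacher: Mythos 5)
Your proof is correct, and it takes a genuinely different route from the paper's. The safety argument is identical (the $\delta$-margin absorbs the one-step drift), but for the regret the paper works in iterate space: it bounds $f_t(x_t)-f_t(x_t^*)\leq L_f\norm{x_t-x_t^*}$ and then uses $\mu$-strong convexity of the two Lagrangians $\TLag_{t-1}(\cdot,\tlambda_{t-1}^*)$ and $\Lag_t(\cdot,\lambda_t^*)$ together with complementary slackness to show $\norm{x_t-x_t^*}\leq\sqrt{\tfrac{2}{\mu}(\max_x|f_t-f_{t-1}|+\hlambda\delta)}$, after which Jensen's inequality over $T$ rounds produces the $\Ord(\sqrt{(V_{f,T}+V_{g,T})T})$ rate. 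You instead stay entirely in value space: weak/strong duality plus the elementary perturbation bound $d_t(\lambda)\geq d_{t-1}(\lambda)-\max_x|f_t-f_{t-1}|-\lambda\delta$ and the uniform dual bound $\hlambda$ give a per-round bound of $\Ord(\max_x|f_t-f_{t-1}|+\hlambda\delta)$, hence $\Reg_f(T)=\Ord(V_{f,T}+\hlambda V_{g,T})$, which indeed implies the stated rate whenever $V_{f,T}+V_{g,T}=\Ord(T)$. Your bound is actually sharper (linear in total variation rather than $\sqrt{VT}$) and, notably, never uses strong convexity of $f_t$ — only convexity, Lipschitzness, boundedness, and Slater — whereas the paper's route leans on $\mu$ and pays for it again in the convex extension; on the other hand, the paper's distance lemma (its Lemma~\ref{lemma:bound_distance_iter_baseline}) is machinery that gets reused in the analysis of the main algorithm, which your value-space argument does not provide. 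Two small points to tidy up: strong duality and the bound $\tlambda_{t-1}^*\leq\hlambda$ for the \emph{shrunk} problems require Slater's condition for $g_t+\delta\leq 0$, i.e., $\delta<G$ (implicit in the paper too, and harmless since $\delta=o(T^{-\alpha})$), and the round $t=1$ must be handled separately (e.g., $f_1(x_1)-f_1(x_1^*)\leq L_fR$), since $x_1$ is the given safe point rather than an oracle output.
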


\subsection{A More Efficient Approach}
\paragraph{Weak Optimization Oracle and Dual Regret.}
Building on Alg.~\ref{alg:naive} and Theorem \ref{theorem:guarantees_naive_alg},
we introduce a more efficient approach that leverages a more practical, weaker oracle to address the safe online problem (\ref{eq:orignal_online_problem}). We show that sublinear regret and zero constraint violation, i.e. safety, can still be guaranteed.
The gain in efficiency is gained by using the following weaker optimization oracle instead of the strong oracle:
\begin{equation}\label{eq:unconstrained_oracle}
    O_{unc}(f,g, \lambda) = \arg\min_{x\in\mathcal{X}} f(x) + \lambda g(x).
\end{equation}
Since the set $\mathcal{X}$ is simple, projection onto $\mathcal{X}$ is computationally inexpensive. 
Consequently, this oracle returns the solution to a much simpler, nearly \emph{unconstrained} 
optimization problem. This is in contrast to the strong oracle which solves a more complex \emph{constrained} optimization problem with functional constraints which often induce complex feasibility sets with costly projection operations.

Given this more practical weaker oracle, we propose a novel dual approach for constrained online learning. Specifically, we define the danger-aware \emph{dual regret}, over a sequence of dual decisions $\{\lambda_t\}_{t=1}^T$, as the regret in terms of the danger-aware dual functions corresponding to each step, namely:
\begin{equation}\label{eq:dual_reg_def}
    \Reg_{\td}(T) = \sum_{t=1}^{T} \td_t(\tlambda_t^*) - \td_t(\lambda_t)
\end{equation}
where $\tlambda_t^* = \max_{\lambda \geq 0} \td_t(\lambda)$, for $t\in[T]$, is the optimal danger-aware dual comparator sequence. Recall that the danger-aware dual function is defined as:
\begin{equation}\label{eq:dual_def_with_x_t_lambda}
    \td_t(\lambda) = \min_{x\in\mathcal{X}} \TLag_t(x,\lambda) = f_t(x_{t,\lambda}^*) + \lambda (g_t(x_{t,\lambda}^*)+\delta)
\end{equation}
where $x_{t,\lambda}^*$ is the minimizer of $\TLag_t(x,\lambda)$ over $x$ for a given $\lambda$. 
Note that this definition of regret in Eq.~(\ref{eq:dual_reg_def}) differs slightly from the conventional one as the comparator sequence here appears in the first term. This distinction arises because we seek to maximize the dual functions, unlike in the standard setting, where the objective is to minimize the loss functions.

This novel dual approach, while nonstandard in online learning, is key for ensuring safety. Dynamic regret under changing constraints is not well explored. To the best of our knowledge, all previous works in this field allow some degree of constraint violation, provided that the total or net violation is sublinear in $T$ (see Sec.~\ref{sec:intro}).
Safety, however, by satisfying the constraints in \emph{every} step, is a much more stringent requirement. We exploit duality to transform the primal problem (\ref{eq:orignal_online_problem}) with \emph{changing} constraints into a dual problem with a \emph{fixed} simple constraint over the dual variables. This makes safety easier to guarantee as we show in Lemma \ref{lemma:safety_in_terms_of_dual} later in the paper.

In the rest of the paper, we introduce a novel algorithm that exploits the benefits of duality in Alg.~\ref{alg:safe_dual_alg_warm_start}, and show that it achieves zero constraint violation in Theorem \ref{theorem:safe_stepsize_bounds}. To show the regret guarantees, we first relate the \emph{dual} regret, $\Reg_{\td}(T)$, to the \emph{primal} regret, $R_f(T)$, defined in (\ref{eq:orignal_online_problem}), in Lemma \ref{lemma:primal_dual_regret_relation}. Then finally, in Theorem \ref{theorem:regret_analyis_our_method} we bound the dual regret of Alg.~\ref{alg:safe_dual_alg_warm_start} and use the relation between the primal and dual regret to obtain a sublinear bound on the primal regret.

\paragraph{The Main Idea.}
Our approach assumes access to the strong oracle only \emph{once} during the initialization of our algorithm.
This can be regarded as a "warm start", which moves the algorithm's dual iterates closer to the dual optimal values, thereby enabling tracking them early.
After the first iteration, our approach relies solely on the weaker oracle defined in Eq.~(\ref{eq:unconstrained_oracle}).
We show that this approach guarantees the same regret bounds as Alg.~\ref{alg:naive} while being more efficient due to the use of the weaker oracle instead of the strong oracle.
To achieve this, we propose a safe online dual gradient ascent approach in Alg.~\ref{alg:safe_dual_alg_warm_start} that aims to minimize the dual regret while ensuring zero constraint violation.

\begin{algorithm2e}\label{alg:safe_dual_alg_warm_start}
  \DontPrintSemicolon
  \KwData{Horizon T}
  \textbf{Initialization:} safe starting point $x_1$ and dual counterpart $\lambda_1$ using the strong oracle\;
  \For {$t=2,3,,...,T+1$}{
        Play: $x_{t-1}$\;
        Suffer: $f_{t-1}(x_{t-1}), g_{t-1}(x_{t-1})$\;
        Update:
        $x_{t-1,\lambda_{t-1}}^* \gets O_{unc}(f_{t-1},g_{t-1},\lambda_{t-1})$\;        
        \quad\quad\quad\,\,\,$\lambda_t \gets \Pi_{\R_+}(\lambda_{t-1} + \gamma_t \nabla\tilde{d}_{t-1}(\lambda_{t-1}))$\tcp*{$\nabla\tilde{d}_{t-1}(\lambda_{t-1}) = g_{t-1}(x_{t-1,\lambda_{t-1}}^*) + \delta$}
        \quad\quad\quad\,\,\,$x_t \gets O_{unc}(f_{t-1},g_{t-1},\lambda_t)$
  }
  \caption{Safe Online Dual Gradient Ascent}
\end{algorithm2e}

In particular, Alg.~\ref{alg:safe_dual_alg_warm_start} requires solving two "primal" subproblems in each iteration. This can be efficiently handled using any off-the-shelf optimization method.
Next, we analyze the safety and regret of Alg.~\ref{alg:safe_dual_alg_warm_start} and show that it guarantees zero constraint violation (safety) and sublinear regret.

\subsection{Safety in Online Learning with Changing Constraints}
\paragraph{The Main Principle.}
Algorithm \ref{alg:safe_dual_alg_warm_start} can be viewed as a variant of online gradient ascent in the dual space, which performs only a single dual gradient step at a time. This is in contrast to Alg.~\ref{alg:naive} which has full access to a strong oracle and can \emph{fully} solve the dual problem in every iteration.
Since the gradient steps occur in one-dimensional space, they will either decrease or increase the iterates $\lambda_t$, depending on the sign of $\nabla \td_{t-1}(\lambda_{t-1})$.
When $\nabla \td_{t-1}(\lambda_{t-1}) \leq 0$, we have $g_{t-1}(x_{t-1,\lambda_{t-1}}^*)+\delta < 0$, by definition of the dual gradient. This implies that $x_{t-1,\lambda_{t-1}}^*$ is safe and sufficiently far from the boundary and thus $\lambda_{t-1}$ can be \emph{decreased}.
Conversely, when $\nabla \td_{t-1}(\lambda_{t-1}) > 0$, we have $g_{t-1}(x_{t-1,\lambda_{t-1}}^*)+\delta > 0$ which implies that $x_{t-1,\lambda_{t-1}}^*$ is safe but dangerously close to the boundary and thus $\lambda_{t-1}$ must be \emph{increased} to push it to a safer region.
Thus, the dual updates naturally switch between two modes (or phases): \emph{safe phases}, where $\lambda_{t-1}$ is decreased, and \emph{danger phases}, where $\lambda_{t-1}$ is increased.
These phases require different step size restrictions for safety, as we demonstrate next.

Since Alg.~\ref{alg:safe_dual_alg_warm_start} does not fully solve the dual problem, guaranteeing safety is more challenging.
To address this, we first establish a key lemma that reformulates the safety condition on $x_t$, namely $g_t(x_t) \leq 0$, in terms of the dual iterates $\lambda_t$.
\begin{lemma}\label{lemma:safety_in_terms_of_dual}
    Under Assumptions \ref{assum:slowly_changing_constr} and \ref{assum:non_shallow_constr_and_strong_duality}, for any step $t\in\{2,3,...,T\}$, having $\nabla \td_{t-1}(\lambda_t) \leq 0$ ensures that the iterates $x_t$ of Alg.~\ref{alg:safe_dual_alg_warm_start} are safe, namely $g_t(x_t) \leq 0$.
\end{lemma}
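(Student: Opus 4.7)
The proof is essentially a direct unpacking of definitions, so my plan is short.

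First, I would identify exactly what $x_t$ is in terms of the dual variable $\lambda_t$. From the update rule in Alg.~\ref{alg:safe_dual_alg_warm_start}, we have $x_t = O_{unc}(f_{t-1}, g_{t-1}, \lambda_t)$, which by the definition in Eq.~(\ref{eq:unconstrained_oracle}) and Eq.~(\ref{eq:x_t_lambda_opt}) is exactly $x_{t-1,\lambda_t}^*$, the minimizer of $\TLag_{t-1}(\cdot,\lambda_t)$ (equivalently of $\Lag_{t-1}(\cdot,\lambda_t)$) over $\mathcal{X}$.

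Second, I would translate the hypothesis $\nabla \td_{t-1}(\lambda_t)\leq 0$ into a statement about $g_{t-1}(x_t)$. Using the closed form of the dual gradient noted in the algorithm comment, $\nabla \td_{t-1}(\lambda_t) = g_{t-1}(x_{t-1,\lambda_t}^*) + \delta = g_{t-1}(x_t) + \delta$, so the hypothesis immediately yields
\begin{equation*}
    g_{t-1}(x_t) \leq -\delta.
\end{equation*}

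Third, I would invoke Assumption~\ref{assum:slowly_changing_constr} at the point $x_t\in\mathcal{X}$ to transfer the bound from time $t-1$ to time $t$:
\begin{equation*}
    g_t(x_t) \leq g_{t-1}(x_t) + |g_t(x_t)-g_{t-1}(x_t)| \leq -\delta + \delta = 0,
\end{equation*}
which is the desired safety condition.

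There is no real obstacle here; the only conceptual point is that the $\delta$-shift baked into the danger-aware dual function $\td_{t-1}$ is precisely calibrated to absorb the worst-case per-step drift of the constraint guaranteed by Assumption~\ref{assum:slowly_changing_constr}. Consequently, enforcing $\nabla \td_{t-1}(\lambda_t) \leq 0$ provides exactly the one-step safety margin needed to certify feasibility of $x_t$ with respect to the \emph{new} constraint $g_t$, even though $g_t$ is still unknown when $\lambda_t$ and $x_t$ are chosen.
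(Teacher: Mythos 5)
Your proof is correct and is essentially the paper's own argument, just traversed in the opposite direction: the paper writes $g_t(x_t) \leq g_{t-1}(x_t) + \delta = g_{t-1}(x_{t-1,\lambda_t}^*) + \delta = \nabla \td_{t-1}(\lambda_t) \leq 0$, while you unpack the hypothesis into $g_{t-1}(x_t) \leq -\delta$ and then apply Assumption~\ref{assum:slowly_changing_constr}. Same identification $x_t = x_{t-1,\lambda_t}^*$, same use of the dual-gradient formula, same $\delta$-slack absorbing the constraint drift.
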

\begin{proof}
    We have:
    \begin{align*}
        g_t(x_t) \leq g_{t-1}(x_t) + \delta
        = g_{t-1}(x_{t-1,\lambda_t}^*) + \delta
        = \nabla \td_{t-1}(\lambda_t),
    \end{align*}
    where the inequality follows by Assumption \ref{assum:slowly_changing_constr}, the first equality follows since $x_t = x_{t-1,\lambda_t}^*$ by Alg.~\ref{alg:safe_dual_alg_warm_start} and Eq.~(\ref{eq:x_t_lambda_opt}), and the last equality follows by the definition of the dual function (Eq.~(\ref{eq:dual_def_with_x_t_lambda})).
    Thus, for any $t\in[T]$, safety, namely $g_t(x_t) \leq 0$, can be guaranteed by updating $\lambda_t$ such that $\nabla \td_{t-1}(\lambda_t) \leq 0$.
\end{proof}

Now, since $\nabla \td_t(\lambda)$ is monotonically non-increasing, $\forall t\in[T]$, as the gradient of a concave function, the safety criterion $\nabla \td_{t-1}(\lambda_t) \leq 0$ induces two distinct behaviors of
the dual update rule for $\lambda_t$ in Alg.~\ref{alg:safe_dual_alg_warm_start}, depending on the sign of $\nabla \td_{t-1}(\lambda_{t-1})$:
\begin{itemize}
    \item[(1)] \textbf{The safe phase:} $\nabla \td_{t-1}(\lambda_{t-1}) \leq 0$. Here, we can \emph{decrease} the dual variable as long as $\nabla \td_{t-1}(\lambda_t) \leq 0$. This induces an \emph{upper} bound on the step size $\gamma_t$ to ensure safety.
    \item[(2)] \textbf{The danger phase:} $\nabla \td_{t-1}(\lambda_{t-1}) > 0$. Here, we must \emph{increase} the dual variable sufficiently to ensure that $\nabla \td_{t-1}(\lambda_t) \leq 0$. This induces a \emph{lower} bound on $\gamma_t$ to ensure safety.
\end{itemize}
In stark contrast to standard online learning literature in which the step size (learning rate) is typically constant or monotonically decreasing, this dichotomy in behavior gives rise to a nonstandard dichotomous step size, as we show next in Theorem \ref{theorem:safe_stepsize_bounds}.

\subsubsection{Safety Guarantees} Before we derive the upper and lower bounds on the step size for each case and construct the dichotomous learning rate, we show a helpful property of the dual functions (see proof in App. \ref{appendix:proof_corollary_lipsch_of_dual}).

\begin{corollary}\label{corollary:lipschitz_cont_of_dual}
    The gradient of the dual function $\nabla \td_t(\lambda)$ is $L_g^2/\mu$-Lipschitz continuous, and accordingly the dual function $\td_t(\lambda)$ is $L_g^2/\mu$-smooth, $\forall t\in[T]$.
\end{corollary}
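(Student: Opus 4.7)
The plan is to first express the dual gradient in closed form via an envelope argument, and then reduce the Lipschitz claim to a stability bound on the inner primal minimizer $x_{t,\lambda}^*$ in $\lambda$. Under Assumptions~\ref{assum:obj_smooth_strconv_lipsc} and \ref{assum:constr_conv_lipsc}, for every $\lambda\geq 0$ the shifted Lagrangian $\TLag_t(\cdot,\lambda)=f_t(\cdot)+\lambda(g_t(\cdot)+\delta)$ is $\mu$-strongly convex in $x$ (sum of a $\mu$-strongly convex function and a nonnegative multiple of a convex function), so $x_{t,\lambda}^*$ from Eq.~(\ref{eq:x_t_lambda_opt}) is uniquely defined and Danskin's theorem yields $\nabla\td_t(\lambda)=g_t(x_{t,\lambda}^*)+\delta$. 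Therefore
\[
|\nabla\td_t(\lambda_1)-\nabla\td_t(\lambda_2)|=|g_t(x_{t,\lambda_1}^*)-g_t(x_{t,\lambda_2}^*)|\leq L_g\,\|x_{t,\lambda_1}^*-x_{t,\lambda_2}^*\|,
\]
so it suffices to prove the primal stability estimate $\|x_{t,\lambda_1}^*-x_{t,\lambda_2}^*\|\leq (L_g/\mu)|\lambda_1-\lambda_2|$.

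For the stability step I would use the standard two-sided strong-convexity argument. Write $x_i^*:=x_{t,\lambda_i}^*$ for $i=1,2$. Applying $\mu$-strong convexity of $\TLag_t(\cdot,\lambda_i)$ at its unique minimizer gives
\[
\TLag_t(x_{3-i}^*,\lambda_i)\geq \TLag_t(x_i^*,\lambda_i)+\tfrac{\mu}{2}\|x_1^*-x_2^*\|^2,\qquad i=1,2.
\]
Summing the two inequalities, the $f_t$ terms cancel and, using affineness of $\TLag_t(x,\lambda)$ in $\lambda$, the remaining cross terms collapse to
\[
(\lambda_1-\lambda_2)\bigl(g_t(x_2^*)-g_t(x_1^*)\bigr)\geq \mu\,\|x_1^*-x_2^*\|^2.
\]
Upper bounding the left-hand side by $L_g\,|\lambda_1-\lambda_2|\,\|x_1^*-x_2^*\|$ via Lipschitzness of $g_t$ (Assumption~\ref{assum:constr_conv_lipsc}) and dividing through by $\|x_1^*-x_2^*\|$, while handling the trivial case $x_1^*=x_2^*$ separately, yields the desired stability estimate.

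Combining both steps produces $|\nabla\td_t(\lambda_1)-\nabla\td_t(\lambda_2)|\leq (L_g^2/\mu)|\lambda_1-\lambda_2|$, i.e., $\nabla\td_t$ is $(L_g^2/\mu)$-Lipschitz. Smoothness of $\td_t$ with the same constant then follows from the standard equivalence between gradient-Lipschitzness and the quadratic-bound definition of smoothness used in the paper, applied to the concave function $\td_t$ (equivalently, to $-\td_t$ in the convex sense). I do not foresee a significant technical obstacle: the envelope formula is clean thanks to strong convexity, and the remainder is a short algebraic manipulation. The only point requiring a brief verification is that $x_{t,\lambda}^*$ is single-valued for $\lambda\geq 0$, which is immediate from $\mu$-strong convexity of $\TLag_t(\cdot,\lambda)$. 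The very same argument applies verbatim to $d_t$ (the case $\delta=0$), and notably does not rely on Assumptions~\ref{assum:slowly_changing_constr} or \ref{assum:non_shallow_constr_and_strong_duality}.
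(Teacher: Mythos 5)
Your proposal is correct and follows essentially the same route as the paper: the paper's Appendix proof likewise writes $\nabla\td_t(\lambda)=g_t(x_{t,\lambda}^*)+\delta$, reduces to the primal stability bound $\|x_{t,\lambda_1}^*-x_{t,\lambda_2}^*\|\leq (L_g/\mu)|\lambda_1-\lambda_2|$ (its Lemma~\ref{lemma:bound_distance_opt_x_diff_lambda}), and proves that bound by the same two-sided strong-convexity-plus-Lipschitz argument on the Lagrangian. The only cosmetic differences are your explicit invocation of Danskin's theorem and your remark about which assumptions are not needed.
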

Now, we derive the upper and lower safety bounds on the step size $\gamma_t$ in Alg.~\ref{alg:safe_dual_alg_warm_start}.
\begin{theorem}\label{theorem:safe_stepsize_bounds}
    Under Assumptions \ref{assum:bounded_set}-\ref{assum:non_shallow_constr_and_strong_duality}, Alg.~\ref{alg:safe_dual_alg_warm_start} with $\gamma_t \leq \mu/L_g^2$ when $\nabla\tilde{d}_{t-1}(\lambda_{t-1}) \leq 0$ and $\gamma_t \geq 2/\mu_d$ when $\nabla\tilde{d}_{t-1}(\lambda_{t-1}) > 0$ guarantees $g_t(x_t) \leq 0, \forall t\in[T]$.
\end{theorem}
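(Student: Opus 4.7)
\begin{proofsketch}
The plan is to reduce everything to the dual-gradient safety criterion of Lemma~\ref{lemma:safety_in_terms_of_dual}: once I show that the proposed step-size rules force $\nabla \td_{t-1}(\lambda_t) \leq 0$ for every $t \geq 2$, safety at round $t$ follows immediately, and the base case $t=1$ is handled by the safe starting point (Assumption~\ref{assum:safe_starting_point}). Then I would split on the sign of $\nabla \td_{t-1}(\lambda_{t-1})$, which is exactly the dichotomy that produced the two step-size regimes, and attack each phase with the tool it is tailored for: smoothness in the safe phase and local strong concavity in the danger phase.

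For the safe phase ($\nabla \td_{t-1}(\lambda_{t-1}) \leq 0$), the update moves $\lambda$ downward (or is capped at $0$ by the projection, which can only shorten the step), so $|\lambda_t - \lambda_{t-1}| \leq \gamma_t|\nabla \td_{t-1}(\lambda_{t-1})|$. I would then invoke the $L_g^2/\mu$-Lipschitz continuity of $\nabla \td_{t-1}$ from Corollary~\ref{corollary:lipschitz_cont_of_dual} to obtain
\[
\nabla \td_{t-1}(\lambda_t) \leq \nabla \td_{t-1}(\lambda_{t-1})\Bigl(1 - \tfrac{L_g^2}{\mu}\gamma_t\Bigr),
\]
after using that $|\nabla \td_{t-1}(\lambda_{t-1})| = -\nabla \td_{t-1}(\lambda_{t-1})$. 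Choosing $\gamma_t \leq \mu/L_g^2$ keeps the parenthesized factor nonnegative, and multiplying by a nonpositive number preserves nonpositivity, which is exactly the desired $\nabla \td_{t-1}(\lambda_t) \leq 0$.

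For the danger phase ($\nabla \td_{t-1}(\lambda_{t-1}) > 0$), the update strictly increases $\lambda$, and the goal is to overshoot $\tlambda^*_{t-1}$ so that the non-increasing $\nabla \td_{t-1}$ becomes nonpositive at $\lambda_t$. The natural tool is Lemma~\ref{lemma:local_str_conc_of_dual}: I first argue that on the segment between $\lambda_{t-1}$ and $\tlambda^*_{t-1}$ we stay in the strong-concavity region, since the dual gradient there lies in $[0,\nabla \td_{t-1}(\lambda_{t-1})]$ and hence $g_{t-1}(x^*_{t-1,\lambda}) \geq -\delta \geq -G/2$. Strong concavity then gives $\nabla \td_{t-1}(\lambda_t) \leq \nabla \td_{t-1}(\lambda_{t-1})(1 - \mu_d \gamma_t)$, and the choice $\gamma_t \geq 2/\mu_d$ not only zeros out the right-hand side but in fact reflects the dual gradient to $\leq -\nabla \td_{t-1}(\lambda_{t-1})$, giving a strict safety margin that will also be useful for bounding dual regret later.

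The step I expect to be the main obstacle is handling the case in which the danger-phase update overshoots past the local strong-concavity region, so that the chord bound between $\lambda_{t-1}$ and $\lambda_t$ is no longer justified. The fix I would use is a direct argument: if $\lambda_t$ lies outside the region $\{\lambda : g_{t-1}(x^*_{t-1,\lambda}) \geq -G/2\}$, then by definition $g_{t-1}(x^*_{t-1,\lambda_t}) < -G/2$, and hence $\nabla \td_{t-1}(\lambda_t) = g_{t-1}(x^*_{t-1,\lambda_t}) + \delta < -G/2 + \delta \leq 0$ under the mild assumption $\delta \leq G/2$. Combining this with the strong-concavity case covers both possibilities, and the argument closes by induction on $t$.
\end{proofsketch}
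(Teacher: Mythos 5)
Your overall strategy matches the paper's: reduce safety to the dual criterion $\nabla\td_{t-1}(\lambda_t)\le 0$ via Lemma~\ref{lemma:safety_in_terms_of_dual}, handle $t=1$ by the safe start, and split on the sign of $\nabla\td_{t-1}(\lambda_{t-1})$, and your safe-phase argument via Corollary~\ref{corollary:lipschitz_cont_of_dual} is exactly the paper's (your explicit treatment of the projection at $0$ is a small improvement, since the clipped step only shrinks $|\lambda_t-\lambda_{t-1}|$). Where you diverge is the danger phase: you bound the gradient decrement along the whole step, $\nabla\td_{t-1}(\lambda_t)\le\nabla\td_{t-1}(\lambda_{t-1})(1-\mu_d\gamma_t)$, which forces you to certify strong concavity on all of $[\lambda_{t-1},\lambda_t]$ and to patch the overshoot case with the extra condition $\delta\le G/2$. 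The paper avoids this entirely: it applies local strong concavity only on $[\lambda_{t-1},\tlambda_{t-1}^*]$, where the dual gradient is nonnegative so the region condition $g_{t-1}(x^*_{t-1,\lambda})+\delta\ge -G/2$ holds automatically, derives $\tlambda_{t-1}^*\le\lambda_{t-1}+\frac{2}{\mu_d}\nabla\td_{t-1}(\lambda_{t-1})$, concludes $\lambda_t\ge\tlambda_{t-1}^*$ from $\gamma_t\ge 2/\mu_d$, and finishes purely by monotonicity of $\nabla\td_{t-1}$ together with $\nabla\td_{t-1}(\tlambda_{t-1}^*)=0$. So your route buys a quantitative "reflection" bound $\nabla\td_{t-1}(\lambda_t)\le-\nabla\td_{t-1}(\lambda_{t-1})$ in the non-overshoot case, but at the cost of (i) an assumption $\delta\le G/2$ that the theorem as stated does not make (safety in the paper holds for any $\delta\ge 0$), and (ii) a step you should make explicit in your Case A: since $\nabla\td_{t-1}$ is non-increasing, the strong-concavity region is a lower interval of $\lambda$, so if $\lambda_t$ lies inside it then the entire chord $[\lambda_{t-1},\lambda_t]$ does, which is what licenses your chord bound. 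With that point spelled out, your two cases are exhaustive and the argument is correct, just mildly weaker than the paper's cleaner monotonicity argument.
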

\begin{proofsketch}
    For any $t\in[T]$, we split the proof according to the sign of $\nabla \td_{t-1}(\lambda_{t-1})$: \\
    \textbf{The safe phase:} $\nabla\tilde{d}_{t-1}(\lambda_{t-1}) \leq 0$. Note that:
    \begin{align*}
        |\nabla\tilde{d}_{t-1}(\lambda_t) - \nabla\tilde{d}_{t-1}(\lambda_{t-1})|
        &\leq\frac{L_g^2}{\mu} |\lambda_t - \lambda_{t-1}|\\
        &\leq \frac{L_g^2}{\mu}\gamma_t |\nabla\tilde{d}_{t-1}(\lambda_{t-1})|,
    \end{align*}
    where the first inequality follows by Corollary \ref{corollary:lipschitz_cont_of_dual} and the second by the dual update in Alg.~\ref{alg:safe_dual_alg_warm_start}.
    Thus, we can ensure $\nabla\tilde{d}_{t-1}(\lambda_t) \leq 0$ by choosing $\gamma_t$ such that
    \begin{align*}
        \nabla\tilde{d}_{t-1}(\lambda_{t-1}) + \frac{L_g^2}{\mu}\gamma_t |\nabla\tilde{d}_{t-1}(\lambda_{t-1})| \leq 0,
    \end{align*}
    which induces the following upper bound on $\gamma_t$ (recall that $\nabla \td_{t-1}(\lambda_{t-1}) \leq 0$):
    \begin{align*}
        \gamma_t \leq \mu/L_g^2.
    \end{align*}

    \textbf{The danger phase:} $\nabla\tilde{d}_{t-1}(\lambda_{t-1}) > 0$.
    Note that the dual function $\td_t(\lambda)$ is concave, $\forall t\in[T]$. Moreover, it is locally $\mu_d$-strongly concave, $\forall t\in[T]$, by Lemma \ref{lemma:local_str_conc_of_dual}. Thus, for step $t-1$ and for $\lambda = \lambda_{t-1}$ we have:
    \begin{align*}
         \langle \nabla \td_{t-1}(\lambda_{t-1}), \tlambda_{t-1}^* - \lambda_{t-1} \rangle &\geq \td_{t-1}(\tlambda_{t-1}^*) - \td_{t-1}(\lambda_{t-1}) \\
         &\geq \frac{\mu_d}{2} (\tlambda_{t-1}^* - \lambda_{t-1})^2.
    \end{align*}
    Now, note that since $\nabla \td_{t-1}(\lambda)$ is monotonically non-increasing and since $\nabla \td_{t-1}(\tlambda_{t-1}^*) = 0$ and $\nabla \td_{t-1}(\lambda_{t-1}) > 0$, we have that $\lambda_{t-1} \leq \tlambda_{t-1}^*$. 
    Thus, dividing by $\tlambda_{t-1}^* - \lambda_{t-1}$ and rearranging, we have: $\tlambda_{t-1}^*  \leq \lambda_{t-1} + \frac{2}{\mu_d}\nabla\td_{t-1}(\lambda_{t-1})$.
    Now, by Lemma \ref{lemma:safety_in_terms_of_dual}, to ensure safety, $\lambda_t$ must be chosen such that $\nabla\td_{t-1}(\lambda_t) \leq 0$. This is equivalent to choosing $\lambda_t \geq \tlambda_{t-1}^*$ since $\nabla\td_{t-1}(\lambda)$ is monotonically non-increasing and $\nabla\td_{t-1}(\tlambda_{t-1}^*) = 0$. Using the dual update rule in Alg.~\ref{alg:safe_dual_alg_warm_start}, we choose $\lambda_t$ such that:
    \begin{align*}
        \lambda_t &= \lambda_{t-1} + \gamma_t \nabla\td_{t-1}(\lambda_{t-1})\\
        &\geq \lambda_{t-1} + \frac{2}{\mu_d}\nabla\td_{t-1}(\lambda_{t-1})
        \geq \tlambda_{t-1}^*,
    \end{align*}
    which, since $\nabla \td_{t-1}(\lambda_{t-1}) > 0$, induces the following lower bound: $ \gamma_t \geq \frac{2}{\mu_d}.$
Please refer to Appendix \ref{appendix:full_proof_stepsize_bounds} for the full proof.
\end{proofsketch}
To conclude, we derived a nonstandard dichotomous bound on the step size $\gamma_t$, determined by the sign of $\nabla \td_{t-1}(\lambda_{t-1})$ in each step $t$. Notably, the bounds in Theorem \ref{theorem:safe_stepsize_bounds} satisfy $2/\mu_d \geq \mu/L_g^2$. This follows from Corollary \ref{corollary:lipschitz_cont_of_dual}, which establishes that $L_g^2/\mu$ is an upper bound on the curvature of the dual function $\td_t(\lambda)$, while Lemma \ref{lemma:local_str_conc_of_dual} shows that $\mu_d$ is a lower bound on its curvature. This means that there is \emph{no} constant $\gamma_t$ that satisfies both bounds simultaneously, and it must adapt to align with the appropriate bound corresponding to each phase. This fundamental property gives rise to the dichotomous step size for ensuring safety.

\subsection{Bounding the Regret}
To analyze the regret of Alg.~\ref{alg:safe_dual_alg_warm_start}, we relate the primal regret $\Reg_f(T)$ and the dual regret $\Reg_{\td}(T)$ and show two useful properties. Please refer to Appendix \ref{appendix:primal_dual_regret_relation}, \ref{appendix:slowly_changing_dual_gradients}, and \ref{appendix:bouned_distance_between_lambda_tlambda} for the proofs.
\begin{lemma}\label{lemma:primal_dual_regret_relation}
    Given a safe online problem~(\ref{eq:orignal_online_problem}) with horizon $T$, under assumptions \ref{assum:bounded_set}-\ref{assum:safe_starting_point}, and given an upper bound $\hat{\Reg}_{\td}(T)$ on the dual regret $\Reg_{\td}(T)$ defined in Eq.~(\ref{eq:dual_reg_def}), the primal regret $\Reg_f(T)$ defined in (\ref{eq:orignal_online_problem}) suffered by Alg.~\ref{alg:safe_dual_alg_warm_start} is bounded as follows:
    \begin{equation*}
        \Reg_f(T) = \Ord\left(\max\left\{\hat{\Reg}_{\td}(T), \sqrt{(V_{f,T}+V_{g,T})T}\right\}\right)
    \end{equation*}
\end{lemma}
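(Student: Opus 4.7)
The plan is to bridge primal and dual regret in three steps.

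First, I would lower bound the per-round comparator $f_t(x_t^*)$ via weak duality on the danger-aware problem. Since $g_t(x_t^*) \le 0$, we have $\td_t(\lambda_t) \le f_t(x_t^*) + \lambda_t(g_t(x_t^*) + \delta) \le f_t(x_t^*) + \hat{\lambda}\delta$; here I would first verify that $\lambda_t \le \hat{\lambda}$ is preserved by the dichotomous update, which follows by an easy induction anchored in Lemma \ref{lemma:universal_bound_dual_baseline}. Summing and using $\delta T \le V_{g,T}$ yields $\Reg_f(T) \le \sum_t [f_t(x_t) - \td_t(\lambda_t)] + O(V_{g,T})$.

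Second, I would insert the dual regret. Strong duality (from Assumption~\ref{assum:non_shallow_constr_and_strong_duality}) gives $\td_t(\tlambda_t^*) = f_t(\tilde{x}_t^*)$, so adding and subtracting $\td_t(\tlambda_t^*)$ produces $\sum_t [f_t(x_t) - \td_t(\lambda_t)] = \sum_t [f_t(x_t) - f_t(\tilde{x}_t^*)] + \Reg_{\td}(T) \le \sum_t [f_t(x_t) - f_t(\tilde{x}_t^*)] + \hat{\Reg}_{\td}(T)$.

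Third, I would control the residual $\sum_t [f_t(x_t) - f_t(\tilde{x}_t^*)]$. Since $x_t = x_{t-1,\lambda_t}^*$ and $\tilde{x}_t^* = x_{t,\tlambda_t^*}^*$, inserting the intermediate point $x_{t,\lambda_t}^*$ yields two perturbation terms: (a) the Lagrangian minimizer shift $\|x_{t-1,\lambda_t}^* - x_{t,\lambda_t}^*\|$ at fixed $\lambda_t$, bounded by $\sqrt{2\Delta_t(\lambda_t)/\mu}$ with $\Delta_t(\lambda) = \max_x|f_t - f_{t-1}| + \lambda \max_x|g_t - g_{t-1}|$ via strong convexity of $\Lag_\cdot(\cdot, \lambda_t)$, and (b) the parametric term $\|x_{t,\lambda_t}^* - x_{t,\tlambda_t^*}^*\| \le (L_g/\mu)|\lambda_t - \tlambda_t^*|$, from a symmetric strong-convexity argument showing the parametric minimizer is Lipschitz in $\lambda$. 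Cauchy--Schwarz bounds $\sum_t\sqrt{\Delta_t(\lambda_t)} = O(\sqrt{(V_{f,T}+V_{g,T})T})$, and Lemma \ref{lemma:local_str_conc_of_dual} converts $\sum_t(\lambda_t - \tlambda_t^*)^2$ into $O(\hat{\Reg}_{\td}(T))$.

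The main obstacle is aggregating part (b) cleanly: a naive Cauchy--Schwarz yields a $\sqrt{T\,\hat{\Reg}_{\td}(T)}$ cross-term that need not fit inside $\max\{\hat{\Reg}_{\td}(T), \sqrt{(V_{f,T}+V_{g,T})T}\}$. I would bypass this by keeping the \emph{squared} distance throughout: invoke smoothness of $f_t$ together with the KKT identity $\nabla f_t(\tilde{x}_t^*) = -\tlambda_t^*\nabla g_t(\tilde{x}_t^*)$ (valid under Slater, Assumption~\ref{assum:non_shallow_constr_and_strong_duality}) to write $f_t(x_t) - f_t(\tilde{x}_t^*) \le \tlambda_t^*(g_t(\tilde{x}_t^*) - g_t(x_t)) + (M_f/2)\|x_t - \tilde{x}_t^*\|^2$; use safety ($g_t(x_t) \le 0$) together with $g_t(\tilde{x}_t^*) = -\delta$ and the dual-gradient bound $|\nabla\td_{t-1}(\lambda_t)| \le (L_g^2/\mu)|\lambda_t - \tlambda_{t-1}^*|$ to control the linear piece by path-length of $\tlambda_t^*$ plus $O(V_{g,T})$; and combine with $\sum_t \|x_t - \tilde{x}_t^*\|^2 = O(V_{f,T} + V_{g,T} + \hat{\Reg}_{\td}(T))$ (from parts (a) and (b) squared) to deliver the claimed bound.
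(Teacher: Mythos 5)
Your architecture is sound and, perhaps without realizing it, converges to the paper's own residual: weak duality plus a uniform bound on $\lambda_t$ gives $\Reg_f(T)\le\sum_t[f_t(x_t)-\td_t(\lambda_t)]+\Ord(V_{g,T})$, strong duality inserts $\hat{\Reg}_{\td}(T)$, and after your smoothness/KKT manipulation the ``linear piece'' is, by complementary slackness, exactly $-\sum_t\tlambda_t^*(g_t(x_t)+\delta)$ --- the very term the paper is left with after sandwiching the Lagrangian regret in Appendix \ref{appendix:primal_dual_regret_relation}. You also correctly diagnose that a naive Cauchy--Schwarz on $\sum_t|\lambda_t-\tlambda_t^*|$ produces an inadmissible $\sqrt{T\hat{\Reg}_{\td}(T)}$ cross-term. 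The genuine gap is your treatment of that linear piece: you assert it is ``controlled by the path-length of $\tlambda_t^*$ plus $O(V_{g,T})$'' using only $|\nabla\td_{t-1}(\lambda_t)|\le(L_g^2/\mu)|\lambda_t-\tlambda_{t-1}^*|$, but that inequality merely trades the term for the \emph{cumulative tracking error} $\sum_t|\lambda_t-\tlambda_{t-1}^*|$, and bounding that quantity is the actual work of the lemma. Nothing generic prevents it from being $\Theta(T)$; the paper controls it only through algorithm-specific structure: the danger-phase overshoot bound $\lambda_t\le\tlambda_{t-1}^*+\tfrac{2}{\mu_d}\hdelta_{t-1}$ (Eq.~(\ref{eq:case_2_iterate_bound_pt2})), the warm start that forces zero initial gap if the run begins with a safe phase, and the per-safe-phase telescoping $\sum_\tau z_\tau=\tfrac{L_g^2}{\mu}(\lambda_1-\lambda_{\T^S_i+1})$ charged to $\sum_\tau\hdelta_\tau$ via Corollary \ref{corollary:bounded_distance_between_lambda_tlambda} --- i.e., properties (A) and (B) of Lemma \ref{lemma:helpful_properties_for_case1} and the page-long computation in the paper's proof. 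Your target bound is the right one, but as written it is asserted rather than derived, and the cited gradient-Lipschitz inequality alone cannot deliver it.

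Two secondary repairs are also needed. First, the stationarity identity $\nabla f_t(\tilde{x}_t^*)=-\tlambda_t^*\nabla g_t(\tilde{x}_t^*)$ ignores the normal cone of $\mathcal{X}$ when $\tilde{x}_t^*$ lies on the boundary of $\mathcal{X}$; the omitted term enters your upper bound with a nonnegative sign and cannot be dropped, so you should instead work with Lagrangian differences, writing $f_t(x_t)-f_t(\tilde{x}_t^*)=\bigl(\TLag_t(x_t,\tlambda_t^*)-\TLag_t(\tilde{x}_t^*,\tlambda_t^*)\bigr)-\tlambda_t^*(g_t(x_t)+\delta)$, as the paper does; relatedly, passing from $-\tlambda_t^*\langle\nabla g_t(\tilde{x}_t^*),x_t-\tilde{x}_t^*\rangle$ to $\tlambda_t^*(g_t(\tilde{x}_t^*)-g_t(x_t))$ uses convexity of $g_t$ in the wrong direction and requires its $M_g$-smoothness instead (harmless, but it changes the constant in the quadratic term). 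Second, converting $\sum_t(\lambda_t-\tlambda_t^*)^2$ into $\Ord(\hat{\Reg}_{\td}(T))$ invokes Lemma \ref{lemma:local_str_conc_of_dual}, whose strong concavity is only \emph{local} (it needs $g_t(x_{t,\lambda_t}^*)\ge-G/2$); you must verify this at the iterates or handle the remaining rounds separately (outside that region each round already incurs a constant dual regret, so their number is $\Ord(\hat{\Reg}_{\td}(T))$). Finally, $\lambda_t\le\hlambda$ does not hold exactly --- danger-phase updates overshoot by up to $\tfrac{2}{\mu_d}\hdelta_{t-1}$ --- though a uniform constant bound does hold and suffices for your first step.
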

\begin{lemma}\label{lemma:slowly_changing_dual_gradients}
    Under Assumptions \ref{assum:bounded_set}-\ref{assum:slowly_changing_constr} and \ref{assum:non_shallow_constr_and_strong_duality}, the distance between consecutive dual gradients is bounded as: $\max_{\lambda\geq0}|\nabla \td_t(\lambda) - \nabla \td_{t-1}(\lambda)| \leq \hdelta_t$, with:
    \begin{equation*}
        \hdelta_t = \delta + L_g\sqrt{\frac{2}{\mu}\left(\max_{x\in\mathcal{X}} |f_t(x) - f_{t-1}(x)| + \hlambda\delta\right)}
    \end{equation*}
\end{lemma}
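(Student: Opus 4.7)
My plan is to unpack the definition of the dual gradient and reduce the problem to controlling $\|x_{t,\lambda}^* - x_{t-1,\lambda}^*\|$, which I would then bound via strong convexity of the (inner) Lagrangian.

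First I would recall from Eq.~(\ref{eq:dual_def_with_x_t_lambda}) that $\nabla\td_t(\lambda) = g_t(x_{t,\lambda}^*) + \delta$, so that
\begin{equation*}
  |\nabla\td_t(\lambda)-\nabla\td_{t-1}(\lambda)| = |g_t(x_{t,\lambda}^*)-g_{t-1}(x_{t-1,\lambda}^*)|.
\end{equation*}
A triangle inequality splits this into $|g_t(x_{t,\lambda}^*)-g_{t-1}(x_{t,\lambda}^*)|$, which is at most $\delta$ by Assumption \ref{assum:slowly_changing_constr}, and $|g_{t-1}(x_{t,\lambda}^*)-g_{t-1}(x_{t-1,\lambda}^*)|$, which is at most $L_g\|x_{t,\lambda}^*-x_{t-1,\lambda}^*\|$ by Assumption \ref{assum:constr_conv_lipsc}. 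So the task reduces to bounding the drift of the inner minimizers in $\lambda$ across consecutive rounds.

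For the drift, I would use that $\Lag_t(\cdot,\lambda)=f_t+\lambda g_t$ is $\mu$-strongly convex (since $f_t$ is strongly convex, $g_t$ is convex and $\lambda\geq 0$). Optimality of $x_{t,\lambda}^*$ for $\Lag_t(\cdot,\lambda)$ gives
\begin{equation*}
  \tfrac{\mu}{2}\|x_{t-1,\lambda}^*-x_{t,\lambda}^*\|^2 \leq \Lag_t(x_{t-1,\lambda}^*,\lambda)-\Lag_t(x_{t,\lambda}^*,\lambda).
\end{equation*}
The standard trick is then to subtract the nonpositive quantity $\Lag_{t-1}(x_{t-1,\lambda}^*,\lambda)-\Lag_{t-1}(x_{t,\lambda}^*,\lambda)$ (nonpositive because $x_{t-1,\lambda}^*$ minimizes $\Lag_{t-1}(\cdot,\lambda)$), which upper-bounds the right-hand side by
\begin{equation*}
  [(f_t-f_{t-1})(x_{t-1,\lambda}^*) - (f_t-f_{t-1})(x_{t,\lambda}^*)] + \lambda[(g_t-g_{t-1})(x_{t-1,\lambda}^*) - (g_t-g_{t-1})(x_{t,\lambda}^*)].
\end{equation*}
Each bracket is at most $2\max_{x\in\mathcal{X}}|f_t(x)-f_{t-1}(x)|$ and $2\lambda\delta$ respectively by Assumptions \ref{assum:bounded_TV_obj} and \ref{assum:slowly_changing_constr}. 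Invoking Lemma~\ref{lemma:universal_bound_dual_baseline} to replace $\lambda$ by $\hlambda$ (the relevant range of $\lambda$ is the one supporting the $\max$ in the lemma) yields
\begin{equation*}
  \|x_{t-1,\lambda}^*-x_{t,\lambda}^*\|^2 \leq \tfrac{4}{\mu}\left(\max_{x\in\mathcal{X}}|f_t(x)-f_{t-1}(x)| + \hlambda\delta\right),
\end{equation*}
and plugging this back assembles the stated bound on $\hdelta_t$ (absorbing the factor $4$ versus $2$ by rescaling; the paper's displayed constant should fall out of this same inequality up to the Lipschitz factor).

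The only delicate point I anticipate is handling the supremum in $\lambda$: the naive bound carries an additive $\lambda\delta$ term that grows with $\lambda$, so one must argue that the $\lambda$'s we actually care about are bounded by $\hlambda$ from Lemma~\ref{lemma:universal_bound_dual_baseline}. Everything else is a clean combination of strong convexity of the Lagrangian with Assumptions \ref{assum:slowly_changing_constr} and \ref{assum:bounded_TV_obj}, and a triangle inequality.
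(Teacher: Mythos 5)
Your proposal follows essentially the same route as the paper: write $\nabla\td_t(\lambda)-\nabla\td_{t-1}(\lambda)=g_t(x_{t,\lambda}^*)-g_{t-1}(x_{t-1,\lambda}^*)$, split via the triangle inequality into a $\delta$ term (Assumption \ref{assum:slowly_changing_constr}) plus $L_g\|x_{t,\lambda}^*-x_{t-1,\lambda}^*\|$ (Assumption \ref{assum:constr_conv_lipsc}), and bound the minimizer drift through strong convexity of the Lagrangian — this is exactly the paper's Lemma \ref{lemma:bound_distance_opt_x_diff_t} followed by the same assembly. The one substantive difference is quantitative: you apply strong convexity only to $\Lag_t(\cdot,\lambda)$ and use mere optimality of $x_{t-1,\lambda}^*$ for the other problem, which yields $\|x_{t,\lambda}^*-x_{t-1,\lambda}^*\|^2\le\frac{4}{\mu}(\max_x|f_t-f_{t-1}|+\lambda\delta)$, i.e.\ a constant $\sqrt{2}$ worse than the stated $\sqrt{\frac{2}{\mu}(\cdot)}$; this cannot be "absorbed by rescaling," but it is fixed immediately by applying the strong-convexity inequality symmetrically to both $\TLag_t(\cdot,\lambda)$ and $\TLag_{t-1}(\cdot,\lambda)$ and summing, so that each contributes $\frac{\mu}{2}\|\cdot\|^2$ on the left (this is precisely how the paper gets $\frac{2}{\mu}$). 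As for the delicate point you flag — the $\lambda\delta$ term growing with $\lambda$ under the $\max_{\lambda\ge 0}$ — the paper's own proof handles it no more carefully than you do: it simply replaces $\lambda$ by $\hlambda$ invoking Lemma \ref{lemma:universal_bound_dual_baseline}, which strictly speaking bounds only the optimal dual values, so your honest caveat matches the paper's treatment rather than exposing a gap in your argument alone.
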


\begin{corollary} \label{corollary:bounded_distance_between_lambda_tlambda}
    For any $t\in\{2,3,..,T\}$,
    the distance between consecutive dual optimal values $\tlambda_t^*$ and $\tlambda_{t-1}^*$ corresponding to $\td_t$ and $\td_{t-1}$, respectively, is bounded as follows:
    $ |\tlambda_t^* - \tlambda_{t-1}^*| \leq \frac{2\hdelta_t}{\mu_d}$.
\end{corollary}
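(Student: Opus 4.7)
\begin{proofsketch}
The plan is to sandwich the quantity $\td_t(\tlambda_t^*) - \td_t(\tlambda_{t-1}^*)$ between a quadratic lower bound coming from local strong concavity of $\td_t$ around $\tlambda_t^*$ and a linear upper bound coming from plain concavity plus the slow-change estimate of Lemma~\ref{lemma:slowly_changing_dual_gradients}.

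First, I would invoke Lemma~\ref{lemma:slowly_changing_dual_gradients} at the point $\lambda = \tlambda_{t-1}^*$. Combined with the first-order optimality condition $\nabla \td_{t-1}(\tlambda_{t-1}^*) = 0$ (or the corresponding KKT condition if the maximum is at the boundary $\lambda=0$, handled separately as a trivial case), this immediately yields $|\nabla \td_t(\tlambda_{t-1}^*)| \leq \hdelta_t$. Next, I would use local $\mu_d$-strong concavity of $\td_t$ from Lemma~\ref{lemma:local_str_conc_of_dual}, together with $\nabla \td_t(\tlambda_t^*) = 0$, to obtain
\begin{equation*}
    \td_t(\tlambda_t^*) - \td_t(\tlambda_{t-1}^*) \geq \tfrac{\mu_d}{2}(\tlambda_t^* - \tlambda_{t-1}^*)^2.
\end{equation*}
In the opposite direction, concavity of $\td_t$ gives
\begin{equation*}
    \td_t(\tlambda_t^*) - \td_t(\tlambda_{t-1}^*) \leq \nabla \td_t(\tlambda_{t-1}^*)(\tlambda_t^* - \tlambda_{t-1}^*) \leq \hdelta_t\,|\tlambda_t^* - \tlambda_{t-1}^*|.
\end{equation*}
Chaining the two bounds and dividing by $|\tlambda_t^* - \tlambda_{t-1}^*|$ produces the claim.

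The main obstacle is justifying the use of local strong concavity in step three: Lemma~\ref{lemma:local_str_conc_of_dual} only applies on the region $\{\lambda : g_t(x_{t,\lambda}^*) \geq -G/2\}$, so I need to argue that both $\tlambda_t^*$ and $\tlambda_{t-1}^*$ lie in this region (or at least that the relevant segment connecting them does). Since the dual gradient at the optimum satisfies $\nabla \td_t(\tlambda_t^*) = g_t(x_{t,\tlambda_t^*}^*) + \delta = 0$ (giving $g_t(x_{t,\tlambda_t^*}^*) = -\delta \geq -G/2$ for sufficiently small $\delta$, ensured by Assumption~\ref{assum:slowly_changing_constr}), and an analogous statement holds at $\tlambda_{t-1}^*$ up to a perturbation of order $\hdelta_t$, the strong concavity applies; a brief monotonicity argument along the segment between $\tlambda_{t-1}^*$ and $\tlambda_t^*$ completes the justification. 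Boundary cases where either optimum equals zero reduce to a one-sided version of the same calculation and yield the same bound.
\end{proofsketch}
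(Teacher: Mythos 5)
Your proposal is correct and follows essentially the same route as the paper's proof: sandwich $\td_t(\tlambda_t^*)-\td_t(\tlambda_{t-1}^*)$ between the quadratic lower bound from local strong concavity (Lemma~\ref{lemma:local_str_conc_of_dual}) and the linear upper bound from concavity, then control $|\nabla\td_t(\tlambda_{t-1}^*)|\leq\hdelta_t$ via Lemma~\ref{lemma:slowly_changing_dual_gradients} together with $\nabla\td_{t-1}(\tlambda_{t-1}^*)=0$, and divide. Your added care about the region $\{\lambda: g_t(x_{t,\lambda}^*)\geq -G/2\}$ where local strong concavity is valid, and about the boundary case $\tlambda^*=0$ where the optimality condition is only an inequality, is in fact more thorough than the paper's argument, which invokes these facts without verification.
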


\subsubsection{Dual Regret Analysis}
Now, we bound the danger-aware dual regret of Alg.~\ref{alg:safe_dual_alg_warm_start} defined as: $R_{\td}(T) = \sum_{t=1}^{T} \td_t(\tlambda_t^*) - \td_t(\lambda_t)$ (Eq.~(\ref{eq:dual_reg_def})),
where $\tlambda_t^* = \arg\max_{\lambda\geq 0} \td_t(\lambda)$. This enables bounding the primal regret using Lemma~\ref{lemma:primal_dual_regret_relation}.
Note that the safety criterion in Theorem~\ref{theorem:safe_stepsize_bounds} implies two different behaviors of Alg.~\ref{alg:safe_dual_alg_warm_start} with different step sizes. Accordingly, we analyze each case separately. In general, a complete run from $t=1,...,T$ will consist of $n$ safe phases (in which $\nabla \td_{t-1}(\lambda_{t-1})\leq0$ for any $t$ during any safe phase) interleaved with $m$ danger phases (in which $\nabla \td_{t-1}(\lambda_{t-1})>0$ for any $t$ during any danger phase).
We denote the length of the $i$'th safe phase and the $j$'th danger phase by $\T^{S}_i$ and $\T^{D}_j$, respectively. Note that by definition, $\sum_{i=1}^n \T^{S}_i + \sum_{j=1}^m \T^{D}_j = T$.
Now, we bound the regret.

\begin{theorem}\label{theorem:regret_analyis_our_method}
    Consider a safe online problem (\ref{eq:orignal_online_problem}) with horizon $T$. Under Assumptions \ref{assum:bounded_set}-\ref{assum:safe_starting_point},
    Alg.~\ref{alg:safe_dual_alg_warm_start} with $\gamma_t = \mu/L_g^2$ when $\nabla\tilde{d}_{t-1}(\lambda_{t-1}) \leq 0$ and $\gamma_t = 2/\mu_d$ when $\nabla\tilde{d}_{t-1}(\lambda_{t-1}) > 0$, where $t\in[T]$,
    guarantees safety and the following sublinear primal regret w.r.t the comparator sequence defined in Eq.~(\ref{eq:primal_baseline}):
    \begin{equation*}
        \Reg_f(T) = \Ord\left(\sqrt{(V_{f,T}+V_{g,T})T}\right)
    \end{equation*}
\end{theorem}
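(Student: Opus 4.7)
The plan is to invoke Lemma~\ref{lemma:primal_dual_regret_relation}, which turns the entire task into producing a bound $\hat{\Reg}_{\td}(T)=\Ord(\sqrt{(V_{f,T}+V_{g,T})T})$ on the dual regret of Alg.~\ref{alg:safe_dual_alg_warm_start}; the stated primal bound then follows immediately since the $\max$ in that lemma is dominated by the $\sqrt{(V_{f,T}+V_{g,T})T}$ term. Because the step size is dichotomous, I would split the horizon into the interleaved safe phases $\T^S_1,\dots,\T^S_n$ (with $\gamma_t=\mu/L_g^2$) and danger phases $\T^D_1,\dots,\T^D_m$ (with $\gamma_t=2/\mu_d$), and bound the dual regret contribution of each family separately.

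For the safe phases, the algorithm is exactly online projected gradient ascent with constant step size $\mu/L_g^2 = 1/(L_g^2/\mu)$ on the sequence of $(L_g^2/\mu)$-smooth, locally $\mu_d$-strongly concave functions $\td_t$ (Corollary~\ref{corollary:lipschitz_cont_of_dual} and Lemma~\ref{lemma:local_str_conc_of_dual}). I would then plug into the standard dynamic-regret analysis of OGA on smooth strongly (con)cave functions, obtaining a bound of the form $\Ord\!\bigl(1+\sum_{t}|\tlambda_t^*-\tlambda_{t-1}^*|\bigr)$, where the sum only ranges over safe-phase rounds. By Corollary~\ref{corollary:bounded_distance_between_lambda_tlambda}, this path length is controlled by $(2/\mu_d)\sum_t\hdelta_t$, and combining Lemma~\ref{lemma:slowly_changing_dual_gradients} with Cauchy--Schwarz applied to $\sum_t\sqrt{\max_x|f_t(x)-f_{t-1}(x)|+\hlambda\delta}$ yields $\sum_t\hdelta_t=\Ord\!\bigl(V_{g,T}+\sqrt{T(V_{f,T}+V_{g,T})}\bigr)=\Ord\!\bigl(\sqrt{T(V_{f,T}+V_{g,T})}\bigr)$, matching the target order.

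For the danger phases, I would use the overshoot analysis already present in the proof of Theorem~\ref{theorem:safe_stepsize_bounds}: each danger step lands at $\lambda_t\geq\tlambda_{t-1}^*$, and smoothness of $\td_{t-1}$ around $\tlambda_{t-1}^*$ (where the gradient vanishes) bounds $\nabla\td_{t-1}(\lambda_{t-1})$, and hence the jump size, by $\Ord(\hdelta_t)$ — since $\nabla\td_{t-1}(\lambda_{t-1})$ is positive only because the constraint/loss has drifted by at most $\hdelta_t$ from step $t{-}1$ to step $t$ (Lemma~\ref{lemma:slowly_changing_dual_gradients}). Then, using $(L_g^2/\mu)$-smoothness of $\td_t$ around $\tlambda_t^*$ together with the triangle inequality $|\lambda_t-\tlambda_t^*|\leq|\lambda_t-\tlambda_{t-1}^*|+|\tlambda_{t-1}^*-\tlambda_t^*|$ and Corollary~\ref{corollary:bounded_distance_between_lambda_tlambda}, the per-round regret $\td_t(\tlambda_t^*)-\td_t(\lambda_t)$ in a danger step is bounded by a constant multiple of $\hdelta_t^2$. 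Since $\hdelta_t=\Ord(1)$ under Assumptions~\ref{assum:bounded_set}--\ref{assum:constr_conv_lipsc}, we have $\sum_t\hdelta_t^2=\Ord(\sum_t\hdelta_t)$, so the danger-phase contribution is of the same $\Ord(\sqrt{T(V_{f,T}+V_{g,T})})$ order.

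The main obstacle, and where I expect the bulk of the technical work, is the tight per-step control in the danger phase: the step size $2/\mu_d$ was chosen to enforce safety rather than to minimize regret, so one must show carefully that the aggressive jumps do not accumulate regret beyond what the slow evolution of the constraints and losses warrants, and that phase transitions can be glued together without double-counting. Once the two phase bounds are combined, one obtains $\hat{\Reg}_{\td}(T)=\Ord(\sqrt{(V_{f,T}+V_{g,T})T})$, and Lemma~\ref{lemma:primal_dual_regret_relation} delivers the stated primal regret; safety is already guaranteed by Theorem~\ref{theorem:safe_stepsize_bounds} since the chosen $\gamma_t$ satisfy both bounds with equality.
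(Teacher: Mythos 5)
Your skeleton coincides with the paper's: reduce everything to the dual regret via Lemma~\ref{lemma:primal_dual_regret_relation}, split the horizon into safe and danger phases, and in the danger phase combine the overshoot property $\tlambda_{t-2}^*\leq\lambda_{t-1}\leq\tlambda_{t-1}^*$, the drift bound $0<\nabla\td_{t-1}(\lambda_{t-1})\leq\hdelta_{t-1}$ (which comes from the previous round's safety $\nabla\td_{t-2}(\lambda_{t-1})\leq0$ plus Lemma~\ref{lemma:slowly_changing_dual_gradients}, not from smoothness around $\tlambda_{t-1}^*$), and Corollary~\ref{corollary:bounded_distance_between_lambda_tlambda} to get per-step dual regret $\Ord(\hdelta_t^2+\hdelta_{t-1}^2)$ — this is essentially the paper's danger-phase argument. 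One slip there: ``$\hdelta_t=\Ord(1)$'' does not follow from Assumptions~\ref{assum:bounded_set}--\ref{assum:constr_conv_lipsc} (Lipschitzness of each $f_t$ bounds $|f_t(x)-f_t(y)|$, not $\max_x|f_t(x)-f_{t-1}(x)|$, which can be as large as $V_{f,T}$ in a single round); it is also unnecessary, since expanding the square gives $\sum_t\hdelta_t^2=\Ord(\delta^2T+V_{f,T}+\delta T)=\Ord(V_{f,T}+V_{g,T})$ directly, which is exactly the paper's danger-phase total.

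The genuine gap is the safe phase. You invoke the standard $\Ord\bigl(1+\sum_t|\tlambda_t^*-\tlambda_{t-1}^*|\bigr)$ dynamic-regret bound for OGA on smooth, strongly concave losses, but Lemma~\ref{lemma:local_str_conc_of_dual} only gives \emph{local} strong concavity, on $\{\lambda:\,g_t(x_{t,\lambda}^*)\geq -G/2\}$, while safe-phase iterates satisfy $g_{t-1}(x_{t-1,\lambda_{t-1}}^*)\leq-\delta$ and can sit well outside this region; without strong concavity along the iterate-to-optimizer segments the geometric contraction of $|\lambda_t-\tlambda_t^*|$ that underlies the path-length bound is unavailable, so the cited theorem does not apply as stated. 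Moreover, the additive $\Ord(1)$ term in that bound is per phase, and the number of phases can be $\Theta(T)$; you would need to show each safe phase starts at distance $\Ord(\hdelta/\mu_d)$ from its optimum (via the danger-phase overshoot bound, or the warm start for the first phase), which you only gesture at. The paper's proof avoids contraction entirely: it converts the per-step safe-phase regret into $\frac{1}{2\mu_d}z_\tau^2$ with $z_\tau=-\nabla\td_\tau(\lambda_\tau)$, and then uses the two bespoke properties of Lemma~\ref{lemma:helpful_properties_for_case1}: (A) within a safe phase the gradients telescope under the constant step $\mu/L_g^2$, giving $\sum_\tau z_\tau=\frac{L_g^2}{\mu}(\lambda_1-\lambda_{\T_i^S+1})\leq\hlambda L_g^2/\mu$, and (B) $\sum_\tau z_\tau\leq\frac{6L_g^2}{\mu\mu_d}\sum_\tau\hdelta_\tau$ because each safe phase starts with a small gradient and the dual gradients drift slowly; combining $\sum_\tau z_\tau^2\leq(\sum_\tau z_\tau)^2$ with (A) and (B) yields the $\Ord(\sqrt{TV_{f,T}}+\sqrt{V_{g,T}T})$ safe-phase bound. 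To make your plan work you would have to either prove that safe-phase iterates stay in the strongly concave region (not guaranteed, and not something the step size enforces) or replace the black-box OGA bound with an argument of this telescoping-plus-drift type.
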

\begin{proofsketch}
    We analyze and bound the dual regret in each phase separately, then we use these results to bound the primal regret using Lemma \ref{lemma:primal_dual_regret_relation} (see the full proof in Appendix \ref{appendix:full_regret_analysis_our_method}).

    \paragraph{The Danger Phase.} 
    We analyze the total dual regret over all $m$ danger phases. We do so by first bounding the single-step regret incurred at some step $t$ during any danger phase, defined as:
    \begin{equation*}
        r_{\td,t} = \td_t(\tlambda_t^*) - \td_t(\lambda_t).
    \end{equation*}
    Note that, by definition of the "danger phase", $\nabla \td_{t-1}(\lambda_{t-1}) > 0$, and thus by Theorem~\ref{theorem:safe_stepsize_bounds}, Alg.~\ref{alg:safe_dual_alg_warm_start} with $\gamma_t = 2/\mu_d$ ensures safety .
    Additionally, note that by Lemma \ref{lemma:slowly_changing_dual_gradients}, for any step $t$ we have $\nabla \td_t(\lambda_t) \leq \nabla \td_{t-1}(\lambda_t) + \hdelta_t \leq \hdelta_t$,
    where the second inequality follows from Lemma \ref{lemma:safety_in_terms_of_dual} since Alg.~\ref{alg:safe_dual_alg_warm_start} ensures safety by Theorem \ref{theorem:safe_stepsize_bounds}.
    Now, we bound the single-step regret:
    \begin{align}\label{eq:single_step_dual_reg_bound}
        r_{\td,t} &= \td_t(\tlambda_t^*) - \td_t(\lambda_t) \overset{(1)}{\leq} \langle \nabla\td_t(\lambda_t), \tlambda_t^* - \lambda_t \rangle \\
        &\leq |\nabla \td_t(\lambda_t)| \cdot |\tlambda_t^* - \lambda_t| \overset{(2)}{\leq} \hdelta_t |\tlambda_t^* - \lambda_t|,
    \end{align}
    where (1) follows by the concavity of $\td_t(\lambda)$ 
    and (2) follows since $0<\nabla \td_t(\lambda_t)\leq\hdelta_t$.
    
    Now, before bounding $|\tlambda_t^* - \lambda_t|$, note that $\nabla \td_{t-1}(\lambda_{t-1}) > 0$, by definition of the "danger phase", implies $\lambda_{t-1} \leq \tlambda_{t-1}^*$ since $\nabla \td_{t-1}(\lambda)$ is monotonically non-increasing and $\nabla \td_{t-1}(\tlambda_{t-1}^*) = 0$. Moreover, the safety criterion in Lemma \ref{lemma:safety_in_terms_of_dual} implies that $\nabla \td_{t-2}(\lambda_{t-1}) \leq 0$ which similarly implies $\lambda_{t-1} \geq \tlambda_{t-2}^*$. Thus, in total we have $\tlambda_{t-2}^* \leq \lambda_{t-1} \leq \tlambda_{t-1}^*$.
    Now, we bound $|\tlambda_t^* - \lambda_t|$:
    \begin{align}
        |\tlambda_t^* - \lambda_t| &\overset{(1)}{=} \left|\tlambda_t^* - (\lambda_{t-1} + \frac{2}{\mu_d} \nabla\td_{t-1}(\lambda_{t-1}))\right| \nonumber\\
        &\overset{(2)}{\leq} |\tlambda_t^* - \tlambda_{t-1}^*| + |\tlambda_{t-1}^* - \lambda_{t-1}| + \frac{2}{\mu_d} \hdelta_{t-1} \nonumber
        \\
        &\overset{(3)}{\leq} |\tlambda_t^* - \tlambda_{t-1}^*| + |\tlambda_{t-1}^* - \tlambda_{t-2}^*| + \frac{2}{\mu_d} \hdelta_{t-1} \nonumber\\
        &\overset{(4)}{\leq} \frac{2}{\mu_d}\hdelta_t + \frac{4}{\mu_d}\hdelta_{t-1}, \label{eq:case_2_iterate_bound_pt2}
    \end{align}
    where (1) follows by Alg.~\ref{alg:safe_dual_alg_warm_start}, (2) by the triangle inequality and since $0<\nabla \td_{t-1}(\lambda_{t-1})\leq\hdelta_{t-1}$, (3) since $\tlambda_{t-2}^* \leq \lambda_{t-1} \leq \tlambda_{t-1}^*$, and (4) by Corollary \ref{corollary:bounded_distance_between_lambda_tlambda}.

    Now, to analyze the total danger phase dual regret, we first set a new counter for each danger phase $j$, denoted by $\tau = 1,2,...,\T^D_j$. Note that this counter resets after every phase.
    Thus, the total dual regret incurred over all $m$ danger phases, which we denote by $\Reg_{\td}^D$, is bounded as follows:
    \begin{align*}
        \Reg_{\td}^{D} &= \sum_{j=1}^{m}\sum_{\tau=1}^{\T^{D}_j} r_{\td,\tau} \overset{(1)}{\leq} \sum_{j=1}^{m}\sum_{\tau=1}^{\T^{D}_j} \hdelta_\tau|\tlambda_{\tau}^* - \lambda_{\tau}|\\ 
        &\overset{(2)}{\leq} \frac{2}{\mu_d}\sum_{j=1}^{m}\sum_{\tau=1}^{\T^{D}_j} \hdelta_\tau^2 + 2\hdelta_\tau\hdelta_{\tau-1}
        \overset{(3)}{\leq} \frac{2}{\mu_d}\sum_{j=1}^{m}\sum_{\tau=1}^{\T_j^{D}}2\hdelta_\tau^2 + \hdelta_{\tau-1}^2,
    \end{align*}
    where (1) follows by Eq.~(\ref{eq:single_step_dual_reg_bound}), (2) by Eq.~(\ref{eq:case_2_iterate_bound_pt2}), and (3) since $\forall a,b\in\R: 2ab \leq a^2 + b^2$. Plugging in the expression for $\hdelta_{\tau}$ (Lemma \ref{lemma:slowly_changing_dual_gradients}):
    \begin{equation*}
        \hdelta_{\tau} = \delta + L_g\sqrt{\frac{2}{\mu}\left(\max_{x\in\mathcal{X}} |f_{\tau}(x) - f_{\tau-1}(x)| + \hlambda\delta\right)},
    \end{equation*}
    then using the fact that $\sqrt{X+Y} \leq \sqrt{X}+\sqrt{Y}$, $\forall X,Y \geq 0$, applying Jensen's inequality, and noting that $\sum_{j=1}^{m}\sum_{\tau=1}^{\T_j^D}1 \leq T$, $V_{f,T} = o(T)$, and $\delta=o(T^{-\alpha})$ (since $V_{g,T}=o(T)$), we have (see App.~ \ref{appendix:full_regret_analysis_our_method}):
    \begin{align*}
        \Reg_{\td}^D &\leq \frac{6}{\mu_d}\Bigg(\delta^2 T + 2L_g\sqrt{\frac{2}{\mu}}\delta\sqrt{TV_{f,T}} + 2L_g\sqrt{\frac{2\hlambda}{\mu}}\delta^{\frac{3}{2}}T +\\
        &\quad+ L^2_g\frac{2}{\mu}V_{f,T} + L^2_g\frac{2}{\mu}\hlambda\delta T \Bigg)
        = \Ord(\delta T + V_{f,T}).
    \end{align*}

    \paragraph{The Safe Phase.} We analyze the total regret incurred over all $n$ safe phases, where each safe phase $i$ lasts for $\T^S_i$ steps.
    We set a new counter for the steps during each safe phase $i$, denoted by $\tau = 1,2,...,\T^S_i$, which resets after every phase. Note that throughout any safe phase $i$, $\forall \tau\in[\T^S_i]$, $\nabla \td_{\tau-1}(\lambda_{\tau-1}) \leq 0$, and thus Alg.~\ref{alg:safe_dual_alg_warm_start} with $\gamma_t = \mu/L^2_g$ ensures safety by Theorem \ref{theorem:safe_stepsize_bounds}.
    To analyze the dual regret, we use the following lemma which provides two helpful properties (see Appendix \ref{appendix:proof_lemma_helpful_properties} for the proof). Throughout this analysis we denote $z_\tau = -\nabla \td_{\tau}(\lambda_{\tau})$.
    \begin{lemma}\label{lemma:helpful_properties_for_case1}
        For any $i\in[n]$, Alg.~\ref{alg:safe_dual_alg_warm_start} ensures:
        (A) $\sum_{\tau=1}^{\T^S_i} z_{\tau} \leq \frac{\hlambda L_g^2}{\mu}$,$\quad$ (B) $\sum_{\tau=1}^{\T^S_i} z_{\tau} \leq \frac{6L_g^2}{\mu\mu_d} \sum_{\tau=1}^{\T^S_i+1}\hdelta_{\tau}$.
    \end{lemma}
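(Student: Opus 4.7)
The plan is to prove both inequalities by telescoping arguments tailored to the safe-phase dynamics, exploiting the fact that within a safe phase the step size is the constant $\gamma_\tau = \mu/L_g^2$ and the safe-phase condition $\nabla \td_{\tau-1}(\lambda_{\tau-1}) \leq 0$ forces $z_{\tau-1} \geq 0$, so $\lambda_\tau$ is non-increasing. The two bounds differ in the potential function being telescoped.

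For (A), in the absence of projection to $\R_+$, the update rule gives the exact identity $\lambda_{\tau-1} - \lambda_\tau = (\mu/L_g^2) z_{\tau-1}$. Telescoping this over $\tau = 1, \ldots, \T^S_i$ yields $(\mu/L_g^2) \sum_{\tau=1}^{\T^S_i} z_{\tau-1} = \lambda_0 - \lambda_{\T^S_i} \leq \lambda_0$. The bound $\lambda_0 \leq \hlambda$ holds at the first safe phase via the warm start (Lemma~\ref{lemma:universal_bound_dual_baseline}) and at subsequent phases via an inductive analysis of the preceding danger phase. A standard index shift from $\sum_{\tau=1}^{\T^S_i} z_{\tau-1}$ to $\sum_{\tau=1}^{\T^S_i} z_\tau$, absorbing the bounded difference $z_{\T^S_i} - z_0$ via the slowly changing dual functions (Lemma~\ref{lemma:slowly_changing_dual_gradients}), then yields (A).

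For (B), I would introduce the tracking error $u_\tau := \lambda_\tau - \tlambda_\tau^*$, which is non-negative during the safe phase by the safety criterion of Lemma~\ref{lemma:safety_in_terms_of_dual} combined with the monotonicity of $\nabla \td_\tau$. The update rule implies the one-step identity $(\mu/L_g^2) z_{\tau-1} = u_{\tau-1} - u_\tau + (\tlambda_{\tau-1}^* - \tlambda_\tau^*)$. Summing from $\tau = 1$ to $\T^S_i$ and invoking Corollary~\ref{corollary:bounded_distance_between_lambda_tlambda} to bound $|\tlambda_{\tau-1}^* - \tlambda_\tau^*| \leq 2\hdelta_\tau/\mu_d$ gives
\[
\frac{\mu}{L_g^2} \sum_{\tau=1}^{\T^S_i} z_{\tau-1} \;\leq\; u_0 + |u_{\T^S_i}| + \frac{2}{\mu_d} \sum_{\tau=1}^{\T^S_i} \hdelta_\tau.
\]
The initial tracking error $u_0$ is controlled by the overshoot bound at the end of the preceding danger phase, giving $u_0 \leq \frac{2}{\mu_d}\hdelta_{t_0} + \frac{4}{\mu_d}\hdelta_{t_0-1}$ (cf.\ Eq.~(\ref{eq:case_2_iterate_bound_pt2})), while $|u_{\T^S_i}| \leq \frac{2}{\mu_d}\hdelta_{\T^S_i}$ by the slow change of $\tlambda^*$. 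Absorbing these boundary terms by extending the sum on the right-hand side to $\T^S_i + 1$, and performing the same index shift as in (A), yields (B) with the stated constant $6 L_g^2/(\mu \mu_d)$.

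The main obstacle is carefully handling the projection onto $\R_+$, which can break the exact one-step identity when $\lambda_\tau$ reaches zero, together with the bookkeeping needed to pass from the naturally arising $\sum_{\tau=1}^{\T^S_i} z_{\tau-1}$ to the lemma's $\sum_{\tau=1}^{\T^S_i} z_\tau$, and to control both boundary tracking errors $u_0$ and $u_{\T^S_i}$ purely in terms of $\hdelta$'s via the slowly changing dual functions.
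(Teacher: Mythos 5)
Your proof of (A) is essentially the paper's argument: telescope the safe-phase update with the constant step $\mu/L_g^2$ and bound the phase's initial dual iterate by $\hlambda$. The one wrinkle is your concluding ``index shift'': the paper indexes the telescope so that $\sum_{\tau=1}^{\T^S_i} z_\tau = \frac{L_g^2}{\mu}\left(\lambda_1-\lambda_{\T^S_i+1}\right)$ holds directly, with no leftover term, whereas your plan to pass from $\sum_\tau z_{\tau-1}$ to $\sum_\tau z_\tau$ by ``absorbing'' $z_{\T^S_i}-z_0$ via Lemma~\ref{lemma:slowly_changing_dual_gradients} cannot yield (A) as stated: that bound has the exact constant $\hlambda L_g^2/\mu$ with no room for additive $\hdelta$ terms. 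The fix is trivial --- align the phase counter with the lemma's sum, after which no shift is needed --- but as written that step would fail. For (B), your route is genuinely different from the paper's and arguably cleaner: you telescope the tracking error $u_\tau=\lambda_\tau-\tlambda_\tau^*$ together with the comparator drift, control the drift by Corollary~\ref{corollary:bounded_distance_between_lambda_tlambda}, the initial error by the danger-phase overshoot bound Eq.~(\ref{eq:case_2_iterate_bound_pt2}) (or the warm start for the first phase), and the terminal error by slow change of $\tlambda^*$; note you only need $-u_{\T^S_i}$ small (indeed $u_{\T^S_i}\geq 0$ inside the phase, so the term can be dropped), not a bound on $|u_{\T^S_i}|$, which as an absolute-value claim would be false. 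The paper instead relates $\frac{L_g^2}{\mu}(\lambda_1-\lambda_{\T^S_i+1})$ to $-\nabla\td_{\T^S_i+1}(\lambda_1)$ and then uses Lipschitz continuity of the dual gradient (Corollary~\ref{corollary:lipschitz_cont_of_dual}) to convert to $|\lambda_1-\tlambda_{\T^S_i+1}^*|$ before invoking the same drift and overshoot bounds; your decomposition reaches the same quantities without that detour, at the price of slightly different constant bookkeeping (each $\hdelta_\tau$ must end up with coefficient at most $6L_g^2/(\mu\mu_d)$, which requires the same mild looseness the paper uses).

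Two caveats you raise are shared with, rather than resolved by, the paper. The bound $\lambda_1\leq\hlambda$ at the start of a safe phase that follows a danger phase is simply asserted in the paper's proof of (A); your ``inductive analysis of the preceding danger phase'' would need to be spelled out, since the final danger-phase update can overshoot $\tlambda^*$ by up to $2\hdelta/\mu_d$. Likewise, the projection onto $\R_+$ is treated as inactive in the paper's telescoping as well, so flagging it is to your credit but neither argument handles the clipped case.
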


    \noindent Now, we bound the total dual regret incurred over all $n$ safe phases, which we denote by $\Reg_{\td}^S$:
    \begin{align*}
        \Reg_{\td}^S &= \sum_{i=1}^{n}\sum_{{\tau}=1}^{\T^S_i} \td_{\tau}(\tlambda_{\tau}^*) - \td_{\tau}(\lambda_{\tau})\\ 
        &\overset{(1)}{\leq} \sum_{i=1}^{n}\sum_{{\tau}=1}^{\T^S_i} \langle -\nabla \td_{\tau}(\lambda_{\tau}), \lambda_{\tau} - \tlambda_{\tau}^* \rangle -\frac{\mu_d}{2} |\lambda_{\tau} - \tlambda_{\tau}^*|^2
        \\
        &= \sum_{i=1}^{n}\sum_{{\tau}=1}^{\T^S_i}\Bigg( -\frac{1}{2}\left| \sqrt{\mu_d}(\tlambda_{\tau}^* - \lambda_{\tau}) - \frac{1}{\sqrt{\mu_d}}\nabla \td_{\tau}(\lambda_{\tau}) \right|^2 \\
        &\quad\quad\quad\quad\quad\;\,+ \frac{1}{2\mu_d}|\nabla \td_{\tau}(\lambda_{\tau})|^2\Bigg) \\
        &\leq \frac{1}{2\mu_d}\sum_{i=1}^{n}\sum_{{\tau}=1}^{\T^S_i} z_{\tau}^2 
        \overset{(2)}{\leq} \frac{1}{2\mu_d}
        \sum_{i=1}^{n}\left(\sum_{{\tau}=1}^{\T^S_i} z_{\tau} \right)^2 \\
        &\overset{(3)}{\leq} \frac{1}{2\mu_d}\frac{\hlambda L^2_g}{\mu} \sum_{i=1}^{n}\sum_{{\tau}=1}^{\T^S_i} z_{\tau}
        \overset{(4)}{\leq} 3\hlambda\left(\frac{L^2_g}{\mu\mu_d}\right)^2 \sum_{i=1}^{n}\sum_{\tau=1}^{\T^S_i+1}\hdelta_{\tau}
    \end{align*}
    where (1) follows by Lemma \ref{lemma:local_str_conc_of_dual}, (2) since $z_{\tau} \geq 0, \forall \tau\in[\T^S_i]$, and (3) and (4) by properties (A) and (B) in Lemma \ref{lemma:helpful_properties_for_case1}, respectively.
    Following similar steps as in the previous analysis, and denoting $\beta = 3\hlambda (L_g^2/\mu\mu_d)^2$:
    \begin{align*}
        \Reg_{\td}^S &\leq \beta \left(\delta T + \sqrt{\frac{2}{\mu}}L_g \sqrt{TV_{f,T}} + \sqrt{\frac{2\hlambda}{\mu}}L_g \sqrt{\delta}T \right) \\
        &= \Ord\left(\sqrt{TV_{f,T}} + \sqrt{\delta}T\right)
    \end{align*}
    \paragraph{Putting It All Together.} Combining the dual regret of all danger and safe phases, the total dual regret is bounded as follows:
    \begin{align*}
        \Reg_{\td}(T) = \Reg^S_{\td} + \Reg^D_{\td} \overset{(1)}{=} \Ord\left(\sqrt{(V_{f,T}+V_{g,T})T}\right),
    \end{align*}
    where (1) follows since $V_{f,T}=o(T)$, $V_{g,T}=\delta T$, and $\delta = o(T^{-\alpha})$, with $\alpha>0$. Now, recall that by Lemma \ref{lemma:primal_dual_regret_relation}, $\Reg_f(T) = \Ord\left(\max\left\{ \hat{\Reg}_{\td}(T), \sqrt{(V_{f,T}+V_{g,T})T}\right\}\right)$.
    Thus, by plugging in the bound on $\Reg_{\td}(T)$, we obtain the following bound on the primal regret:
    \begin{align*}
        R_f(T) = \Ord\left(\sqrt{(V_{f,T}+V_{g,T})T}\right).
    \end{align*}
\end{proofsketch}

\section{EXTENSION TO THE CONVEX CASE}
We extend our results to the convex case, where the loss functions are convex but \emph{not} necessarily strongly convex.
We use the following approach, inspired by \cite{Allen-Zhu16_convex_trick}. Let $\{\hf_t\}_{t=1}^T$, where $\hf_t:\R^D\rightarrow \R, \forall t\in[T]$, be convex but \emph{not necessarily} strongly convex functions. We define the following surrogate functions:
\begin{equation}
    f_t(x) = \hf_t(x) + \frac{\mu}{2}\|x\|^2, \forall t\in[T].
\end{equation}
where $\mu>0$. Note that, by definition, $f_t$ is $\mu$-strongly convex, $\forall t\in[T]$.

Note that Theorem \ref{theorem:guarantees_naive_alg} and Theorem \ref{theorem:regret_analyis_our_method} provide bounds on the regret in terms of $\{f_t\}_{t=1}^T$.
Furthermore, we show that the regret in terms of $\{\hf_t\}_{t=1}^T$ can be related to the regret in terms of $\{f_t\}_{t=1}^T$. Thus, by leveraging the existing bounds and optimizing over $\mu$, we obtain $\Ord\left(\left(V_{f,T}+V_{g,T}\right)^\frac{1}{3}T^\frac{2}{3}\right)$ and $\Ord\left(\left(V_{f,T}+V_{g,T}\right)^\frac{1}{7}T^\frac{6}{7}\right)$ regret (in terms of $\{\hf\}_{t=1}^T$) for  Alg.~\ref{alg:naive} and Alg.~\ref{alg:safe_dual_alg_warm_start}, respectively. Please refer to Appendix \ref{appendix:convex_case} for the full analysis and proof.
\section{CONCLUSION}\label{sec:conclusion}
We presented the first theoretical guarantees for safe online learning problems with dynamically evolving constraints, which are more applicable to real-world scenarios.
Our results address a significant gap in research on constrained OCO and demonstrate that safety, via zero constraint violation, and sublinear regret can be achieved simultaneously.
This is accomplished through a novel dual approach by transforming the primal safety criterion to the dual space and employing OGA with a dichotomous learning rate.
Furthermore, we established an intriguing relationship between the primal regret and the dual regret and leveraged it to bound the primal regret.
Our work is the first to guarantee absolute safety, in the form of zero constraint violation, and sublinear primal regret.
An interesting direction for future research is to explore lower bounds for this setting.
\section*{Acknowledgements}
This research was partially supported by Israel PBC-VATAT, by the Technion Artificial Intelligent Hub (Tech.AI), and by the Israel Science Foundation (grant No. 3109/24).
The first author would like to thank VATAT (through the Israel Council for Higher Education) for supporting this research.

\bibliography{references}

\begin{thebibliography}{}

\bibitem[Allen-Zhu and Hazan, 2016]{Allen-Zhu16_convex_trick}
Allen-Zhu, Z. and Hazan, E. (2016).
\newblock Optimal black-box reductions between optimization objectives.
\newblock In {\em Advances in Neural Information Processing Systems}, volume~29.

\bibitem[Berkenkamp et~al., 2020]{berkenkamp2020bayesian}
Berkenkamp, F., Krause, A., and Schoellig, A.~P. (2020).
\newblock Bayesian optimization with safety constraints: Safe and automatic parameter tuning in robotics.

\bibitem[Bertsekas, 1997]{bertsekas1999}
Bertsekas, D.~P. (1997).
\newblock Nonlinear programming.
\newblock {\em Journal of the Operational Research Society}, 48(3):334--334.

\bibitem[Besbes et~al., 2014]{Besbes_bandits_2014}
Besbes, O., Gur, Y., and Zeevi, A. (2014).
\newblock Stochastic multi-armed-bandit problem with non-stationary rewards.
\newblock In {\em Advances in Neural Information Processing Systems}, volume~27.

\bibitem[Besbes et~al., 2015]{Besbes_2015}
Besbes, O., Gur, Y., and Zeevi, A. (2015).
\newblock Non-stationary stochastic optimization.
\newblock {\em Operations Research}, 63(5):1227–1244.

\bibitem[Cao and Liu, 2019]{Cao2019}
Cao, X. and Liu, K. J.~R. (2019).
\newblock Online convex optimization with time-varying constraints and bandit feedback.
\newblock {\em IEEE Transactions on Automatic Control}, 64(7):2665--2680.

\bibitem[Cesa-Bianchi and Lugosi, 2006]{cesa_bianchi2006}
Cesa-Bianchi, N. and Lugosi, G. (2006).
\newblock Prediction, learning, and games.

\bibitem[Chen et~al., 2017]{Chen2017}
Chen, T., Ling, Q., and Giannakis, G.~B. (2017).
\newblock An online convex optimization approach to proactive network resource allocation.
\newblock {\em Trans. Sig. Proc.}, 65(24):6350–6364.

\bibitem[Guo et~al., 2022]{Guo_2022}
Guo, H., Liu, X., Wei, H., and Ying, L. (2022).
\newblock Online convex optimization with hard constraints: Towards the best of two worlds and beyond.
\newblock In {\em Neural Information Processing Systems}.

\bibitem[Hazan, 2023]{hazan2023introduction}
Hazan, E. (2023).
\newblock Introduction to online convex optimization.

\bibitem[Hazan et~al., 2007]{Hazan2007_strcvx}
Hazan, E., Agarwal, A., and Kale, S. (2007).
\newblock Logarithmic regret algorithms for online convex optimization.
\newblock In {\em Mach Learn}, volume~69, page 169–192.

\bibitem[Hutchinson and Alizadeh, 2024]{Hutchinson_safety_2024}
Hutchinson, S. and Alizadeh, M. (2024).
\newblock Safe online convex optimization with first-order feedback.
\newblock {\em 2024 American Control Conference (ACC)}, pages 1--7.

\bibitem[Jadbabaie et~al., 2015]{jadbabaie15_VT}
Jadbabaie, A., Rakhlin, A., Shahrampour, S., and Sridharan, K. (2015).
\newblock {Online Optimization : Competing with Dynamic Comparators}.
\newblock In {\em Proceedings of the Eighteenth International Conference on Artificial Intelligence and Statistics}, volume~38 of {\em Proceedings of Machine Learning Research}, pages 398--406. PMLR.

\bibitem[Jenatton et~al., 2016]{jenatton_2016}
Jenatton, R., Huang, J., and Archambeau, C. (2016).
\newblock Adaptive algorithms for online convex optimization with long-term constraints.
\newblock In {\em Proceedings of The 33rd International Conference on Machine Learning}, volume~48 of {\em Proceedings of Machine Learning Research}, pages 402--411. PMLR.

\bibitem[Kolev et~al., 2023]{Kolev}
Kolev, P., Martius, G., and Muehlebach, M. (2023).
\newblock Online learning under adversarial nonlinear constraints.
\newblock In Oh, A., Naumann, T., Globerson, A., Saenko, K., Hardt, M., and Levine, S., editors, {\em Advances in Neural Information Processing Systems}, volume~36, pages 53227--53238. Curran Associates, Inc.

\bibitem[Liu et~al., 2022]{Liu2022}
Liu, Q., Wu, W., Huang, L., and Fang, Z. (2022).
\newblock Simultaneously achieving sublinear regret and constraint violations for online convex optimization with time-varying constraints.
\newblock {\em SIGMETRICS Perform. Eval. Rev.}, 49(3):4–5.

\bibitem[Mahdavi et~al., 2012]{mahdavi_2012}
Mahdavi, M., Jin, R., and Yang, T. (2012).
\newblock Trading regret for efficiency: Online convex optimization with long term constraints.
\newblock {\em Journal of Machine Learning Research}, 13(81):2503--2528.

\bibitem[Mannor et~al., 2009]{Mannor2009}
Mannor, S., Tsitsiklis, J.~N., and Yu, J.~Y. (2009).
\newblock Online learning with sample path constraints.
\newblock {\em J. Mach. Learn. Res.}, 10:569–590.

\bibitem[Mokhtari et~al., 2016]{Mokhtari16}
Mokhtari, A., Shahrampour, S., Jadbabaie, A., and Ribeiro, A. (2016).
\newblock Online optimization in dynamic environments: Improved regret rates for strongly convex problems.
\newblock {\em CDC}.

\bibitem[Neely and Yu, 2017]{Yu_and_Neely_2017}
Neely, M.~J. and Yu, H. (2017).
\newblock Online convex optimization with time-varying constraints.
\newblock {\em arXiv: Optimization and Control}.

\bibitem[Sui et~al., 2015]{pmlr-v37-sui15}
Sui, Y., Gotovos, A., Burdick, J., and Krause, A. (2015).
\newblock Safe exploration for optimization with gaussian processes.
\newblock In Bach, F. and Blei, D., editors, {\em Proceedings of the 32nd International Conference on Machine Learning}, volume~37 of {\em Proceedings of Machine Learning Research}, pages 997--1005, Lille, France. PMLR.

\bibitem[Usmanova et~al., 2023]{usmanova2023log}
Usmanova, I., As, Y., Kamgarpour, M., and Krause, A. (2023).
\newblock Log barriers for safe black-box optimization with application to safe reinforcement learning.

\bibitem[Yi et~al., 2021]{Yi_Johansson_2021}
Yi, X., Li, X., Yang, T., Xie, L., Chai, T., and Johansson, K. (2021).
\newblock Regret and cumulative constraint violation analysis for online convex optimization with long term constraints.
\newblock {\em International Conference on Machine Learning}, page 11998–12008.

\bibitem[Yu and Neely, 2020]{Yu_and_Neely_2019}
Yu, H. and Neely, M.~J. (2020).
\newblock A low complexity algorithm with o($\sqrt{T}$) regret and o(1) constraint violations for online convex optimization with long term constraints.
\newblock {\em Journal of Machine Learning Research}, 21(1):1--24.

\bibitem[Yuan and Lamperski, 2018]{Yuan_Lamperski_2018}
Yuan, J. and Lamperski, A. (2018).
\newblock Online convex optimization for cumulative constraints.
\newblock {\em Advances in Neural Information Processing Systems}.

\bibitem[Zinkevich, 2003]{zinkevich}
Zinkevich, M. (2003).
\newblock Online convex programming and generalized infinitesimal gradient ascent.
\newblock In {\em Proceedings of the Twentieth International Conference on International Conference on Machine Learning}, ICML'03, page 928–935. AAAI Press.

\end{thebibliography}
\section*{Checklist}



 \begin{enumerate}

 \item For all models and algorithms presented, check if you include:
 \begin{enumerate}
   \item A clear description of the mathematical setting, assumptions, algorithm, and/or model. [Yes]
   \item An analysis of the properties and complexity (time, space, sample size) of any algorithm. [Not Applicable]
   \item (Optional) Anonymized source code, with specification of all dependencies, including external libraries. [Not Applicable]
 \end{enumerate}

 \item For any theoretical claim, check if you include:
 \begin{enumerate}
   \item Statements of the full set of assumptions of all theoretical results. [Yes]
   \item Complete proofs of all theoretical results. [Yes]
   \item Clear explanations of any assumptions. [Yes]     
 \end{enumerate}

 \item For all figures and tables that present empirical results, check if you include:
 \begin{enumerate}
   \item The code, data, and instructions needed to reproduce the main experimental results (either in the supplemental material or as a URL). [Not Applicable]
   \item All the training details (e.g., data splits, hyperparameters, how they were chosen). [Not Applicable]
   \item A clear definition of the specific measure or statistics and error bars (e.g., with respect to the random seed after running experiments multiple times). [Not Applicable]
   \item A description of the computing infrastructure used. (e.g., type of GPUs, internal cluster, or cloud provider). [Not Applicable]
 \end{enumerate}

 \item If you are using existing assets (e.g., code, data, models) or curating/releasing new assets, check if you include:
 \begin{enumerate}
   \item Citations of the creator If your work uses existing assets. [Not Applicable]
   \item The license information of the assets, if applicable. [Not Applicable]
   \item New assets either in the supplemental material or as a URL, if applicable. [Not Applicable]
   \item Information about consent from data providers/curators. [Not Applicable]
   \item Discussion of sensible content if applicable, e.g., personally identifiable information or offensive content. [Not Applicable]
 \end{enumerate}

 \item If you used crowdsourcing or conducted research with human subjects, check if you include:
 \begin{enumerate}
   \item The full text of instructions given to participants and screenshots. [Not Applicable]
   \item Descriptions of potential participant risks, with links to Institutional Review Board (IRB) approvals if applicable. [Not Applicable]
   \item The estimated hourly wage paid to participants and the total amount spent on participant compensation. [Not Applicable]
 \end{enumerate}

 \end{enumerate}

\onecolumn
\appendix
\newpage
\section{Proofs of Section 2}\label{appendix:proofs_sec_2}
We first show a useful Lemma on the strong convexity and smoothness of the Lagrangians.
\subsection{Strong Convexity and Smoothness of the Lagrangian}\label{appendix:smooth_str_conv_lagrangian}
\begin{lemma}\label{lemma:smooth_str_conv_lagrangian}
    Under Assumptions \ref{assum:obj_smooth_strconv_lipsc}-\ref{assum:constr_conv_lipsc}, the Lagrangian $\Lag_t(x,\lambda) = f_t(x) + \lambda g_t(x)$ and the Lagrangian $\TLag_t(x,\lambda) = f_t(x) + \lambda (g_t(x) + \delta)$ are $\mu$-strongly convex and $M$-smooth in $x$, with $M = M_f + \lambda M_g$, $\forall t\in[T], \forall \lambda \geq 0$.
\end{lemma}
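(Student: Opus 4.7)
The plan is to derive both properties directly from the definitions using Assumptions \ref{assum:obj_smooth_strconv_lipsc} and \ref{assum:constr_conv_lipsc}, exploiting the fact that $\TLag_t$ differs from $\Lag_t$ only by the constant (in $x$) term $\lambda \delta$, which does not affect any convexity or smoothness property in the variable $x$. So it suffices to prove the claim for $\Lag_t(x,\lambda) = f_t(x) + \lambda g_t(x)$ and the result for $\TLag_t$ follows immediately.

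First I would apply the $\mu$-strong convexity of $f_t$ and the convexity (i.e., $0$-strong convexity) of $g_t$, both from Assumptions \ref{assum:obj_smooth_strconv_lipsc}-\ref{assum:constr_conv_lipsc}, to write, for any $x,y\in\mathcal{X}$,
\begin{equation*}
    f_t(y) \geq f_t(x) + \langle \nabla f_t(x), y-x\rangle + \tfrac{\mu}{2}\|y-x\|^2,
\end{equation*}
\begin{equation*}
    g_t(y) \geq g_t(x) + \langle \nabla g_t(x), y-x\rangle.
\end{equation*}
Multiplying the second inequality by $\lambda \geq 0$ and adding the two yields, after noting that $\nabla_x \Lag_t(x,\lambda) = \nabla f_t(x) + \lambda \nabla g_t(x)$,
\begin{equation*}
    \Lag_t(y,\lambda) \geq \Lag_t(x,\lambda) + \langle \nabla_x \Lag_t(x,\lambda), y-x\rangle + \tfrac{\mu}{2}\|y-x\|^2,
\end{equation*}
which is precisely $\mu$-strong convexity of $\Lag_t(\cdot,\lambda)$.

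Next, for smoothness, I would repeat the same procedure but using the upper quadratic bounds implied by $M_f$-smoothness of $f_t$ and $M_g$-smoothness of $g_t$. Weighting the inequality for $g_t$ by $\lambda \geq 0$ (so the bound is preserved in direction) and summing gives
\begin{equation*}
    \Lag_t(y,\lambda) \leq \Lag_t(x,\lambda) + \langle \nabla_x \Lag_t(x,\lambda), y-x\rangle + \tfrac{M_f+\lambda M_g}{2}\|y-x\|^2,
\end{equation*}
which is $(M_f+\lambda M_g)$-smoothness. Finally I would note that adding the constant $\lambda\delta$ leaves both gradients in $x$ and the quadratic remainders unchanged, so the identical bounds hold for $\TLag_t(x,\lambda)$. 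There is no real obstacle here; the lemma is a direct bookkeeping consequence of the assumptions, and the only thing to be careful about is to keep $\lambda \geq 0$ so that the inequality for $g_t$ is preserved under scaling.
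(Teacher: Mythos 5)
Your proof is correct and follows essentially the same route as the paper: sum the $\mu$-strong-convexity lower bound for $f_t$ with the convexity bound for $g_t$ scaled by $\lambda\geq 0$ (and the analogous upper bounds with $M_f$ and $\lambda M_g$ for smoothness), then observe that the additive term $\lambda\delta$, being constant in $x$, changes nothing for $\TLag_t$. The only cosmetic difference is that the paper handles $\TLag_t$ by substituting $g_t(x) \leftarrow g_t(x)+\delta$ rather than invoking the constant-shift argument, which is equivalent.
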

\begin{proof}
    We start with strong convexity. Since, $\forall t\in[T]$, $f_t$ is $\mu$-strongly convex and $g_t$ is convex, we have that for any $t\in[T]$, any $\lambda \geq 0$, and any $x,y\in\mathcal{X}$:
\begin{align}
    \Lag_t(y,\lambda) &= f_t(y) + \lambda g_t(y) \\
    &\geq f_t(x) + \langle \nabla f_t(x), y-x \rangle + \frac{\mu}{2}\norm{y-x}^2 + \lambda\left(g_t(x) + \langle \nabla g_t(x), y-x \rangle\right) \\
    &=f_t(x) + \lambda g_t(x) + \langle \nabla f_t(x) + \lambda \nabla g_t(x), y-x \rangle + \frac{\mu}{2}\norm{y-x}^2 \\
    &= \Lag_t(x,\lambda) + \langle \nabla_x \Lag_t(x,\lambda), y-x \rangle + \frac{\mu}{2}\norm{y-x}^2
\end{align}

Namely, $\Lag_t(x,\lambda)$ is $\mu$-strongly convex in $x$, $\forall t\in[T], \forall \lambda \geq 0$. Substituting $g_t(x) \leftarrow g_t(x) + \delta$ shows that $\TLag_t(x,\lambda)$, too, is $\mu$-strongly convex $\forall t\in[T], \forall \lambda \geq 0$ since $g_t(x) + \delta$ is convex as well.

\noindent We now prove the smoothness of the Lagrangian. For any $t\in[T]$, any $\lambda \geq 0$, and any $x,y\in\mathcal{X}$:
\begin{align}
    \Lag_t(y,\lambda) &= f_t(y) + \lambda g_t(y) \\
    & \leq f_t(x) + \langle \nabla f_t(x), y-x \rangle + \frac{M_f}{2}\norm{y-x}^2 +\\
    &\quad\quad + \lambda \left(g_t(x) + \langle \nabla g_t(x), y-x \rangle + \frac{M_g}{2}\norm{y-x}^2\right) \\
    &= f_t(x) + \lambda g_t(x) + \langle \nabla f_t(x) + \lambda \nabla g_t(x), y-x \rangle + \frac{M_f + \lambda M_g}{2}\norm{y-x}^2 \\
    &= \Lag_t(x) + \langle \nabla_x \Lag_t(x,\lambda), y-x \rangle + \frac{M}{2} \norm{y-x}^2
\end{align}
Namely, $\Lag_t(x,\lambda)$ is $M$-smooth in $x$, $\forall t\in[T], \forall \lambda \geq 0$. Substituting $g_t(x) \leftarrow g_t(x) + \delta$ shows that $\TLag_t(x,\lambda)$, too, is $M$-smooth $\forall t\in[T], \forall \lambda \geq 0$ since $g_t(x) + \delta$ is an $M_g$-smooth function as well.
\end{proof}

\subsection{Proof of Lemma \ref{lemma:universal_bound_dual_baseline}: A Universal Bound on the Optimal Dual Values}\label{appendix:universal_bound_lambda}
\begin{proof}
    For any point $x_t^0\in\mathcal{X}_t$ such that $g_t(x_t^0) < 0$ (such $x_t^0$ necessarily exists by Assumption \ref{assum:non_shallow_constr_and_strong_duality}), and by the optimality of $x_t^*$ and $\lambda_t^*$, we have:
    \begin{equation}
        \Lag_t(x_t^0,\lambda_t^*) \geq \Lag_t(x_t^*,\lambda_t^*)
    \end{equation}
Decomposing the Lagrangian:
    \begin{align}
        f_t(x_t^0) + \lambda_t^* g_t(x_t^0) &\geq f_t(x_t^*) + \lambda_t^* g_t(x_t^*) \\
        f_t(x_t^0) + \lambda_t^* g_t(x_t^0) &\geq f_t(x_t^*)
    \end{align}
where the second line is due to complementary slackness. Rearranging:
    \begin{align}
        \lambda_t^* &\leq \frac{f_t(x_t^0) - f_t(x_t^*)}{-g_t(x_t^0)}\leq \frac{L_f\norm{x_t^0 - x_t^*}}{-g_t(x_t^0)}\leq \frac{L_fR}{-g_t(x_t^0)}
    \end{align}
where the second inequality is by the $L_f$-Lipschitz continuity of the loss functions, and the last inequality is by Assumption \ref{assum:bounded_set} (bounded set).
Note that this bound holds for any $x_t^0$ such that $g_t(x_t^0) < 0$.
Furthermore, Assumption \ref{assum:non_shallow_constr_and_strong_duality} implies $g_t(x_t^0) \leq -G$. Thus, since $g_t(x)$ is continuous $\forall t\in[T]$, there exists some point $x'_t$ such that $g_t(x'_t) = -G$, thus:
\begin{equation}
    \lambda_t^* \leq \frac{L_fR}{-g_t(x'_t)} = \frac{L_fR}{G}
\end{equation}
Namely, $\forall t\in[T]: \lambda_t^* \leq \hat{\lambda}$, with $\hat{\lambda} = \frac{L_f R}{G}$.
Substituting $g_t(x) \leftarrow g_t(x) + \delta$ and following the same proof yields the bound on $\tlambda_t^*$.
\end{proof}

\subsection{Proof of Lemma \ref{lemma:local_str_conc_of_dual}: Local Strong Concavity of the Dual Function}\label{appendix:value_of_mu_d}
\begin{proof}
    For any $t\in[T]$, the Hessian of the dual function $d_t(\lambda)$, corresponding to the optimization problem $\min_{x\in\mathcal{X}} f_t(x) \;\text{s.t.}\; g_t(x)\leq0$, is given by (Eq. (6.9), page 598 in \citet{bertsekas1999}):
\begin{align}\label{eq:dual_hessian}
    \nabla^2_{\lambda}d_t(\lambda) & = -\nabla_x g_t(x_{t,\lambda}^*)^T(\nabla_x^2 f_t(x_{t,\lambda}^*) + \lambda \nabla_x^2 g_t(x_{t,\lambda}^*))^{-1}\nabla_x g_t(x_{t,\lambda}^*).
\end{align}
Note that in our case of a single constraint, $d_t(\lambda)$ is a one-dimensional function, and the Hessian is simply a scalar. Since, $\forall t\in[T]$, $f_t$ is $\mu$-strongly convex and $M_f$-smooth and $g_t$ is convex and $M_g$-smooth, we have:
\begin{equation}
    \mu \preceq \nabla_x^2 f_t(x_{t,\lambda}^*) + \lambda \nabla_x^2 g_t(x_{t,\lambda}^*) \preceq M_f + \lambda M_g.
\end{equation}
Thus, by Eq.~\eqref{eq:dual_hessian}:
\begin{equation}
    \nabla^2_{\lambda}d_t(\lambda) \preceq -\frac{1}{M_f + \lambda M_g}\|\nabla_x g_t(x_{t,\lambda}^*)\|^2.
\end{equation}
We now lower bound $\|\nabla_x g_t(x_{t,\lambda}^*)\|$ on the set $\{\lambda \geq 0 : g_t(x_{t,\lambda}^*) \geq -G/2\}$. By Assumption \ref{assum:non_shallow_constr_and_strong_duality}, there exists $x_t^0\in\mathcal{X}$ such that $g_t(x_t^0) \leq -G$. Thus: 
\begin{align}
    \|\nabla_x g_t(x_{t,\lambda}^*)\| \geq \left\la \nabla_x g_t(x_{t,\lambda}^*), \frac{x_{t,\lambda}^* - x_t^0}{\|x_{t,\lambda}^* - x_t^0\|}\right\ra \geq \frac{g_t(x_{t,\lambda}^*) - g_t(x_t^0)}{\|x_{t,\lambda}^* - x_t^0\|} \geq \frac{g_t(x_{t,\lambda}^*) - g_t(x_t^0)}{R} \geq \frac{G}{2R}
\end{align}
where the first inequality is due to the Cauchy-Schwartz inequality, the second follows since $g_t$ is convex (Assumption \ref{assum:constr_conv_lipsc}), and the third follows from Assumption \ref{assum:bounded_set} (bounded set).
Therefore, the dual function $d_t$ is locally $\mu_d$-strongly concave on the set $\{\lambda \geq 0 : g_t(x_{t,\lambda}^*) \geq -G/2\}$ with:
\begin{equation}
    \mu_d = \frac{G^2}{4R^2(M_f + \lambda M_g)}
\end{equation}
Plugging in $g_t(x) \leftarrow g_t(x) + \delta$ and following the same analysis shows that $\td_t$ is locally $\mu_d$-strongly concave as well.
\end{proof}

\newpage
\section{Proofs for Alg.~\ref{alg:naive}}\label{appendix:proofs_sec_3}
\subsection{Bounding the distance between the iterates and the comparator in Alg.~\ref{alg:naive}}\label{appendix:proof_lemma_dist_iter_baseline_bound}

\begin{lemma}\label{lemma:bound_distance_iter_baseline}
    Under Assumptions \ref{assum:bounded_set}-\ref{assum:slowly_changing_constr}, \ref{assum:non_shallow_constr_and_strong_duality}, the distance between the iterates $x_t$ of Alg.~\ref{alg:naive}, and their corresponding comparators $x_t^*$, defined in Eq.~(\ref{eq:primal_baseline}), is bounded as follows, $\forall t\in\{2,3,...,T\}$:
    \begin{equation*}
        \norm{x_t - x_t^*} \leq \sqrt{\frac{2}{\mu}\left(\max_{x\in\mathcal{X}} |f_t(x) - f_{t-1}(x)| + \hat{\lambda}\delta\right)}.
    \end{equation*}
See Appendix \ref{appendix:proof_lemma_dist_iter_baseline_bound} for the proof.
\end{lemma}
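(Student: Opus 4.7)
The plan is to combine strong convexity of $f_t$ with strong duality for the $\delta$-shrunk subproblem solved at time $t-1$ by the oracle in Alg.~\ref{alg:naive}. The starting observation is that Assumption \ref{assum:slowly_changing_constr}, applied to the shrunk feasibility $g_{t-1}(x_t) + \delta \leq 0$ guaranteed by $O_S$, yields $g_t(x_t) \leq g_{t-1}(x_t) + \delta \leq 0$, so $x_t$ lies in $\mathcal{X}_t$. Since $x_t^*$ is the minimizer of the $\mu$-strongly convex loss $f_t$ on the convex set $\mathcal{X}_t \ni x_t$, first-order optimality combined with strong convexity immediately gives
\[
\tfrac{\mu}{2}\|x_t - x_t^*\|^2 \;\leq\; f_t(x_t) - f_t(x_t^*),
\]
reducing the lemma to an upper bound on the loss gap $f_t(x_t) - f_t(x_t^*)$.

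The next step bounds this loss gap in terms of $V + \hat{\lambda}\delta$, where $V := \max_{x\in\mathcal{X}}|f_t(x) - f_{t-1}(x)|$. I would chain four steps: (i) time-translate the loss at $x_t$, $f_t(x_t) \leq f_{t-1}(x_t) + V$; (ii) invoke strong duality (Assumption \ref{assum:non_shallow_constr_and_strong_duality}) to rewrite $f_{t-1}(x_t) = \td_{t-1}(\tlambda_{t-1}^*)$, which by weak duality evaluated at $x_t^*$ satisfies $\td_{t-1}(\tlambda_{t-1}^*) \leq \TLag_{t-1}(x_t^*, \tlambda_{t-1}^*) = f_{t-1}(x_t^*) + \tlambda_{t-1}^*\bigl(g_{t-1}(x_t^*) + \delta\bigr)$; (iii) time-translate the loss back at $x_t^*$, $f_{t-1}(x_t^*) \leq f_t(x_t^*) + V$; and (iv) bound the penalty term using Lemma \ref{lemma:universal_bound_dual_baseline} ($\tlambda_{t-1}^* \leq \hat{\lambda}$) together with Assumption \ref{assum:slowly_changing_constr}, which gives $g_{t-1}(x_t^*) + \delta \leq g_t(x_t^*) + 2\delta \leq 2\delta$ (using feasibility $g_t(x_t^*) \leq 0$). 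Chaining these produces $f_t(x_t) - f_t(x_t^*) = \Ord(V + \hat{\lambda}\delta)$; substituting back into the strong-convexity inequality yields the advertised square-root bound.

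\paragraph{Main obstacle.} The delicate point is pinning down the exact multiplicative constant inside the square root. The naïve telescoping $f_t \leftrightarrow f_{t-1}$ at both endpoints, together with the worst-case bound on $g_{t-1}(x_t^*) + \delta$, spends two separate factors of two and would yield $\sqrt{4(V + \hat{\lambda}\delta)/\mu}$ rather than the advertised $\sqrt{2(V + \hat{\lambda}\delta)/\mu}$. To recover the sharper constant one may instead apply $\mu$-strong convexity of the shrunk Lagrangian $\TLag_{t-1}(\cdot, \tlambda_{t-1}^*)$ (Lemma \ref{lemma:smooth_str_conv_lagrangian}) in one shot at the pair $(x_t, x_t^*)$, using complementary slackness $\TLag_{t-1}(x_t, \tlambda_{t-1}^*) = f_{t-1}(x_t)$ to limit the time-telescoping to a single conversion on each side, and carefully splitting $g_{t-1}(x_t^*) + \delta = [g_{t-1}(x_t^*) - g_t(x_t^*)] + [g_t(x_t^*) + \delta]$ to avoid doubling the constraint-variation term. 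Beyond this constant-tightening step, the argument is routine.
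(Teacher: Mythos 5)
Your main chain is sound and is close in spirit to the paper's argument: the paper also works with the shrunk Lagrangian $\TLag_{t-1}(\cdot,\tlambda_{t-1}^*)$, complementary slackness, Lemma \ref{lemma:universal_bound_dual_baseline}, and the slow-change assumption, and its right-hand side likewise contains $2\max_x|f_t-f_{t-1}| + 2\hlambda\delta$ (it does \emph{not} avoid either factor of two in the numerator). Where the two arguments genuinely differ is on the left-hand side: the paper writes $x_t=\arg\min_x\TLag_{t-1}(x,\tlambda_{t-1}^*)$ and $x_t^*=\arg\min_x\Lag_t(x,\lambda_t^*)$ and invokes $\mu$-strong convexity (Lemma \ref{lemma:smooth_str_conv_lagrangian}) at \emph{both} optimality points, summing the two resulting inequalities so that the quadratic term appears twice, i.e.\ $\mu\norm{x_t-x_t^*}^2 \leq 2\left(\max_x|f_t-f_{t-1}| + \hlambda\delta\right)$. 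Your chain uses strong convexity only once (at $x_t^*$'s optimality over $\mathcal{X}_t$), so you correctly land at $\sqrt{4(\cdot)/\mu}$, as you note.

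The gap is in your proposed repair: it does not recover the constant. Applying strong convexity of $\TLag_{t-1}(\cdot,\tlambda_{t-1}^*)$ ``in one shot'' again produces only a single $\frac{\mu}{2}\norm{x_t-x_t^*}^2$ on the left, and the right-hand side is still at least $2V+2\hlambda\delta$: bounding $f_{t-1}(x_t^*)-f_{t-1}(x_t)$ unavoidably costs one time-conversion at each point (hence $2V$), and the split $g_{t-1}(x_t^*)+\delta = [g_{t-1}(x_t^*)-g_t(x_t^*)]+[g_t(x_t^*)+\delta]$ still totals $2\delta$, because feasibility of $x_t^*$ only gives $g_t(x_t^*)\leq 0$, not $g_t(x_t^*)\leq-\delta$. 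So the one-shot route reproduces $\sqrt{4(\cdot)/\mu}$. The missing idea is simply to \emph{add} the two strong-convexity inequalities you already have in hand — your step~2 inequality $\frac{\mu}{2}\norm{x_t-x_t^*}^2 \leq f_t(x_t)-f_t(x_t^*)$ and the one-shot Lagrangian inequality $\frac{\mu}{2}\norm{x_t-x_t^*}^2 \leq f_{t-1}(x_t^*)-f_{t-1}(x_t)+\tlambda_{t-1}^*(g_{t-1}(x_t^*)+\delta)$ — so that the $f$-differences pair up into exactly two variation terms while the left-hand side doubles to $\mu\norm{x_t-x_t^*}^2$; this is precisely the paper's proof. (For what it is worth, your weaker constant would not affect Theorem \ref{theorem:guarantees_naive_alg}, which is stated in $\Ord$ notation, but the lemma as stated needs the two-sided argument.)
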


\begin{proof}
    Note that $x_t$ in Alg.~\ref{alg:naive} and the comparator defined in Eq.~(\ref{eq:primal_baseline}) can be equivalently written using the Lagrangian formulation, as follows:
\begin{align}
    &x_t = \arg\min_{x\in\mathcal{X}} \max_{\lambda \geq 0} f_{t-1}(x) + \lambda(g_{t-1}(x)+\delta) \triangleq \arg\min_{x\in\mathcal{X}} \tilde{\Lag}_{t-1}(x,\tlambda_{t-1}^*)\label{eq:optimality_iter} \\
    &x_t^* = \arg\min_{x\in\mathcal{X}} \max_{\lambda \geq 0} f_t(x) + \lambda g_t(x) \triangleq \arg\min_{x\in\mathcal{X}} \Lag_t(x,\lambda_t^*)\label{eq:optimality_baseline},
\end{align}
where we define, since under Assumption \ref{assum:non_shallow_constr_and_strong_duality} strong duality holds:
\begin{align}
    &\tlambda_{t-1}^* = \arg\max_{\lambda \geq 0} \min_{x\in\mathcal{X}} f_{t-1}(x) + \lambda(g_{t-1}(x) + \delta) = \arg\max_{\lambda \geq 0} \min_{x\in\mathcal{X}} \TLag_{t-1}(x,\lambda) \\
    &\lambda_t^* = \arg\max_{\lambda \geq 0} \min_{x\in\mathcal{X}} f_t(x) + \lambda g_t(x) = \arg\max_{\lambda \geq 0} \min_{x\in\mathcal{X}} \Lag_t(x,\lambda).
\end{align}
Thus, by Lemma~\ref{lemma:smooth_str_conv_lagrangian} on the strong convexity of the Lagrangians:
\begin{align}
    \TLag_{t-1}(x_t^*,\tlambda_{t-1}^*) &\geq \TLag_{t-1}(x_t,\tlambda_{t-1}^*) + \langle \nabla_x\TLag_{t-1}(x_t,\tlambda_{t-1}^*), x_t^* - x_t \rangle + \frac{\mu}{2}\norm{x_t - x_t^*}^2 \\
    \Lag_t(x_t,\lambda_t^*) &\geq \Lag_t(x_t^*,\lambda_t^*) + \langle \nabla_x\Lag_t(x_t^*,\lambda_t^*), x_t - x_t^* \rangle + \frac{\mu}{2}\norm{x_t - x_t^*}^2,
\end{align}
and by Eq. (\ref{eq:optimality_iter})-(\ref{eq:optimality_baseline}) on the optimality of $x_t$ and $x_t^*$:
\begin{align}
    \TLag_{t-1}(x_t^*,\tlambda_{t-1}^*) &\geq \TLag_{t-1}(x_t,\tlambda_{t-1}^*) + \frac{\mu}{2}\norm{x_t - x_t^*}^2 \\
    \Lag_t(x_t,\lambda_t^*) &\geq \Lag_t(x_t^*,\lambda_t^*) + \frac{\mu}{2}\norm{x_t - x_t^*}^2.
\end{align}
Now, decomposing the Lagrangians yields:
\begin{align}
    f_{t-1}(x_t^*) + \tlambda_{t-1}^* (g_{t-1}(x_t^*) + \delta) &\geq f_{t-1}(x_t) + \tlambda_{t-1}^* (g_{t-1}(x_t) + \delta) +  \frac{\mu}{2}\norm{x_t - x_t^*}^2 \\
    f_t(x_t) + \lambda_t^* g_t(x_t) &\geq f_t(x_t^*) + \lambda_t^* g_t(x_t^*) +  \frac{\mu}{2}\norm{x_t - x_t^*}^2,
\end{align}
and note that by complementary slackness $\lambda_t^* g_t(x_t^*) = 0$ and $\tlambda_{t-1}^* (g_{t-1}(x_t)+\delta) = 0$, thus:
\begin{align}
    f_{t-1}(x_t^*) + \tlambda_{t-1}^* (g_{t-1}(x_t^*) + \delta) &\geq f_{t-1}(x_t) +  \frac{\mu}{2}\norm{x_t - x_t^*}^2 \\
    f_t(x_t) + \lambda_t^* g_t(x_t) &\geq f_t(x_t^*) + \frac{\mu}{2}\norm{x_t - x_t^*}^2.
\end{align}
Finally, by summing the two equations and rearranging, we have:
\begin{align}
    \mu\norm{x_t - x_t^*}^2 &\leq f_{t-1}(x_t^*) - f_t(x_t^*) + f_t(x_t) - f_{t-1}(x_t) + \tlambda_{t-1}^* (g_{t-1}(x_t^*)+\delta) + \lambda_t^*g_t(x_t) \\
    &\leq f_{t-1}(x_t^*) - f_t(x_t^*) + f_t(x_t) - f_{t-1}(x_t) + \tlambda_{t-1}^* (g_{t-1}(x_t^*)+\delta) \\
    &\leq f_{t-1}(x_t^*) - f_t(x_t^*) + f_t(x_t) - f_{t-1}(x_t) + 2\tlambda_{t-1}^* \delta \\
    &\leq 2\max_{x\in\mathcal{X}} |f_t(x) - f_{t-1}(x)| + 2\hat{\lambda} \delta
\end{align}
where the second inequality is by Assumption \ref{assum:slowly_changing_constr} since $g_t(x_t) \leq g_{t-1}(x_t) + \delta \leq 0$; the third is because $g_{t-1}(x_t^*) \leq g_t(x_t^*) + \delta \leq \delta$; and the last is by Lemma~\ref{lemma:universal_bound_dual_baseline} on the universal bound of the optimal dual values. Dividing by $\mu$ and taking the square root concludes the proof.
\end{proof}

\subsection{Proof of Theorem \ref{theorem:guarantees_naive_alg}}\label{appendix:full_proof_naive_alg_guarantees}
\begin{proof}
    We first show that Alg.~\ref{alg:naive} is safe. 
    Note that:
    \begin{equation}
       g_t(x_t) \leq g_{t-1}(x_t) + \delta \leq 0,
    \end{equation}
    where the first inequality is by Assumption \ref{assum:slowly_changing_constr} (slowly changing constraints) and the second is by the update of $x_t$ in Alg.~\ref{alg:naive} as the solution of a constrained optimization problem with the constraint $g_{t-1}(x)+\delta \leq 0$. Additionally, by Assumption \ref{assum:safe_starting_point}, the first iterate $x_1$ of Alg.\ref{alg:naive} is safe.
    Therefore, the iterates $x_t$ of Alg.~\ref{alg:naive} satisfy the constraints in every step, namely $g_t(x_t) \leq 0, \forall t\in[T]$. Thus Alg.~\ref{alg:naive} guarantees zero constraint violation, i.e., safety.
    
    Now, we bound the regret in the strongly convex case, namely $f_t$ are $\mu$-strongly convex $\forall t\in[T]$:
    \begin{align}
        \Reg_f(T) &= \sum_{t=1}^{T} f_t(x_t) - f_t(x_t^*)\\ 
        &\overset{(1)}{\leq} L_f \norm{x_1 - x_1^*} + \sum_{t=2}^{T} L_f \norm{x_t - x_t^*} \\
        &\overset{(2)}{\leq} L_f R + \sum_{t=2}^{T} L_f \sqrt{\frac{2}{\mu}\left(\max_{x\in\mathcal{X}} |f_t(x) - f_{t-1}(x)| + \hat{\lambda}\delta\right)} \\
        &\overset{(3)}{\leq} L_f R + \sum_{t=2}^{T} \sqrt{\frac{2\hat{\lambda}}{\mu}}L_f \sqrt{\delta} + \sum_{t=2}^{T} \sqrt{\frac{2}{\mu}}L_f \sqrt{\max_{x\in\mathcal{X}} |f_t(x) - f_{t-1}(x)|} \\
        &\overset{(4)}{\leq} L_f R + \sqrt{\frac{2\hat{\lambda}}{\mu}}L_f \sqrt{\delta}T + \sqrt{\frac{2}{\mu}}L_f\sqrt{T\sum_{t=2}^T \max_{x\in\mathcal{X}} |f_t(x) - f_{t-1}(x)|} \\
        &\overset{(5)}{\leq} L_f R + \sqrt{\frac{2\hat{\lambda}}{\mu}}L_f \sqrt{V_{g,T}T} + \sqrt{\frac{2}{\mu}}L_f \sqrt{V_{f,T} T}
    \end{align}
    where (1) follows since $f_t$ is $L_f$-Lipschitz continuous $\forall t\in[T]$ (Assumption \ref{assum:obj_smooth_strconv_lipsc}); (2) follows from Assumption \ref{assum:bounded_set} and Lemma~\ref{lemma:bound_distance_iter_baseline}; (3) follows since $\forall X,Y\geq 0: \sqrt{X+Y} \leq \sqrt{X} + \sqrt{Y}$; (4) follows from Jensen's inequality; and (5) follows by Assumption \ref{assum:bounded_TV_obj} (bounded total variation of the loss functions $\{f_t(x)\}_{t=1}^T$) and since $V_{g,T} = \delta T$.
\end{proof}

\newpage
\section{Proofs of Our Main Approach}\label{appendix:proofs_sec_4}
\subsection{Proof of Corollary \ref{corollary:lipschitz_cont_of_dual}: Lipschitz Continuity of the Dual Gradients}\label{appendix:proof_corollary_lipsch_of_dual}
The proof of Corollary \ref{corollary:lipschitz_cont_of_dual} rests on the following helpful lemma.
\begin{lemma}\label{lemma:bound_distance_opt_x_diff_lambda}
    Under Assumptions \ref{assum:obj_smooth_strconv_lipsc}-\ref{assum:constr_conv_lipsc}, \ref{assum:non_shallow_constr_and_strong_duality}, the distance between $x_{t,\lambda_1}^* = \arg\min_{x\in\mathcal{X}}\TLag_t(x,\lambda_1)$ and $x_{t,\lambda_2}^* = \arg\min_{x\in\mathcal{X}}\TLag_t(x,\lambda_2)$, $\forall t\in[T], \forall \lambda_1, \lambda_2 \geq 0$, is bounded as follows:
     $\norm{x_{t,\lambda_1}^* - x_{t,\lambda_2}^*} \leq \frac{L_g}{\mu}|\lambda_1 - \lambda_2|.$
\end{lemma}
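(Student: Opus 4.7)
The plan is to exploit the $\mu$-strong convexity of the Lagrangian $\TLag_t(x,\lambda)$ in $x$ (established in Lemma~\ref{lemma:smooth_str_conv_lagrangian}) together with the optimality of $x_{t,\lambda_1}^*$ and $x_{t,\lambda_2}^*$, in a standard ``sensitivity of a strongly convex minimizer'' argument. Specifically, I would write out the two strong convexity inequalities that compare the value of $\TLag_t(\cdot,\lambda_1)$ at its own minimizer to its value at the other minimizer, and symmetrically for $\lambda_2$:
\begin{align*}
    \TLag_t(x_{t,\lambda_2}^*,\lambda_1) &\geq \TLag_t(x_{t,\lambda_1}^*,\lambda_1) + \tfrac{\mu}{2}\|x_{t,\lambda_1}^* - x_{t,\lambda_2}^*\|^2,\\
    \TLag_t(x_{t,\lambda_1}^*,\lambda_2) &\geq \TLag_t(x_{t,\lambda_2}^*,\lambda_2) + \tfrac{\mu}{2}\|x_{t,\lambda_1}^* - x_{t,\lambda_2}^*\|^2,
\end{align*}
where I use that the first-order inner product term drops out by the optimality of each $x_{t,\lambda_i}^*$ over the convex set $\mathcal{X}$ (since each inner product with a feasible direction is non-negative, yet here we have an arbitrary comparator, so technically I would invoke the first-order variational inequality at the minimizer to zero out the linear term).

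Next I would add the two inequalities. Expanding each $\TLag_t(x,\lambda) = f_t(x) + \lambda(g_t(x)+\delta)$, the $f_t$ contributions cancel, and the constant $\delta$ contributions cancel, so I am left with a cross term that simplifies to
\begin{equation*}
    (\lambda_1 - \lambda_2)\bigl(g_t(x_{t,\lambda_2}^*) - g_t(x_{t,\lambda_1}^*)\bigr) \geq \mu \, \|x_{t,\lambda_1}^* - x_{t,\lambda_2}^*\|^2.
\end{equation*}
Then I invoke the $L_g$-Lipschitz continuity of $g_t$ (Assumption~\ref{assum:constr_conv_lipsc}) on the left-hand side, apply Cauchy--Schwarz, and divide by $\|x_{t,\lambda_1}^* - x_{t,\lambda_2}^*\|$ (the trivial case where the two minimizers coincide is immediate) to conclude
\begin{equation*}
    \|x_{t,\lambda_1}^* - x_{t,\lambda_2}^*\| \leq \frac{L_g}{\mu}|\lambda_1 - \lambda_2|.
\end{equation*}

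There is no real obstacle here; the only subtlety is handling the first-order terms cleanly when $x_{t,\lambda_i}^*$ lies on the boundary of $\mathcal{X}$. The cleanest fix is to use the variational inequality $\langle \nabla_x\TLag_t(x_{t,\lambda_i}^*,\lambda_i), x - x_{t,\lambda_i}^*\rangle \geq 0$ for all $x \in \mathcal{X}$ in place of $\nabla_x\TLag_t(x_{t,\lambda_i}^*,\lambda_i) = 0$, applied to $x = x_{t,\lambda_j}^*$. Adding these two variational inequalities directly yields $\langle \nabla_x\TLag_t(x_{t,\lambda_1}^*,\lambda_1) - \nabla_x\TLag_t(x_{t,\lambda_2}^*,\lambda_2), x_{t,\lambda_2}^* - x_{t,\lambda_1}^*\rangle \geq 0$, from which the same chain of manipulations (using strong monotonicity of $\nabla_x\TLag_t(\cdot,\lambda)$ and Lipschitzness of $g_t$) produces the claimed bound. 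This avoids any issue with boundary minimizers.
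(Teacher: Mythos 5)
Your proof is correct and is essentially the same argument as the paper's: strong convexity of the Lagrangian in $x$ plus optimality of the two minimizers, summing the two inequalities so the $f_t$ (and $\delta$) terms cancel, then Lipschitz continuity of $g_t$ and division by $\norm{x_{t,\lambda_1}^* - x_{t,\lambda_2}^*}$. Your explicit use of the variational inequality to handle boundary minimizers is a small extra care the paper leaves implicit, but it does not change the route.
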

\begin{proof}
    Note that by definition of $x_{t,\lambda}^*$ in Eq.~(\ref{eq:x_t_lambda_opt}):
    \begin{align}
        &x_{t,\lambda_1}^* = \arg\min_{x\in\mathcal{X}}\TLag_t(x,\lambda_1)\label{eq:optimality_x_t_lambda_1} = \arg\min_{x\in\mathcal{X}}\Lag_t(x,\lambda_1)\\
        &x_{t,\lambda_2}^* = \arg\min_{x\in\mathcal{X}}\TLag_t(x,\lambda_2) = \arg\min_{x\in\mathcal{X}}\Lag_t(x,\lambda_2)\label{eq:optimality_x_t_lambda_2}.
    \end{align}
    By Lemma \ref{lemma:smooth_str_conv_lagrangian}, the Lagrangian $\Lag_t(x,\lambda)$ is strongly convex in $x$:
    \begin{align}
        &\Lag_{t}(x_{t,\lambda_2}^*,\lambda_1) \geq \Lag_{t}(x_{t,\lambda_1}^*,\lambda_1) + \langle \nabla_x \Lag_t(x_{t,\lambda_1}^*, \lambda_1), x_{t,\lambda_2}^* - x_{t,\lambda_1}^* \rangle +  \frac{\mu}{2}\norm{x_{t,\lambda_1}^* - x_{t,\lambda_2}^*}^2 \\
        &\Lag_{t}(x_{t,\lambda_1}^*,\lambda_2) \geq \Lag_{t}(x_{t,\lambda_2}^*,\lambda_2) + \langle \nabla_x \Lag_t(x_{t,\lambda_2}^*, \lambda_2), x_{t,\lambda_1}^* - x_{t,\lambda_2}^* \rangle + \frac{\mu}{2}\norm{x_{t,\lambda_1}^* - x_{t,\lambda_2}^*}^2
    \end{align}
    By Eq.~(\ref{eq:optimality_x_t_lambda_1})-(\ref{eq:optimality_x_t_lambda_2}) on the optimality of $x_{t,\lambda_1}^*$ and $ x_{t,\lambda_2}^*$, we have:
    \begin{align}
        &\Lag_{t}(x_{t,\lambda_2}^*,\lambda_1) \geq \Lag_{t}(x_{t,\lambda_1}^*,\lambda_1) + \frac{\mu}{2}\norm{x_{t,\lambda_1}^* - x_{t,\lambda_2}^*}^2 \\
        &\Lag_{t}(x_{t,\lambda_1}^*,\lambda_2) \geq \Lag_{t}(x_{t,\lambda_2}^*,\lambda_2) + \frac{\mu}{2}\norm{x_{t,\lambda_1}^* - x_{t,\lambda_2}^*}^2
    \end{align}
    Decomposing the Lagrangian,
    \begin{align}
        &f_t(x_{t,\lambda_2}^*) + \lambda_1 g_{t}(x_{t,\lambda_2}^*) \geq f_t(x_{t,\lambda_1}^*) + \lambda_1 g_{t}(x_{t,\lambda_1}^*) + \frac{\mu}{2}\norm{x_{t,\lambda_1}^* - x_{t,\lambda_2}^*}^2 \\
        &f_t(x_{t,\lambda_1}^*) + \lambda_2 g_{t}(x_{t,\lambda_1}^*) \geq f_t(x_{t,\lambda_2}^*) + \lambda_2 g_{t}(x_{t,\lambda_2}^*) + \frac{\mu}{2}\norm{x_{t,\lambda_1}^* - x_{t,\lambda_2}^*}^2
    \end{align}
    Summing, rearranging, and using the Lipschitz continuity of the constraints,
    \begin{align}
        \mu\norm{x_{t,\lambda_1}^* - x_{t,\lambda_2}^*}^2 &\leq (\lambda_1 - \lambda_2)(g_{t}(x_{t,\lambda_2}^*) - g_{t}(x_{t,\lambda_1}^*)) \\
        &\leq |\lambda_1 - \lambda_2| L_g \norm{x_{t,\lambda_1}^* - x_{t,\lambda_2}^*}
    \end{align}
    Thus, we have:
    \begin{equation}
        \norm{x_{t,\lambda_1}^* - x_{t,\lambda_2}^*} \leq \frac{L_g}{\mu}|\lambda_1 - \lambda_2|
    \end{equation}
\end{proof}

Now we prove Corollary \ref{corollary:lipschitz_cont_of_dual}.
\begin{proof}
    By definition of the gradient of the dual function. we have:
    \begin{align}
        |\nabla \td_t(\lambda_1) - \nabla \td_t(\lambda_2)|
        = & |g_t(x_{t,\lambda_1}^*) - g_t(x_{t,\lambda_2}^*)|\\
        \leq & L_g\norm{x^*_{t,\lambda_1} - x^*_{t,\lambda_2}} \\
        \leq & \frac{L_g^2}{\mu}|\lambda_1 - \lambda_2|
    \end{align}
    where the equality follows from the definition of the dual gradients, the first inequality follows from Assumption \ref{assum:constr_conv_lipsc} (Lipschitz continuity of the constraints), and the second inequality follows from Lemma~\ref{lemma:bound_distance_opt_x_diff_lambda}.
\end{proof}

\subsection{Full proof of Theorem \ref{theorem:safe_stepsize_bounds}}\label{appendix:full_proof_stepsize_bounds}
\begin{proof}
    For any $t\in[T]$, we split the proof according to the sign of $\nabla \td_{t-1}(\lambda_{t-1})$: \\
    \textbf{The Safe Phase:} $\nabla\tilde{d}_{t-1}(\lambda_{t-1}) \leq 0$. Note that:
    \begin{equation*}
        |\nabla\tilde{d}_{t-1}(\lambda_t) - \nabla\tilde{d}_{t-1}(\lambda_{t-1})|
        \leq\frac{L_g^2}{\mu} |\lambda_t - \lambda_{t-1}| 
        = \frac{L_g^2}{\mu}\gamma_t |\nabla\tilde{d}_{t-1}(\lambda_{t-1})|,
    \end{equation*}
    where the inequality follows by Corollary \ref{corollary:lipschitz_cont_of_dual} and the equality by the dual update in Alg.~\ref{alg:safe_dual_alg_warm_start}.
    Thus, we can ensure $\nabla\tilde{d}_{t-1}(\lambda_t) \leq 0$ by choosing $\gamma_t$ such that
        $\nabla\tilde{d}_{t-1}(\lambda_{t-1}) + \frac{L_g^2}{\mu}\gamma_t |\nabla\tilde{d}_{t-1}(\lambda_{t-1})| \leq 0,$ 
    which induces the following upper bound on $\gamma_t$ (recall that $\nabla \td_{t-1}(\lambda_{t-1}) \leq 0$):
    \begin{align}
      \gamma_t \leq \frac{-\nabla\tilde{d}_{t-1}(\lambda_{t-1})}{\frac{L_g^2}{\mu} |\nabla\tilde{d}_{t-1}(\lambda_{t-1})|} = \frac{\mu}{L_g^2}.
    \end{align}
    \textbf{The Danger Phase:} $\nabla\tilde{d}_{t-1}(\lambda_{t-1}) > 0$.
    Following Lemma \ref{lemma:local_str_conc_of_dual}, the dual function is locally $\mu_d$-strongly concave $\forall t\in[T]$, namely
       $ \td_t(\tlambda_t^*) - \td_t(\lambda) \geq \frac{\mu_d}{2} (\tlambda_t^* - \lambda)^2, \quad \forall \lambda: g_t(x_{t,\lambda}^*) + \delta \geq -G/2.$
    Moreover, since $\td_t(\lambda)$ is concave $\forall t\in[T]$, we have, for step $t-1$, $\forall \lambda: g_{t-1}(x_{t-1,\lambda}^*) + \delta \geq -G/2$:
    \begin{equation}
         \langle \nabla \td_{t-1}(\lambda), \tlambda_{t-1}^* - \lambda \rangle \geq \td_{t-1}(\tlambda_{t-1}^*) - \td_{t-1}(\lambda) \geq \frac{\mu_d}{2} (\tlambda_{t-1}^* - \lambda)^2.
    \end{equation}
    Now, note that since $\nabla \td_{t-1}(\lambda)$ is monotonically non-increasing and since $\nabla \td_{t-1}(\tlambda_{t-1}^*) = 0$ and $\nabla \td_{t-1}(\lambda_{t-1}) > 0$, we have that $\lambda_{t-1} \leq \tlambda_{t-1}^*$. Note that $\nabla \td_{t-1}(\lambda_{t-1}) > 0$ also implies $g_{t-1}(x_{t-1,\lambda_{t-1}}^*) + \delta > 0 \geq -G/2$. Thus:
    
    \begin{align}
        \langle \nabla\td_{t-1}(\lambda_{t-1}), \tlambda_{t-1}^* - \lambda_{t-1} \rangle &\geq \frac{\mu_d}{2} (\tlambda_{t-1}^* - \lambda_{t-1})^2 \\
        \nabla\td_{t-1}(\lambda_{t-1}) & \geq \frac{\mu_d}{2} (\tlambda_{t-1}^* - \lambda_{t-1}) \\
        \tlambda_{t-1}^* &\leq \lambda_{t-1} + \frac{2}{\mu_d}\nabla\td_{t-1}(\lambda_{t-1})
    \end{align}
    Now, by Lemma \ref{lemma:safety_in_terms_of_dual}, to ensure safety, we need to choose $\lambda_t$ such that $\nabla\td_{t-1}(\lambda_t) \leq 0$. This is equivalent to choosing $\lambda_t \geq \tlambda_{t-1}^*$ since $\nabla\td_{t-1}(\lambda)$ is monotonically non-increasing and $\nabla\td_{t-1}(\tlambda_{t-1}^*) = 0$. Using the dual update rule in Alg.~\ref{alg:safe_dual_alg_warm_start}, we need to choose $\lambda_t$ such that:
    \begin{align}
        \lambda_t &= \lambda_{t-1} + \gamma_t \nabla\td_{t-1}(\lambda_{t-1})
        \geq \lambda_{t-1} + \frac{2}{\mu_d}\nabla\td_{t-1}(\lambda_{t-1})
        \geq \tlambda_{t-1}^*
    \end{align}
    which, since $\nabla \td_{t-1}(\lambda_{t-1}) > 0$, induces the following lower bound on $\gamma_t$:
       $ \gamma_t \geq \frac{2}{\mu_d}.$
\end{proof}

\subsection{Proof of Lemma \ref{lemma:primal_dual_regret_relation}: Relation Between Primal Regret and Dual Regret}\label{appendix:primal_dual_regret_relation}
\begin{proof}
    First, we define the Lagrangian regret as:
    \begin{equation}
        \sum_{t=1}^T \TLag_t(x_t,\tlambda_t^*) - \TLag_t(x_t^*,\lambda_t).
    \end{equation}
    We prove Lemma \ref{lemma:primal_dual_regret_relation} by deriving an upper bound and a lower bound on Lagrangian regret and then combining them.\\
    \textbf{The upper bound:}
    \begin{align}
        \sum_{t=1}^T \TLag_t(x_t,\tlambda_t^*) - \TLag_t(x_t^*,\lambda_t) &= \sum_{t=1}^{T} \TLag_t(x_t,\tlambda_t^*) - \TLag_t(x_t,\lambda_t) + \TLag_t(x_t,\lambda_t) - \TLag_t(x_{t,\lambda_t}^*,\lambda_t) + \\
        &\quad\quad +\TLag_t(x_{t,\lambda_t}^*,\lambda_t) - \TLag_t(x_t^*,\lambda_t) \\
        &\overset{(1)}{\leq} \sum_{t=1}^{T} \TLag_t(x_t,\tlambda_t^*) - \TLag_t(x_t,\lambda_t) + \TLag_t(x_t,\lambda_t) - \TLag_t(x_{t,\lambda_t}^*,\lambda_t) \\
        &\overset{(2)}{=} \sum_{t=1}^{T} (\tlambda_t^* - \lambda_t) (g_t(x_t)+\delta) + \TLag_t(x_t,\lambda_t) - \TLag_t(x_{t,\lambda_t}^*,\lambda_t) \\
        &\overset{(3)}{\leq} \sum_{t=1}^{T} (\tlambda_t^* - \lambda_t) (g_t(x_t)+\delta) + \frac{M}{2} \sum_{t=1}^{T} \norm{x_t - x_{t,\lambda_t}^*}^2 \\
        &\overset{(4)}{\leq} \sum_{t=1}^{T} (\tlambda_t^* - \lambda_t) (g_t(x_t)+\delta) + \frac{MR^2}{2} \\
        &\quad\quad+ \frac{M}{\mu} \sum_{t=2}^{T} \left(\max_{x\in\mathcal{X}} |f_t(x) - f_{t-1}(x)| + \hlambda\delta\right) \\
        &\overset{(5)}{\leq} \sum_{t=1}^{T} (\tlambda_t^* - \lambda_t) (g_t(x_t)+\delta) + \frac{M \hlambda}{\mu}V_{g,T} + \frac{M}{\mu}V_{f,T} + \frac{MR^2}{2}\\
        &\overset{(6)}{\leq} \sum_{t=1}^{T} (\tlambda_t^* - \lambda_t) \left(g_t(x^*_{t,\lambda_t})+\delta + L_g\norm{x^*_{t,\lambda_t}-x^*_{t-1,\lambda_t}}\right)\\
        &\quad\quad + \frac{M \hlambda}{\mu}V_{g,T} + \frac{M}{\mu}V_{f,T} + \frac{MR^2}{2}\\
        &\overset{(7)}{\leq} \sum_{t=1}^{T} (\tlambda_t^* - \lambda_t) (\nabla\td_t(\lambda_t)+\hdelta_t-\delta) + \frac{M \hlambda}{\mu}V_{g,T} + \frac{M}{\mu}V_{f,T} + \frac{MR^2}{2}\\
        &\leq \Ord\left(\hat{R}_{\td}(T)\right) + \frac{M \hlambda}{\mu}V_{g,T} + \frac{M}{\mu}V_{f,T} + \frac{MR^2}{2}
    \end{align}
    where (1) follows since by Eq.~(\ref{eq:x_t_lambda_opt}), $x_{t,\lambda_t}^* = \arg\min_{x\in\mathcal{X}} \TLag_t(x,\lambda_t)$, which implies that $\TLag_t(x_{t,\lambda_t}^*,\lambda_t) - \TLag_t(x_t^*,\lambda_t) \leq 0$, (2) follows by decomposing $\TLag_t$, (3) follows by Lemma \ref{lemma:smooth_str_conv_lagrangian} (smoothness of Lagrangian), (4) follows by Lemma \ref{lemma:bound_distance_opt_x_diff_t} since $x_t = x_{t-1,\lambda_t}^*$ by definition of $x_t$ in Alg.~\ref{alg:safe_dual_alg_warm_start} and by Assumption \ref{assum:bounded_set} (bounded set), (5) follows by $V_{g,T}=\delta T$ and Assumption \ref{assum:bounded_TV_obj} (bounded total variation of the loss), (6) follows since by Assumption \ref{assum:constr_conv_lipsc}) and since $x_t = x^*_{t-1,\lambda_t}$ by Alg.~\ref{alg:safe_dual_alg_warm_start}, (7) follows by definition of the dual gradient and by Lemma \ref{lemma:bound_distance_opt_x_diff_t} and by the definition of $\hdelta_t$ in Lemma \ref{lemma:slowly_changing_dual_gradients}.
    \\
    \textbf{The lower bound:}
    \begin{align}
        \sum_{t=1}^T \TLag_t(x_t,\tlambda_t^*) - \TLag_t(x_t^*,\lambda_t) &= \sum_{t=1}^T f_t(x_t) + \tlambda_t^*(g_t(x_t)+\delta) - f_t(x_t^*) - \lambda_t(g_t(x_t^*)+\delta) \\
        &\geq \sum_{t=1}^T f_t(x_t) + \tlambda_t^*(g_t(x_t)+\delta) - f_t(x_t^*) - \lambda_t \delta
    \end{align}
    where the inequality follows since $g_t(x_t^*) \leq 0$ its definition in Eq. (\ref{eq:primal_baseline}).\\
    Combining the two bounds yields the following:
    \begin{align}
        \sum_{t=1}^T f_t(x_t) - f_t(x_t^*) &\leq \left(\frac{M}{\mu}+ 1\right)\hlambda V_{g,T} + \frac{M}{\mu}V_{f,T} + \frac{M R^2}{2} + \Ord\left(\hat{R}_{\td}(T)\right)
        - \sum_{t=1}^T \tlambda_t^*(g_t(x_t)+\delta)\\
        &\leq \Ord(V_{g,T}) + \Ord(V_{f,T})+ \Ord\left(\hat{\Reg}_{\td}(T)\right) - \sum_{t=1}^T \tlambda_t^*(g_t(x_t)+\delta)
    \end{align}
    where the last inequality follows from the regret analysis in Theorem \ref{theorem:regret_analyis_our_method}.
    Now let us consider the sum on the right. Similarly to the analysis in Theorem \ref{theorem:regret_analyis_our_method}, we set a new counter $\tau$ which resets after every phase. Recall that each safe phase $i$ lasts $\T^S_i$ steps and each danger phase $j$ lasts $\T^D_j$ steps. We have:
    \begin{align}
        &\quad\sum_{t=1}^T -\tlambda_t^*(g_t(x_t)+\delta)\\ &\overset{(1)}{\leq} \hlambda \sum_{t=1}^T -g_t(x_t) -\hlambda\delta T
        \leq \hlambda \sum_{t=1}^T -g_t(x_t)
        \overset{(2)}{\leq} \hlambda \sum_{t=1}^T (-g_{t-1}(x_t) + \delta)\\
        &\overset{(3)}{\leq} \hlambda \sum_{t=1}^T (-\nabla \td_{t-1}(\lambda_t) + 2\delta)
        \overset{(4)}{\leq} 2\hlambda V_{g,T} + \hlambda \sum_{t=1}^T (\underbrace{-\nabla \td_t(\lambda_t)}_{z_t} + \hdelta_t) \\
        &= 2\hlambda V_{g,T} + \hlambda\sum_{t=1}^T \hdelta_t + \hlambda \sum_{i=1}^n\sum_{\tau=1}^{\T^S_i} z_{\tau} + \hlambda\sum_{j=1}^m\sum_{\tau=1}^{\T^D_j} z_{\tau}\\
        &\overset{(5)}{\leq} 2\hlambda V_{g,T} + \hlambda\sum_{t=1}^T \hdelta_t + \hlambda \sum_{i=1}^n\sum_{\tau=1}^{\T^S_i} z_{\tau}
        \overset{(6)}{\leq} 2\hlambda V_{g,T} + \hlambda\sum_{t=1}^T \hdelta_t + \hlambda \sum_{i=1}^{n} \frac{6L_g^2}{\mu\mu_d} \sum_{\tau=1}^{\T^S_i+1}\hdelta_{\tau} \\
        &\overset{(7)}{\leq} 2\hlambda V_{g,T} + \hlambda\frac{7L_g^2}{\mu\mu_d} \sum_{t=1}^T\hdelta_t \overset{(8)}{\leq} 2\hlambda V_{g,T} + \hlambda\frac{7L_g^2}{\mu\mu_d} \sum_{t=1}^T \left(\delta + L_g\sqrt{\frac{2}{\mu} \left(\max_{x\in\mathcal{X}}|f_{t} - f_{t-1}| + \hlambda\delta\right)}\right)\\
        &\overset{(9)}{\leq} 2\hlambda V_{g,T} + \hlambda\frac{7L_g^2}{\mu\mu_d}V_{g,T} + \hlambda\frac{7L_g^3}{\mu\mu_d}\sqrt{\frac{2}{\mu}} \sum_{t=1}^{T} \left(\sqrt{\max_{x\in\mathcal{X}}|f_{t} - f_{t-1}|} + \sqrt{\hlambda\delta}\right) \\
        &\overset{(10)}{\leq} 2\hlambda V_{g,T} + \hlambda\frac{7L_g^2}{\mu\mu_d}V_{g,T} + \hlambda\frac{7L_g^3}{\mu\mu_d}\sqrt{\frac{2}{\mu}} \left(\sqrt{T \sum_{t=1}^T \max_{x\in\mathcal{X}}|f_{\tau} - f_{\tau-1}|} + \sqrt{\hlambda\delta}T\right) \\
        &\overset{(11)}{\leq} 2\hlambda V_{g,T} + \hlambda\frac{7L_g^2}{\mu\mu_d} V_{g,T} + \hlambda\frac{7L_g^3}{\mu\mu_d}\sqrt{\frac{2}{\mu}}\sqrt{TV_{f,T}} + \hlambda\frac{7L_g^3}{\mu\mu_d}\sqrt{\frac{2\hlambda}{\mu}}\sqrt{TV_{g,T}} \\
        &= \Ord\left(\sqrt{V_{g,T}T}\right) + \Ord\left(\sqrt{V_{f,T}T}\right)
    \end{align}
    where (1) follows from Lemma \ref{lemma:universal_bound_dual_baseline} and since $g_t(x_t)\leq0$ by Theorem \ref{theorem:safe_stepsize_bounds}, (2) follows by Assumption \ref{assum:slowly_changing_constr}, (3) follows by the definition of the dual function, (4) follows by Lemma $\ref{lemma:slowly_changing_dual_gradients}$ and $V_{g,T}=\delta T$, (5) follows since in the danger phase $\nabla \td_{\tau}(\lambda_{\tau}) > 0$ and thus $z_{\tau} < 0, \forall \tau\in[\T^D_j], \forall j\in[m]$, (6) follows from property (B) in Lemma \ref{lemma:helpful_properties_for_case1}, (7) follows since $L^2_g/\mu \geq \mu_d$ and since $\sum_{t=1}^T \hdelta_t \geq \sum_{i=1}^n \sum_{\tau=1}^{\T^S_i+1}\hdelta_{\tau}$, since $\delta_t \geq 0, \forall t\in[T]$, (8) follows from the definition of $\hdelta_t$ in Lemma \ref{lemma:slowly_changing_dual_gradients}, (9) follows since $\forall X,Y\geq0: \sqrt{X+Y} \leq \sqrt{X} + \sqrt{Y}$, (10) follows by Jensen's inequality, and (11) follows by Assumption \ref{assum:bounded_TV_obj} and $V_{g,T}=\delta T$.\\
    \textbf{Putting it all together.} We have:
    \begin{align}
        \sum_{t=1}^T f_t(x_t) - f_t(x_t^*) &= \Ord(V_{g,T}) + \Ord(V_{f,T}) + \Ord\left(\hat{\Reg}_{\td}(T)\right) + \Ord\left(\sqrt{V_{g,T}T}\right) + \Ord\left(\sqrt{V_{f,T}T}\right)\\
        &=\Ord\left(\max\left\{\hat{\Reg}_{\td}(T), \sqrt{(V_{g,T}+V_{f,T})T}\right\}\right)
    \end{align}
    where the second equality follows since $V_{f,T} = o(T)$ and $V_{g,T} = o(T)$, and thus our proof is concluded.
\end{proof}

\subsection{Proof of Lemma \ref{lemma:slowly_changing_dual_gradients}: Slowly Changing Dual Gradients}\label{appendix:slowly_changing_dual_gradients}
To prove Lemma \ref{lemma:slowly_changing_dual_gradients}, we first prove another helpful lemma.
\begin{lemma}\label{lemma:bound_distance_opt_x_diff_t}
    Under Assumptions \ref{assum:bounded_set}-\ref{assum:slowly_changing_constr}, \ref{assum:non_shallow_constr_and_strong_duality}, the distance between $x_{t,\lambda}^* = \arg\min_{x\in\mathcal{X}}\TLag_t(x,\lambda)$ and  $x_{t-1,\lambda}^* = \arg\min_{x\in\mathcal{X}}\TLag_{t-1}(x,\lambda)$ is bounded as follows, $\forall \lambda \geq 0$:
    \begin{equation}
        \norm{x_{t,\lambda}^* - x_{t-1,\lambda}^*} \leq \sqrt{\frac{2}{\mu}\left(\max_{x\in\mathcal{X}} |f_t(x) - f_{t-1}(x)| + \lambda\delta\right)}
    \end{equation}
\end{lemma}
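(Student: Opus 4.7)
The plan is to mirror the strategy used in the proof of Lemma~\ref{lemma:bound_distance_iter_baseline}, exploiting the fact that $\TLag_t(\cdot,\lambda)$ and $\TLag_{t-1}(\cdot,\lambda)$ are both $\mu$-strongly convex in $x$ (by Lemma~\ref{lemma:smooth_str_conv_lagrangian}) and that $x_{t,\lambda}^*$ and $x_{t-1,\lambda}^*$ are their respective constrained minimizers over $\mathcal{X}$. The main engine is a ``two-way'' strong convexity argument: evaluate each Lagrangian at the other's minimizer, combine the two resulting inequalities, and let the first-order optimality conditions kill the gradient inner products.

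Concretely, I would first write down the two inequalities
\begin{align*}
\TLag_t(x_{t-1,\lambda}^*,\lambda) &\geq \TLag_t(x_{t,\lambda}^*,\lambda) + \langle \nabla_x\TLag_t(x_{t,\lambda}^*,\lambda), x_{t-1,\lambda}^* - x_{t,\lambda}^* \rangle + \tfrac{\mu}{2}\norm{x_{t,\lambda}^* - x_{t-1,\lambda}^*}^2, \\
\TLag_{t-1}(x_{t,\lambda}^*,\lambda) &\geq \TLag_{t-1}(x_{t-1,\lambda}^*,\lambda) + \langle \nabla_x\TLag_{t-1}(x_{t-1,\lambda}^*,\lambda), x_{t,\lambda}^* - x_{t-1,\lambda}^* \rangle + \tfrac{\mu}{2}\norm{x_{t,\lambda}^* - x_{t-1,\lambda}^*}^2.
\end{align*}
By first-order optimality of $x_{t,\lambda}^*$ (resp.\ $x_{t-1,\lambda}^*$) over the convex set $\mathcal{X}$, each inner product is non-negative and can be dropped. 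Summing the two inequalities, expanding the Lagrangians as $\TLag_s(x,\lambda) = f_s(x) + \lambda(g_s(x)+\delta)$ for $s\in\{t-1,t\}$, the $\lambda\delta$ terms cancel, and I am left with
\begin{equation*}
\mu\norm{x_{t,\lambda}^* - x_{t-1,\lambda}^*}^2 \leq \bigl(f_t(x_{t-1,\lambda}^*) - f_{t-1}(x_{t-1,\lambda}^*)\bigr) + \bigl(f_{t-1}(x_{t,\lambda}^*) - f_t(x_{t,\lambda}^*)\bigr) + \lambda\bigl(g_t(x_{t-1,\lambda}^*) - g_{t-1}(x_{t-1,\lambda}^*)\bigr) + \lambda\bigl(g_{t-1}(x_{t,\lambda}^*) - g_t(x_{t,\lambda}^*)\bigr).
\end{equation*}

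The final step is routine bounding: each of the two $f$-differences is at most $\max_{x\in\mathcal{X}}|f_t(x)-f_{t-1}(x)|$, and by Assumption~\ref{assum:slowly_changing_constr} each of the two $g$-differences is at most $\delta$. This yields $\mu\norm{x_{t,\lambda}^* - x_{t-1,\lambda}^*}^2 \leq 2\max_{x\in\mathcal{X}}|f_t(x)-f_{t-1}(x)| + 2\lambda\delta$, and taking square roots after dividing by $\mu$ gives the stated bound.

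I do not anticipate a real obstacle here; the only subtlety worth flagging in the write-up is that the gradient terms must be dropped via the variational inequality $\langle \nabla_x\TLag_t(x_{t,\lambda}^*,\lambda), x - x_{t,\lambda}^*\rangle \geq 0$ for all $x\in\mathcal{X}$ (and analogously for $t-1$), rather than by equating gradients to zero, since the minimizers generally lie on the boundary of $\mathcal{X}$. The rest is symmetric algebra completely analogous to the telescoping step used to prove Lemma~\ref{lemma:bound_distance_iter_baseline}.
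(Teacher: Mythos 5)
Your proposal is correct and follows essentially the same route as the paper's proof: two-way strong convexity of the Lagrangians (Lemma~\ref{lemma:smooth_str_conv_lagrangian}), dropping the gradient terms by optimality of $x_{t,\lambda}^*$ and $x_{t-1,\lambda}^*$, summing, and bounding the $f$- and $g$-differences by $\max_{x\in\mathcal{X}}|f_t(x)-f_{t-1}(x)|$ and $\delta$ (Assumption~\ref{assum:slowly_changing_constr}), respectively. Your remark about using the variational inequality rather than setting gradients to zero is a correct refinement of the same argument.
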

\begin{proof}
    By Lemma \ref{lemma:smooth_str_conv_lagrangian} on the strong convexity of $\TLag$,
    \begin{align}
        &\TLag_t(x_{t-1,\lambda}^*,\lambda) \geq \TLag_t(x_{t,\lambda}^*,\lambda) + \langle \nabla_x\TLag_t(x_{t,\lambda}^*,\lambda), x_{t-1,\lambda}^* - x_{t,\lambda}^* \rangle + \frac{\mu}{2}\norm{x_{t,\lambda}^* - x_{t-1,\lambda}^*}^2 \\
        &\TLag_{t-1}(x_{t,\lambda}^*,\lambda) \geq \TLag_{t-1}(x_{t-1,\lambda}^*,\lambda) + \langle \nabla_x\TLag_{t-1}(x_{t-1,\lambda}^*,\lambda), x_{t,\lambda}^* - x_{t-1,\lambda}^* \rangle + \frac{\mu}{2}\norm{x_{t,\lambda}^* - x_{t-1,\lambda}^*}^2
    \end{align}
    By definition of the optimal points $x_{t-1,\lambda}^*$ and $x_{t,\lambda}^*$,
    \begin{align}
        &\TLag_t(x_{t-1,\lambda}^*,\lambda) \geq \TLag_t(x_{t,\lambda}^*,\lambda) + \frac{\mu}{2}\norm{x_{t,\lambda}^* - x_{t-1,\lambda}^*}^2 \\
        &\TLag_{t-1}(x_{t,\lambda}^*,\lambda) \geq \TLag_{t-1}(x_{t-1,\lambda}^*,\lambda) + \frac{\mu}{2}\norm{x_{t,\lambda}^* - x_{t-1,\lambda}^*}^2
    \end{align}
    Decomposing the Lagrangians,
    \begin{align}
        &f_t(x_{t-1,\lambda}^*) + \lambda (g_t(x_{t-1,\lambda}^*)+\delta) \geq f_t(x_{t,\lambda}^*) + \lambda (g_t(x_{t,\lambda}^*)+\delta) + \frac{\mu}{2}\norm{x_{t,\lambda}^* - x_{t-1,\lambda}^*}^2 \\
        &f_{t-1}(x_{t,\lambda}^*) + \lambda (g_{t-1}(x_{t,\lambda}^*)+\delta) \geq f_{t-1}(x_{t-1,\lambda}^*) + \lambda (g_{t-1}(x_{t-1,\lambda}^*)+\delta) + \frac{\mu}{2}\norm{x_{t,\lambda}^* - x_{t-1,\lambda}^*}^2
    \end{align}
    Summing and rearranging,
    \begin{align}
        \mu\norm{x_{t,\lambda}^* - x_{t-1,\lambda}^*}^2 &\leq f_t(x_{t-1,\lambda}^*) - f_{t-1}(x_{t-1,\lambda}^*) + f_{t-1}(x_{t,\lambda}^*) - f_t(x_{t,\lambda}^*) +  \\
        &\quad + \lambda(g_{t-1}(x_{t,\lambda}^*) - g_t(x_{t,\lambda}^*)) + \lambda(g_t(x_{t-1,\lambda}^*) - g_{t-1}(x_{t-1,\lambda}^*)) \\
        &\leq 2\max_{x\in\mathcal{X}} |f_t(x) - f_{t-1}(x)| + 2\lambda \max_{x\in\mathcal{X}}|g_t(x) - g_{t-1}(x)| \\ 
        &\leq 2\max_{x\in\mathcal{X}} |f_t(x) - f_{t-1}(x)| + 2\lambda\delta
    \end{align}
    where the last inequality follows from Assumption \ref{assum:slowly_changing_constr} (slowly changing constraint) and Lemma \ref{lemma:universal_bound_dual_baseline}.
    Dividing by $\mu$ taking the square root concludes the proof.
\end{proof}
Now we prove Lemma \ref{lemma:slowly_changing_dual_gradients}.
\begin{proof} Proof of Lemma \ref{lemma:slowly_changing_dual_gradients}.
    \begin{align}
        &\max_{\lambda>0}{\norm{\nabla \td_t(\lambda) - \nabla \td_{t-1}(\lambda)}}\\
        =&\max_{\lambda>0}\left(|g_t(x_{t,\lambda}^*) + \delta - (g_{t-1}(x_{t-1,\lambda}^*)+\delta)|\right)\\
        \leq&\max_{\lambda>0}\left(|g_t(x_{t,\lambda}^*) - g_{t-1}(x_{t,\lambda}^*)| + |g_{t-1}(x_{t,\lambda}^*) - g_{t-1}(x_{t-1,\lambda}^*)|\right)\\
        \leq&\max_{\lambda>0}\left(|g_t(x_{t,\lambda}^*) - g_{t-1}(x_{t,\lambda}^*)| + L_g\norm{x_{t,\lambda}^* - x_{t-1,\lambda}^*}\right)\\
        \leq&\max_{\lambda>0}\left(\delta + L_g\norm{x_{t,\lambda}^* - x_{t-1,\lambda}^*}\right)\\
        \leq&\max_{\lambda>0}\left(\delta + L_g\sqrt{\frac{2}{\mu}\left(\max_{x\in\mathcal{X}} |f_t(x) - f_{t-1}(x)| + \lambda\delta\right)}\right)\\
        \leq &\delta + L_g\sqrt{\frac{2}{\mu}\left(\max_{x\in\mathcal{X}} |f_t(x) - f_{t-1}(x)| + \hlambda\delta\right)}
    \end{align}
    where the equality follows by definition of the dual gradients, the first inequality is by the triangle inequality, the second is by Assumption \ref{assum:constr_conv_lipsc} (Lipschitz continuity of the constraints), the third follows by Assumption \ref{assum:slowly_changing_constr} (slowly changing constraints), the fourth follows by Lemma \ref{lemma:bound_distance_opt_x_diff_t}, and the last follows by Lemma \ref{lemma:universal_bound_dual_baseline}.
\end{proof}

\subsection{Proof of Corollary \ref{corollary:bounded_distance_between_lambda_tlambda}: Bounded Distance between Dual Optimal Values:}\label{appendix:bouned_distance_between_lambda_tlambda}
\begin{proof}
    Since the dual function $\td_t(\lambda)$ is concave, and by Lemma~\ref{lemma:local_str_conc_of_dual} it is also locally $\mu_d$-strongly concave, we have:
    \begin{equation}
        \langle \nabla\td_t(\lambda), \tlambda_t^* - \lambda \rangle \geq \td_t(\tlambda_t^*) - \td_t(\lambda) \geq \frac{\mu_d}{2} (\tlambda_t^* - \lambda)^2
    \end{equation}
    Thus:
    \begin{equation}
        \langle \nabla\td_t(\tlambda_{t-1}^*), \tlambda_t^* - \tlambda_{t-1}^* \rangle \geq \frac{\mu_d}{2} (\tlambda_t^* - \tlambda_{t-1}^*)^2
    \end{equation}
    Now, if $\tlambda_{t-1}^* \leq \tlambda_t^*$, this implies that $\nabla \td_t(\tlambda_{t-1}^*) \geq 0$ since the dual gradients are monotonically non-increasing and $\nabla \td_t(\tlambda_t^*) = 0$, and thus:
    \begin{equation}
        |\tlambda_t^* - \tlambda_{t-1}^*| \leq \frac{2}{\mu_d}\nabla \td_t(\tlambda_{t-1}^*) \leq \frac{2}{\mu_d}(\nabla \td_{t-1}(\tlambda_{t-1}^*)+\hdelta_t) = \frac{2\hdelta_t}{\mu_d}
    \end{equation}
    where the second inequality is by Lemma \ref{lemma:slowly_changing_dual_gradients} and the last is since $\nabla \td_{t-1}(\tlambda_{t-1}^*) = 0$, by definition.
    
    \noindent Alternatively, if $\tlambda_{t-1}^* \geq \tlambda_t^*$, this implies that $\nabla \td_t(\tlambda_{t-1}^*) \leq 0$, and thus similarly:
    \begin{equation}
        |\tlambda_t^* - \tlambda_{t-1}^*| \leq -\frac{2}{\mu_d}\nabla \td_t(\tlambda_{t-1}^*) \leq -\frac{2}{\mu_d}(\nabla \td_{t-1}(\tlambda_{t-1}^*)-\hdelta_t) = \frac{2\hdelta_t}{\mu_d}
    \end{equation}
    Thus, in total:
    \begin{equation}
        |\tlambda_t^* - \tlambda_{t-1}^*| \leq \frac{2\hdelta_t}{\mu_d}
    \end{equation}
\end{proof}

\subsection{Proof of Lemma \ref{lemma:helpful_properties_for_case1}}\label{appendix:proof_lemma_helpful_properties}
We state and prove each of the properties in Lemma \ref{lemma:helpful_properties_for_case1}.

\noindent Consider the $i$'th safe phase. For convenience, and similar to the previous analyses, we set a new counter $\tau=1,2,...,\T^S_i$ for this phase. Let $\lambda_1$ be the initial iterate of this phase 
and recall that we denote $z_\tau = -\nabla \td_{\tau}(\lambda_{\tau})$ and that during safe phases we use $\gamma_t = \mu/L_g^2$. Using Alg.~\ref{alg:safe_dual_alg_warm_start} we have:

\begin{itemize}
    \item[(A)] $\sum_{\tau=1}^{\T^S_i} z_{\tau} \leq \frac{\hlambda L_g^2}{\mu}$.

    \item[(B)] $\sum_{\tau=1}^{\T^S_i} z_{\tau} \leq 
    \frac{6L_g^2}{\mu\mu_d} \sum_{\tau=1}^{\T^S_i+1}\hdelta_{\tau}$.
\end{itemize}
\begin{proof} \textbf{(A). }
    $\sum_{\tau=1}^{\T^S_i} z_{\tau} = \sum_{\tau=1}^{\T^S_i} \frac{1}{\gamma_t}(\lambda_{\tau} - \lambda_{\tau+1}) = \frac{L_g^2}{\mu}(\lambda_1 - \lambda_{\T^S_i+1}) \leq \frac{L_g^2}{\mu}\lambda_1 \leq \frac{\hlambda L_g^2}{\mu}$.
\end{proof}

\begin{proof} \textbf{(B). }
    By Corollary \ref{corollary:lipschitz_cont_of_dual} on the Lipschitz continuity of $\nabla \td_{\tau}$:
\begin{align}
    &z_{\tau} = -\nabla\td_{\tau}(\lambda_{\tau}) \leq -\nabla \td_{\tau}(\lambda_1) + \frac{L_g^2}{\mu} (\lambda_{\tau} - \lambda_1) \\
    &\lambda_1 - \lambda_{\tau} \leq \frac{-\nabla\td_{\tau}(\lambda_1) - z_{\tau}}{L_g^2/\mu}
\end{align}
Using (A):
\begin{align}
    \sum_{\tau=1}^{\T^S_i} z_{\tau} = \frac{L_g^2}{\mu}(\lambda_1 - \lambda_{\T^S_i+1}) 
    &\leq -\nabla\td_{\T^S_i+1}(\lambda_1) - z_{\T^S_i+1}\\
    &\leq -\nabla\td_{\T^S_i+1}(\lambda_1) + \hdelta_{\T^S_i+1}
\end{align}
where the last inequality follows since:
\begin{equation}
    -z_{\T^S_i+1} = \nabla\td_{\T^S_i+1}(\lambda_{\T^S_i+1}) \leq \nabla\td_{\T^S_i}(\lambda_{\T^S_i+1}) + \hdelta_{\T^S_i+1} \leq \hdelta_{\T^S_i+1}
\end{equation}
where the first inequality follows by Lemma \ref{lemma:slowly_changing_dual_gradients} and the second by Lemma \ref{lemma:safety_in_terms_of_dual} since Alg. \ref{alg:safe_dual_alg_warm_start} ensures safety.
Now, following Corollary \ref{corollary:lipschitz_cont_of_dual}:
\begin{align}
        |\nabla \td_{\T^S_i+1}(\lambda_1) - \nabla \td_{\T^S_i+1}(\tlambda_{\T^S_i+1}^*)| &\leq \frac{L_g^2}{\mu} |\lambda_1 - \tlambda_{\T^S_i+1}^*| \\
        &\leq \frac{L_g^2}{\mu} \left(|\tlambda_1^* - \tlambda_{\T^S_i+1}^*| + |\lambda_1 - \tlambda_1^*|\right) \\
        &\leq \frac{L_g^2}{\mu} \left(\sum_{\tau=1}^{\T^S_i}|\tlambda_{\tau}^* - \tlambda_{\tau+1}^*| + |\lambda_1 - \tlambda_1^*|\right)\\
        &\leq \frac{L_g^2}{\mu} \left(\sum_{\tau=1}^{\T^S_i} \frac{2\hdelta_{\tau+1}}{\mu_d} + |\lambda_1 - \tlambda_1^*|\right)
    \end{align}
where the second and third inequalities follow from the triangle inequality and the fourth follows from Corollary \ref{corollary:bounded_distance_between_lambda_tlambda}.
To bound $|\lambda_1 - \tlambda_1^*|$, note that if the $i$'th safe phase occurs after a danger phase, then $\lambda_1$ is the last iterate of the previous danger phase, and thus by Eq.~(\ref{eq:case_2_iterate_bound_pt2}), it is bounded as $|\lambda_1 - \tlambda_1^*| \leq 6\hdelta_1/\mu_d$. Otherwise, if the first phase is a safe phase, then thanks to the warm start of Alg.~\ref{alg:safe_dual_alg_warm_start} using the strong oracle, we have that $(x_1,\lambda_1)$ is the primal-dual solution of the optimization problem $\arg\min_{x\in\mathcal{X}} f_1(x) \; \text{s.t.}\; g_1(x)+\delta \leq 0$, and thus $\nabla\td_1(\lambda_1)=0$ which implies that $|\lambda_1 - \tlambda_1^*|=0$. Thus in total, we have:
\begin{equation}
    |\nabla \td_{\T^S_i+1}(\lambda_1) - \nabla \td_{\T^S_i+1}(\tlambda_{\T^S_i+1}^*)| \leq \frac{L_g^2}{\mu} \left(\sum_{\tau=1}^{\T^S_i} \frac{2\hdelta_{\tau+1}}{\mu_d} + 6\frac{\hdelta_1}{\mu_d}\right)
\end{equation}
Now, using the fact that $\nabla \td_{\T^S_i+1}(\tlambda_{\T^S_i+1}^*) = 0$ by definition of $\tlambda_{\T^S_i+1}^*$, we have:
\begin{align}
    \sum_{\tau=1}^{\T^S_i} z_{\tau} &\leq -\nabla\td_{\T^S_i+1}(\lambda_1) + \hdelta_{\T^S_i+1} \\
    &\leq \frac{L_g^2}{\mu} \left(\sum_{\tau=1}^{\T^S_i} \frac{2\hdelta_{\tau+1}}{\mu_d} + 6\frac{\hdelta_1}{\mu_d}\right) + \hdelta_{\T^S_i+1}\\
    &\leq \frac{6L_g^2}{\mu} \sum_{\tau=1}^{\T^S_i+1} \frac{\hdelta_{\tau}}{\mu_d}
\end{align}
where the last inequality follows since $L^2_g/\mu \geq \mu_d$. That is because $L^2_g/\mu$ is an upper bound on the curvature of the dual function by Corollary \ref{corollary:lipschitz_cont_of_dual} while $\mu_d$ is a lower bound by Lemma \ref{lemma:local_str_conc_of_dual}.
\end{proof}

\newpage
\subsection{Full Proof of Theorem \ref{theorem:regret_analyis_our_method}}\label{appendix:full_regret_analysis_our_method}
\begin{proof}
    We now analyze and bound the dual regret in each phase separately, then we use these bounds to bound the primal regret using Lemma \ref{lemma:primal_dual_regret_relation}.
    \paragraph{The Danger Phase.} 
    We analyze the total dual regret incurred during all $m$ danger phases. We do so by first bounding the single-step regret at some step $t$ during any danger phase, defined as:
    \begin{equation}
        r_{\td,t} = \td_t(\tlambda_t^*) - \td_t(\lambda_t).
    \end{equation}
    Note that, by definition of the "danger phase", $\nabla \td_{t-1}(\lambda_{t-1}) > 0$, and thus by Theorem~\ref{theorem:safe_stepsize_bounds}, we use $\gamma_t = 2/\mu_d$ in the dual updates in Alg.~\ref{alg:safe_dual_alg_warm_start}.
    Additionally, note that by Lemma \ref{lemma:slowly_changing_dual_gradients}, for any step $t$, $\nabla \td_t(\lambda_t) \leq \nabla \td_{t-1}(\lambda_t) + \hdelta_t \leq \hdelta_t$,
    where the second inequality follows from Lemma \ref{lemma:safety_in_terms_of_dual} since Alg.~\ref{alg:safe_dual_alg_warm_start} ensures safety by Theorem \ref{theorem:safe_stepsize_bounds}.
    Now, we bound the single-step regret:
    \begin{align}\label{eq:app_single_step_dual_reg_bound}
        r_{\td,t} &= \td_t(\tlambda_t^*) - \td_t(\lambda_t) \leq \langle \nabla\td_t(\lambda_t), \tlambda_t^* - \lambda_t \rangle\leq |\nabla \td_t(\lambda_t)| \cdot |\tlambda_t^* - \lambda_t| \leq \hdelta_t |\tlambda_t^* - \lambda_t|,
    \end{align}
    where the first inequality is due to the concavity of $\td_t(\lambda)$, the second is by the Cauchy-Schwartz inequality, and the third is since $0<\nabla \td_t(\lambda_t)\leq\hdelta_t$.
    Now, before bounding $|\tlambda_t^* - \lambda_t|$, note that $\nabla \td_{t-1}(\lambda_{t-1}) > 0$, by definition of the "danger phase", implies $\lambda_{t-1} \leq \tlambda_{t-1}^*$ since $\nabla \td_{t-1}(\lambda)$ is monotonically non-increasing and $\nabla \td_{t-1}(\tlambda_{t-1}^*) = 0$. Moreover, the safety criterion in Lemma \ref{lemma:safety_in_terms_of_dual} implies that $\nabla \td_{t-2}(\lambda_{t-1}) \leq 0$ which similarly implies $\lambda_{t-1} \geq \tlambda_{t-2}^*$. Thus, in total we have $\tlambda_{t-2}^* \leq \lambda_{t-1} \leq \tlambda_{t-1}^*$.
    Now, we bound $|\tlambda_t^* - \lambda_t|$:
    \begin{align}
        |\tlambda_t^* - \lambda_t| &\overset{(1)}{=} \left|\tlambda_t^* - (\lambda_{t-1} + \frac{2}{\mu_d} \nabla\td_{t-1}(\lambda_{t-1}))\right|\label{eq:app_case_2_iterate_bound_pt1}
        \overset{(2)}{\leq} |\tlambda_t^* - \lambda_{t-1}| + \frac{2}{\mu_d} |\nabla\td_{t-1}(\lambda_{t-1})| 
        \\
        &
        \overset{(3)}{\leq} |\tlambda_t^* - \lambda_{t-1}| + \frac{2}{\mu_d} \hdelta_{t-1}
        \overset{(4)}{\leq} |\tlambda_t^* - \tlambda_{t-1}^*| + |\tlambda_{t-1}^* - \lambda_{t-1}| + \frac{2}{\mu_d} \hdelta_{t-1} 
        \\
        &\overset{(5)}{\leq} |\tlambda_t^* - \tlambda_{t-1}^*| + |\tlambda_{t-1}^* - \tlambda_{t-2}^*| + \frac{2}{\mu_d} \hdelta_{t-1}
        \overset{(6)}{\leq} \frac{2}{\mu_d}\hdelta_t + \frac{4}{\mu_d}\hdelta_{t-1} \label{eq:app_case_2_iterate_bound_pt2}
    \end{align}
    where (1) is by the update rule, (2) is by the triangle inequality, (3) is since $0<\nabla \td_{t-1}(\lambda_{t-1})\leq\hdelta_{t-1}$ for any $t$ during any danger phase,
    (4) is by the triangle inequality, 
    (5) is since $\tlambda_{t-2}^* \leq \lambda_{t-1} \leq \tlambda_{t-1}^*$,
    and (6) is by Corollary \ref{corollary:bounded_distance_between_lambda_tlambda}.
    Now, to analyze the total dual regret, we first set a new counter for each danger phase $j$, denoted by $\tau = 1,2,...,\T^D_j$. Note that the counter resets after every phase.
    Thus, the total dual regret incurred during all $m$ danger phases, which we denote by $\Reg_{\td}^D$, is bounded by:
    \begin{align}
        \Reg_{\td}^{D} &= \sum_{j=1}^{m}\sum_{\tau=1}^{\T^{D}_j} r_{\td,\tau}
        \overset{(1)}{\leq} \sum_{j=1}^{m}\sum_{\tau=1}^{\T^{D}_j} \hdelta_\tau|\tlambda_{\tau}^* - \lambda_{\tau}|
        \overset{(2)}{\leq} \frac{2}{\mu_d}\sum_{j=1}^{m}\sum_{\tau=1}^{\T^{D}_j} \hdelta_\tau^2 + 2\hdelta_\tau\hdelta_{\tau-1}\\
        &\overset{(3)}{\leq} \frac{2}{\mu_d}\sum_{j=1}^{m}\sum_{\tau=1}^{\T_j^{D}}\hdelta_\tau^2 + \frac{2}{\mu_d}\left(\sum_{j=1}^{m}\sum_{\tau=1}^{\T_j^{D}}\hdelta_\tau^2 + \sum_{j=1}^{m}\sum_{\tau=1}^{\T_j^{D}}\hdelta_{\tau-1}^2\right)
    \end{align}
    \begin{align}
        &\overset{(4)}{\leq} \frac{4}{\mu_d}\sum_{j=1}^{m}\sum_{\tau=1}^{\T_j^D}\left(\delta + L_g\sqrt{\frac{2}{\mu}\left(\max_{x\in\mathcal{X}}|f_\tau(x) - f_{\tau-1}(x)| + \hlambda\delta\right)}\right)^2 + \\
        &\quad\quad + \frac{2}{\mu_d}\sum_{j=1}^{m}\sum_{\tau=1}^{\T_j^D}\left(\delta + L_g\sqrt{\frac{2}{\mu}\left(\max_{x\in\mathcal{X}}|f_{\tau-1}(x) - f_{\tau-2}(x)| + \hlambda\delta\right)}\right)^2 \\
        &= \frac{4}{\mu_d}\sum_{j=1}^{m}\sum_{\tau=1}^{\T_j^D} \delta^2 + \frac{4}{\mu_d}\sum_{j=1}^{m}\sum_{\tau=1}^{\T_j^D} 2\delta L_g\sqrt{\frac{2}{\mu}}\sqrt{\max_{x\in\mathcal{X}}|f_\tau(x) - f_{\tau-1}(x)| + \hlambda\delta} + \\
        &\quad\quad + \frac{4}{\mu_d}\sum_{j=1}^{m}\sum_{\tau=1}^{\T_j^D} L^2_g\frac{2}{\mu}\left(\max_{x\in\mathcal{X}}|f_\tau(x) - f_{\tau-1}(x)| + \hlambda\delta\right) + \\
        &\quad + \frac{2}{\mu_d}\sum_{j=1}^{m}\sum_{\tau=1}^{\T_j^D} \delta^2 + \frac{2}{\mu_d}\sum_{j=1}^{m}\sum_{\tau=1}^{\T_j^D} 2\delta L_g\sqrt{\frac{2}{\mu}}\sqrt{\max_{x\in\mathcal{X}}|f_{\tau-1}(x) - f_{\tau-2}(x)| + \hlambda\delta} + \\
        &\quad\quad + \frac{2}{\mu_d}\sum_{j=1}^{m}\sum_{\tau=1}^{\T_j^D} L^2_g\frac{2}{\mu}\left(\max_{x\in\mathcal{X}}|f_{\tau-1}(x) - f_{\tau-2}(x)| + \hlambda\delta\right) \\
        &\overset{(5)}{\leq} \frac{4}{\mu_d}\left(\delta V_{g,T} + \sum_{j=1}^{m}\sum_{\tau=1}^{\T_j^D}2\delta L_g\sqrt{\frac{2}{\mu}}\left(\sqrt{\max_{x\in\mathcal{X}}|f_\tau(x) - f_{\tau-1}(x)|} + \sqrt{\hlambda\delta}\right) + L^2_g\frac{2}{\mu}\left(V_{f,T} + \hlambda V_{g,T}\right)\right) + \\
        &\quad + \frac{2}{\mu_d}\left(\delta V_{g,T} + \sum_{j=1}^{m}\sum_{\tau=1}^{\T_j^D}2\delta L_g\sqrt{\frac{2}{\mu}}\left(\sqrt{\max_{x\in\mathcal{X}}|f_{\tau-1}(x) - f_{\tau-2}(x)|} + \sqrt{\hlambda\delta}\right) + L^2_g\frac{2}{\mu}\left(V_{f,T} + \hlambda V_{g,T} \right)\right)\\
        &\overset{(6)}{\leq} \frac{4}{\mu_d}\left(\delta V_{g,T} + 2\delta L_g\sqrt{\frac{2}{\mu}}\left(\sqrt{\sum_{j=1}^{m} T_j^D \cdot \sum_{j=1}^{m}\sum_{\tau=1}^{\T_j^D}\max_{x\in\mathcal{X}}|f_\tau(x) - f_{\tau-1}(x)|} + \sqrt{\hlambda\delta}T\right) + L^2_g\frac{2}{\mu}\left(V_{f,T} + \hlambda V_{g,T}\right)\right) + \\
        &\quad + \frac{2}{\mu_d}\left(\delta V_{g,T} + 2\delta L_g\sqrt{\frac{2}{\mu}}\left(\sqrt{\sum_{j=1}^{m} T_j^D \cdot \sum_{j=1}^{m}\sum_{\tau=1}^{\T_j^D}\max_{x\in\mathcal{X}}|f_{\tau-1}(x) - f_{\tau-2}(x)|} + \sqrt{\hlambda\delta}T\right) + L^2_g\frac{2}{\mu}\left(V_{f,T} + \hlambda V_{g,T} \right)\right)\\
        &\overset{(7)}{=} \frac{6}{\mu_d}\left(\delta V_{g,T} + 2L_g\sqrt{\frac{2}{\mu}}\delta\sqrt{TV_{f,T}} + 2L_g\sqrt{\frac{2\hlambda}{\mu}}\delta\sqrt{TV_{g,T}} + L^2_g\frac{2}{\mu}V_{f,T} + L^2_g\frac{2}{\mu}\hlambda V_{g,T} \right)\\
        &=\Ord\left(\delta V_{g,T}\right) + \Ord\left(\delta\sqrt{V_{f,T}T}\right) + \Ord\left(\delta\sqrt{V_{g,T}T}\right) + \Ord\left(V_{f,T}\right) + \Ord\left(V_{g,T}\right) \\
        &\overset{(8)}{=} \Ord\left(V_{g,T} + V_{f,T}\right)
    \end{align}
    where (1) is by Eq.~(\ref{eq:app_single_step_dual_reg_bound}), (2) is by Eq.~(\ref{eq:app_case_2_iterate_bound_pt1}-\ref{eq:app_case_2_iterate_bound_pt2}), (3) is since $\forall a,b\in\R: 2ab \leq a^2 + b^2$, (4) is by the definition of $\hdelta_t$ in Lemma \ref{lemma:slowly_changing_dual_gradients}, (5) is by $V_{g,T}=\delta T$, Assumption \ref{assum:bounded_TV_obj} (bounded total variation), the fact that $\forall X,Y \geq 0: \sqrt{X+Y} \leq \sqrt{X} + \sqrt{Y}$, and since $\sum_{j=1}^{m}\sum_{\tau=1}^{\T_j^D}1 \leq T$, (6) is by Jensen's inequality, (7) is by $V_{g,T}=\delta T$, Assumption \ref{assum:bounded_TV_obj}, and since $\sum_{j=1}^{m} \T_j^D \leq T$, and finally (8) follows since $V_{f,T}=o(T)$, $V_{g,T}=o(T)$, and $\delta = o(T^{-\alpha})$ with $\alpha>0$, which imply that $\delta\sqrt{V_{g,T}T} = \delta^{3/2}T < \delta T = V_{g,T}$ and similarly $\delta\sqrt{V_{f,T}T} < \delta \sqrt{TT} = \delta T = V_{g,T}$.

    \paragraph{The Safe Phase.} We analyze the total dual regret incurred during all $n$ safe phases, where each safe phase $i$ lasts for $\T^S_i$ steps.
    For convenience, we set a new counter for the steps during each safe phase, denoted by $\tau = 1,2,...,\T^S_i$. The counter resets after every phase. Note that throughout any safe phase $i$, $\forall \tau\in[\T^S_i]$, $\nabla \td_{\tau-1}(\lambda_{\tau-1}) \leq 0$, and thus Alg.~\ref{alg:safe_dual_alg_warm_start} use $\gamma_t = \mu/L^2_g$ in the dual update, which ensures safety by Theorem \ref{theorem:safe_stepsize_bounds}. 
    Throughout this analysis we denote $z_\tau = -\nabla \td_{\tau}(\lambda_{\tau})$.
    Now, we bound the total dual regret incurred during all $n$ safe phases, which we denote by $\Reg_{\td}^S$:
    \begin{align*}
        \Reg_{\td}^S &= \sum_{i=1}^{n}\sum_{{\tau}=1}^{\T^S_i} \td_{\tau}(\tlambda_{\tau}^*) - \td_{\tau}(\lambda_{\tau}) 
        \overset{(1)}{\leq} \sum_{i=1}^{n}\sum_{{\tau}=1}^{\T^S_i} \langle \nabla \td_{\tau}(\lambda_{\tau}), \tlambda_{\tau}^* - \lambda_{\tau} \rangle -\frac{\mu_d}{2} |\lambda_{\tau} - \tlambda_{\tau}^*|^2 \\
        &= \sum_{i=1}^{n}\sum_{{\tau}=1}^{\T^S_i}\left( -\frac{1}{2}\left| \sqrt{\mu_d}(\tlambda_{\tau}^* - \lambda_{\tau}) - \frac{1}{\sqrt{\mu_d}}\nabla \td_{\tau}(\lambda_{\tau}) \right|^2 + \frac{1}{2\mu_d}|\nabla \td_{\tau}(\lambda_{\tau})|^2\right)\\
        &\leq \frac{1}{2\mu_d}\sum_{i=1}^{n}\sum_{{\tau}=1}^{\T^S_i}|\nabla \td_{\tau}(\lambda_{\tau})|^2 = \frac{1}{2\mu_d}\sum_{i=1}^{n}\sum_{{\tau}=1}^{\T^S_i} z_{\tau}^2 
        \overset{(2)}{\leq} \frac{1}{2\mu_d}
        \sum_{i=1}^{n}\left(\sum_{{\tau}=1}^{\T^S_i} z_{\tau} \right)^2
        \overset{(3)}{\leq} \frac{1}{2\mu_d}\frac{\hlambda L^2_g}{\mu} \sum_{i=1}^{n}\sum_{{\tau}=1}^{\T^S_i} z_{\tau}\\
        &\overset{(4)}{\leq} 3\hlambda\left(\frac{L^2_g}{\mu\mu_d}\right)^2 \sum_{i=1}^{n}\sum_{\tau=1}^{\T^S_i+1}\hdelta_{\tau}
        \overset{(5)}{\leq} 3\hlambda\left(\frac{L^2_g}{\mu\mu_d}\right)^2 \sum_{i=1}^{n}\sum_{\tau=1}^{\T^S_i+1} \left(\delta + L_g\sqrt{\frac{2}{\mu} \left(\max_{x\in\mathcal{X}}|f_{\tau} - f_{\tau-1}| + \hlambda\delta\right)}\right)\\
        &\overset{(6)}{\leq} 3\hlambda\left(\frac{L^2_g}{\mu\mu_d}\right)^2 \sum_{t=1}^T \delta + 3\hlambda\left(\frac{L^2_g}{\mu\mu_d}\right)^2\sqrt{\frac{2}{\mu}}L_g \sum_{t=1}^T\left(\sqrt{ \max_{x\in\mathcal{X}}|f_{t} - f_{t-1}|} + \sqrt{\hlambda\delta}\right)\\
        &\overset{(7)}{\leq} 3\hlambda\left(\frac{L^2_g}{\mu\mu_d}\right)^2 \delta T + 3\hlambda\left(\frac{L^2_g}{\mu\mu_d}\right)^2\sqrt{\frac{2}{\mu}}L_g \left(\sqrt{T \sum_{t=1}^T\max_{x\in\mathcal{X}}|f_{\tau} - f_{\tau-1}|} + \sqrt{\hlambda\delta}T\right)\\
        &\overset{(8)}{\leq} 3\hlambda\left(\frac{L^2_g}{\mu\mu_d}\right)^2 V_{g,T} + 3\hlambda\left(\frac{L^2_g}{\mu\mu_d}\right)^2\sqrt{\frac{2}{\mu}}L_g \sqrt{T V_{f,T}} + 3\hlambda\left(\frac{L^2_g}{\mu\mu_d}\right)^2\sqrt{\frac{2\hlambda}{\mu}}L_g \sqrt{T V_{g,T}}\\
        &= \Ord\left(V_{g,T} + \sqrt{V_{f,T}T} + \sqrt{V_{g,T}T}\right) \\
        &\overset{(9)}{=} \Ord\left(\sqrt{V_{f,T}T} + \sqrt{V_{g,T}T}\right)
    \end{align*}
    where (1) is by Lemma \ref{lemma:local_str_conc_of_dual}, (2) is since $z_{\tau} \geq 0, \forall \tau\in[\T^S_i]$, (3) and (4) are by properties (A) and (B) in Lemma \ref{lemma:helpful_properties_for_case1}, respectively, (5) is by the definition of $\hdelta_t$ in Lemma \ref{lemma:slowly_changing_dual_gradients}, (6) is since $\forall X,Y \geq 0: \sqrt{X+Y} \leq \sqrt{X} + \sqrt{Y}$ and since $\sum_{i=1}^{n}\sum_{\tau=1}^{\T_i^S+1} \hdelta_{\tau} \leq \sum_{t=1}^T \hdelta_t$, (7) is by Jensen's inequality, (8) is by $V_{g,T}=\delta T$ and Assumption \ref{assum:bounded_TV_obj}, and (9) follows since $V_{g,T}=o(T)$.
    \vspace{-0.1cm}
    \paragraph{Putting it all together.} Combining the dual regret of all danger and safe phases, the total dual regret is bounded as follows:
    \begin{align}
        \Reg_{\td}(T)  &= \Reg^S_{\td} + \Reg^D_{\td}\\
        &= \Ord\left(V_{g,T} + V_{g,T} + \sqrt{V_{f,T}T} + \sqrt{V_{g,T}T}\right)\\
        &= \Ord\left(\sqrt{(V_{g,T} + V_{f,T})T}\right),
    \end{align}
    where the last equality follows since $V_{f,T}=o(T)$ and $V_{g,T}=o(T)$.
    Now, recall that the primal regret $R_f(T)$ can be bounded as follows by Lemma \ref{lemma:primal_dual_regret_relation}:
    \begin{equation}
        \Reg_f(T) = \Ord\left(\max\left\{ \hat{\Reg}_{\td}(T), \sqrt{(V_{g,T} + V_{f,T})T}\right\}\right).
    \end{equation}
    Thus, by plugging in the bound on the dual regret, we have:
    \begin{align}
       R_f(T) = \Ord\left(\sqrt{(V_{g,T} + V_{f,T})T}\right).
    \end{align}
\end{proof}

\newpage
\section{Extension to the Convex Case}\label{appendix:convex_case}
We extend our results to the convex case, namely where the loss functions are convex but \emph{not} necessarily strongly convex.
We show that in the convex case, Alg.~\ref{alg:naive} and Alg.~\ref{alg:safe_dual_alg_warm_start}, each with a slight modification, guarantee $\Ord\left(\left(V_{f,T}+V_{g,T}\right)^\frac{1}{3}T^\frac{2}{3}\right)$ and $\Ord\left(\left(V_{f,T}+V_{g,T}\right)^\frac{1}{7}T^\frac{6}{7}\right)$ regret, respectively.
Let $\{\hf_t\}_{t=1}^T$, where $\hf_t:\R^D\rightarrow \R, \forall t\in[T]$, be convex but \emph{not necessarily} strongly convex functions. We define the following surrogate functions:
\begin{equation}
    f_t(x) = \hf_t(x) + \frac{\mu}{2}\|x\|^2, \forall t\in[T].
\end{equation}
where $\mu>0$. Note that, by definition, $f_t$ is $\mu$-strongly convex, $\forall t\in[T]$.
For some decision sequence $\{x_t\}_{t=1}^T$, we define the regret in terms of the functions $\{\hf_t\}_{t=1}^T$ as follows:
\begin{equation}
    \Reg_{\hf}(T) = \sum_{t=1}^T \hf_t(x_t) - \hf_t(\hx_t^*),
\end{equation}
where the comparator sequence $\hx_t^*$ is defined as:
\begin{equation}
    \hx_t^* = \arg\min_{x\in\mathcal{X}} \hf_t(x) \quad\text{s.t.}\quad g_t(x)\leq 0.
\end{equation}
Namely, $\hx_t^*$ is the minimizer of the convex function $\hf_t(x)$ subject to the corresponding constraint $g_t(x)\leq0$. Note the contrast between $\hx_t^*$ and $x_t^* = \arg\min_{x\in\mathcal{X}} f_t(x) \;\text{s.t.}\; g_t(x)\leq 0$ which corresponds to the \emph{surrogate} functions. Now, we wish to bound the regret $\Reg_{\hf}(T)$ guaranteed by Alg.~\ref{alg:naive} and Alg.~\ref{alg:safe_dual_alg_warm_start}.

\begin{corollary}\label{corollary:guarantees_alg_naive_convex}
    Consider a safe online optimization problem of the form (\ref{eq:orignal_online_problem}) with horizon $T$. Running Alg.~\ref{alg:naive} or Alg.~\ref{alg:safe_dual_alg_warm_start} with the surrogate functions $f_t$ instead of $\hf_t$ guarantees zero constraint violation and $\Reg_{\hf}(T) = \Ord\left(\left(V_{f,T} + V_{g,T}\right)^\frac{1}{3} T^\frac{2}{3}\right)$ or $\Reg_{\hf}(T) = \Ord\left(\left(V_{f,T} + V_{g,T}\right)^\frac{1}{7} T^\frac{6}{7}\right)$, respectively.
\end{corollary}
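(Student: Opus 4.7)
The plan is to reduce the convex case to the strongly convex setting already handled by Theorems \ref{theorem:guarantees_naive_alg} and \ref{theorem:regret_analyis_our_method}. Concretely, I would run the corresponding algorithm on the $\mu$-strongly convex surrogates $f_t(x) = \hf_t(x) + \tfrac{\mu}{2}\|x\|^2$, apply the existing theorems to bound $\Reg_f(T)$, then pass from $\Reg_f$ to the target $\Reg_{\hf}$ up to an $\Ord(\mu T)$ slack, and finally tune $\mu$ to balance the two contributions.

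Safety is inherited for free: both safety analyses (Lemma \ref{lemma:safety_in_terms_of_dual} and Theorem \ref{theorem:safe_stepsize_bounds}) phrase the condition $g_t(x_t)\leq 0$ purely in terms of the dual gradient of the constraint $g_t$, which is unaffected by adding $\tfrac{\mu}{2}\|x\|^2$ to the objective. For the regret transfer, the key identity
\begin{equation*}
\hf_t(x_t) - \hf_t(\hx_t^*) = f_t(x_t) - f_t(\hx_t^*) + \tfrac{\mu}{2}\bigl(\|\hx_t^*\|^2 - \|x_t\|^2\bigr),
\end{equation*}
combined with the feasibility of $\hx_t^*$ for the surrogate constrained problem (it satisfies $g_t\leq 0$, hence $f_t(x_t^*) \leq f_t(\hx_t^*)$ by optimality of $x_t^*$), yields $\Reg_{\hf}(T) \leq \Reg_f(T) + \Ord(\mu T)$ after summing and invoking Assumption \ref{assum:bounded_set}. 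Moreover $V_{f,T} = V_{\hf,T}$ since $f_t - f_{t-1} = \hf_t - \hf_{t-1}$, so Theorems \ref{theorem:guarantees_naive_alg} and \ref{theorem:regret_analyis_our_method} apply with the same total variations appearing on both sides.

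It remains to track the $\mu$-dependence of $\Reg_f(T)$. For Alg.~\ref{alg:naive}, Theorem \ref{theorem:guarantees_naive_alg} and its proof (via Lemma \ref{lemma:bound_distance_iter_baseline}) show $\Reg_f(T) = \Ord(\mu^{-1/2}\sqrt{(V_{f,T}+V_{g,T})T})$; balancing against $\Ord(\mu T)$ at $\mu \asymp ((V_{f,T}+V_{g,T})/T)^{1/3}$ gives the announced $\Ord((V_{f,T}+V_{g,T})^{1/3}T^{2/3})$. For Alg.~\ref{alg:safe_dual_alg_warm_start}, the dominant term in the safe-phase dual regret of Theorem \ref{theorem:regret_analyis_our_method} is $\beta \cdot \sqrt{2/\mu}\,L_g\sqrt{TV_{f,T}}$ with $\beta = 3\hlambda(L_g^2/(\mu\mu_d))^2$; since $\mu_d$ and $\hlambda$ stabilize at $\Theta(1)$ values as $\mu\to 0$, this term scales as $\Ord(\mu^{-5/2}\sqrt{(V_{f,T}+V_{g,T})T})$, and choosing $\mu \asymp ((V_{f,T}+V_{g,T})/T)^{1/7}$ balances it against the $\Ord(\mu T)$ slack to yield $\Ord((V_{f,T}+V_{g,T})^{1/7}T^{6/7})$. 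The main obstacle is precisely this bookkeeping: under the regularization the constants $L_f = L_{\hf} + \mu R$, $M_f = M_{\hf} + \mu$, $\hlambda = L_f R/G$, and $\mu_d = G^2/(4R^2(M_f+\hlambda M_g))$ all shift with $\mu$, and one must verify that in the small-$\mu$ regime relevant to the optimization they remain $\Theta(1)$, so that the effective $\mu^{-5/2}$ scaling used to derive the $6/7$ exponent is in fact the tight one.
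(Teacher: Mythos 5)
Your proposal follows essentially the same route as the paper's proof: run the algorithms on the surrogates $f_t = \hf_t + \tfrac{\mu}{2}\|x\|^2$ (safety is unaffected since the constraints are unchanged), transfer $\Reg_{\hf}(T) \leq \Reg_f(T) + \tfrac{\mu}{2}R^2T$ via optimality of $x_t^*$ over the feasible $\hx_t^*$, and optimize $\mu$ against the $\mu^{-1/2}$ (resp.\ $\mu^{-5/2}$) scaling of the strongly convex bounds, with the same choices $\mu \asymp ((V_{f,T}+V_{g,T})/T)^{1/3}$ and $\mu \asymp ((V_{f,T}+V_{g,T})/T)^{1/7}$. Your explicit tracking of how $L_f$, $M_f$, $\hlambda$, and $\mu_d$ shift with $\mu$ and remain $\Theta(1)$ in the small-$\mu$ regime is a point the paper treats only implicitly, so your sketch is, if anything, slightly more careful on that bookkeeping.
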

\begin{proof}
    Both Alg.~\ref{alg:naive} and Alg.~\ref{alg:safe_dual_alg_warm_start} still guarantee zero constraint violation. The proof is identical to that of Theorem \ref{theorem:guarantees_naive_alg} and Theorem \ref{theorem:safe_stepsize_bounds} since we run Alg.~\ref{alg:naive} and Alg.~\ref{alg:safe_dual_alg_warm_start} over the surrogate functions $\{f_t\}_{t=1}^T$, while the constraints remain unchanged.
    
    Now, We show the regret guarantees for Alg.~\ref{alg:naive}.
    Note that by Theorem \ref{theorem:guarantees_naive_alg}, the regret in terms of the $\mu$-strongly convex surrogate functions $\{f_t\}_{t=1}^T$, which we denote $\Reg_{f}(T)$, is bounded as follows:
    \begin{equation}
        \Reg_{f}(T) \leq L_f R + \sqrt{\frac{2\hat{\lambda}}{\mu}}L_f \sqrt{V_{g,T}T} + \sqrt{\frac{2}{\mu}}L_f \sqrt{V_{f,T} T}.
    \end{equation}

    Also, note that $\Reg_{f}(T)$ (the regret in terms of the strongly convex surrogate functions $\{f_t\}_{t=1}^T$) can be related to $\Reg_{\hf}(T)$ (the regret in terms of the convex functions $\{\hf_t\}_{t=1}^T$) as follows:
    \begin{align}
        \Reg_f(T) &= \sum_{t=1}^T f_t(x_t) - f_t(x_t^*) \\
        &\overset{(1)}{\geq} \sum_{t=1}^T f_t(x_t) - f_t(\hx_t^*) \\
        &= \sum_{t=1}^T \left(\hf_t(x_t) - \hf_t(\hx_t^*)\right) + \sum_{t=1}^T \frac{\mu}{2}\left(\|x_t\|^2 - \|\hx_t^*\|^2\right) \\
        &= \Reg_{\hf}(T)+ \sum_{t=1}^T \frac{\mu}{2}\left(\|x_t\|^2 - \|\hx_t^*\|^2\right),
    \end{align}
    where (1) follows since $x_t^* = \arg\min_{x\in\mathcal{X}} f_t(x) \;\text{s.t.}\; g_t(x)\leq0$ and thus $f_t(x_t^*) \leq f_t(x), \forall x: g_t(x)\leq0$.
    Thus, the regret in terms of the convex functions $\{\hf_t\}_{t=1}^T$, which we denote $\Reg_{\hf}(T)$, is bounded as follows:
    \begin{align}
        \Reg_{\hf}(T) &\leq \Reg_f(T) + \frac{\mu}{2}\sum_{t=1}^T \left(\|\hx_t^*\|^2 - \|x_t\|^2\right)\\
        &\leq \Reg_f(T) + \frac{\mu}{2}\sum_{t=1}^T \|\hx_t^*\|^2 \\
        &\overset{(1)}{\leq} \Reg_f(T) + \frac{\mu}{2}R^2T \\
        &\overset{(2)}{\leq} L_f R + \sqrt{\frac{2\hat{\lambda}}{\mu}}L_f \sqrt{V_{g,T}T} + \sqrt{\frac{2}{\mu}}L_f \sqrt{V_{f,T} T} + \frac{\mu}{2}R^2T \\
        &= L_f R + \sqrt{2}L_f\frac{\sqrt{\hlambda}\sqrt{V_{g,T}T}+\sqrt{V_{f,T}T}}{\sqrt{\mu}} + \frac{\mu}{2}R^2T
    \end{align}
    where (1) follows by Assumption \ref{assum:bounded_set} (bounded set) and (2) follows by Theorem \ref{theorem:guarantees_naive_alg}. Note that this bound holds for any $\mu>0$. Thus, optimizing over $\mu$ yields $\mu^* \propto (V_{f,T}^\frac{1}{3} + V_{g,T}^\frac{1}{3})T^{-\frac{1}{3}}$, and plugging $\mu=(V_{f,T}^\frac{1}{3} + V_{g,T}^\frac{1}{3})T^{-\frac{1}{3}}$ back in the bound yields:
    \begin{align}
        R_{\hf}(T) &\leq L_f R + \sqrt{2}L_f\frac{\sqrt{\hlambda}\sqrt{V_{g,T}T}+\sqrt{V_{f,T}T}}{\sqrt{(V_{f,T}^\frac{1}{3} + V_{g,T}^\frac{1}{3})T^{-\frac{1}{3}}}} + \frac{(V_{f,T}^\frac{1}{3} + V_{g,T}^\frac{1}{3})T^{-\frac{1}{3}}}{2}R^2T \\
        &\leq L_f R + \sqrt{2}L_f\left(\frac{\sqrt{\hlambda}\sqrt{V_{g,T}T}}{\sqrt{V_{g,T}^\frac{1}{3}T^{-\frac{1}{3}}}} + \frac{\sqrt{V_{f,T}T}}{\sqrt{V_{f,T}^\frac{1}{3}T^{-\frac{1}{3}}}}\right) + \frac{(V_{f,T}^\frac{1}{3} + V_{g,T}^\frac{1}{3})T^{-\frac{1}{3}}}{2}R^2T \\
        &\leq L_f R + \sqrt{2}L_f\left(\sqrt{\hlambda}V_{g,T}^\frac{1}{3}T^\frac{2}{3} + V_{f,T}^\frac{1}{3}T^\frac{2}{3}\right) + \frac{R^2}{2}(V_{f,T}^\frac{1}{3} + V_{g,T}^\frac{1}{3})T^\frac{2}{3} \\
        &=\Ord\left(\left(V_{f,T}+V_{g,T}\right)^\frac{1}{3} T^\frac{2}{3}\right)
    \end{align}
    As stated. Now, proving the regret guarantees for Alg.~\ref{alg:safe_dual_alg_warm_start} follows the same lines, but now we use the bound given by Theorem \ref{theorem:regret_analyis_our_method} for $\Reg_f(T)$. Namely:
    \begin{align}
        \Reg_{\hf}(T) &\leq \Reg_f(T) + \frac{\mu}{2}R^2T \\
        &\leq \frac{6}{\mu_d}\left(\delta V_{g,T} + 2L_g\sqrt{\frac{2}{\mu}}\delta\sqrt{TV_{f,T}} + 2L_g\sqrt{\frac{2\hlambda}{\mu}}\delta\sqrt{TV_{g,T}} + L^2_g\frac{2}{\mu}V_{f,T} + L^2_g\frac{2}{\mu}\hlambda V_{g,T} \right) + \\
        &\quad\quad + 3\hlambda\left(\frac{L^2_g}{\mu\mu_d}\right)^2 V_{g,T} + 3\hlambda\left(\frac{L^2_g}{\mu\mu_d}\right)^2\sqrt{\frac{2}{\mu}}L_g \sqrt{T V_{f,T}} + 3\hlambda\left(\frac{L^2_g}{\mu\mu_d}\right)^2\sqrt{\frac{2\hlambda}{\mu}}L_g \sqrt{T V_{g,T}} + \frac{\mu}{2}R^2T
    \end{align}
    By similarly optimizing over $\mu$ and substituting $\mu=\left(V_{f,T}+V_{g,T}\right)^\frac{1}{7}T^{-\frac{1}{7}}$ we have:
    \begin{equation}
        \Reg_{\hf}(T) \leq \Ord\left(\left(V_{f,T} + V_{g,T}\right)^\frac{1}{7} T^\frac{6}{7}\right).
    \end{equation}
\end{proof}

\end{document}